\definecolor{David}{rgb}{0,.8,.5}
\definecolor{Julien}{rgb}{1,.4,.3}
\newtheoremstyle{break}
{\topsep}{\topsep}%
{}{}%
{\bfseries}{}%
{\newline}{}%
\theoremstyle{break}
\theoremstyle{plain}
\newtheorem{theorem}{Theorem}[section]
\newtheorem{lemma}[theorem]{Lemma}
\newtheorem{corollary}[theorem]{Corollary}
\newtheorem{definition}[theorem]{Definition}
\begin{document}

\title{Gauges, Loops, and Polynomials for Partition Functions of Graphical Models}

\author{Michael Chertkov $^{(1,2)}$, Vladimir Chernyak$^{(3)}$, and Yury Maximov$^{(4,2)}$\\
$^{(1)}$ Program in Applied Mathematics, University of Arizona, Tucson, AZ, USA\\
$^{(2)}$ Skolkovo Institute of Technology, Moscow, Russia\\
$^{(3)}$ Department of Chemistry, Wayne State University, MI, USA\\
$^{(4)}$ T-Division, Los Alamos National Laboratory, Los Alamos, NM, USA}
\email{chertkov@arizona.edu, chernyak@chem.wayne.edu, yury@lanl.gov}

\date{\today}

\begin{abstract}
Graphical models represent multivariate and generally not normalized probability distributions. Computing the normalization factor, called the partition function, is the main inference challenge relevant to multiple statistical and optimization applications.  The problem is $\#$P-hard that is  of an exponential complexity with respect to the number of variables. In this manuscript, aimed at approximating the partition function, we consider Multi-Graph Models  where binary variables and multivariable factors are associated with edges and nodes, respectively, of an undirected multi-graph. We suggest a new methodology for analysis and computations that combines the Gauge Function  technique from \cite{06CCa,06CCb} with the technique developed in \cite{11Gur} and \cite{17AG,17SVa} based on the recent progress in the field of real stable polynomials. We show that the Gauge Function, representing a single-out term in a finite sum expression for the Partition Function which achieves extremum at the so-called Belief-Propagation  gauge, has a natural polynomial representation in terms of gauges/variables associated with edges of the multi-graph. Moreover, Gauge Function can be used to recover the Partition Function through a sequence of transformations allowing appealing algebraic and graphical interpretations. Algebraically, one step in the sequence consists in the application of a differential operator over gauges associated with an edge. Graphically, the sequence is interpreted as a repetitive elimination/contraction of edges resulting in Multi-Graph Models on decreasing in size (number of edges) graphs with the same Partition Function as in the original Multi-Graph Model. Even though the complexity of computing factors in the sequence of the derived Multi-Graph Models and respective Gauge Functions grow exponentially with the number of eliminated edges, polynomials associated with the new factors remain \emph{Bi-Stable} if the original factors have this property. Moreover, we show that BP estimations in the sequence do not decrease, each low-bounding the Partition Function. 
\end{abstract}

\maketitle

\section{Introduction}

Graphical models (GM) are ubiquitous in natural and engineering sciences where one needs to represent a multivariate distribution function with a structure that is expressed in terms of graphical, statistical or deterministic, relations between the variables \cite{63Gal,88Pea,02Mac,08RU,08WJ,09KF,09MM,11MM}. Focusing on the so-called Normal Factor Graph  representation \cite{01For},  where binary variables and factors that express relations between the variables are associated with edges and nodes of the graph, respectively, we are interested in resolving the problem of statistical inference, which entails computing the weighted sum over allowed states. Exact evaluation of the sum, called the Partition Function, is known to be $\#$-P-hard \cite{79Val,86JVV,03Vaz}, that is, of complexity which  likely requires an exponential number of steps.  Subsequently, deterministic and stochastic approximations were made. In this manuscript, we concentrate primarily on the former. (Stochastic methods for the Partition Function estimations are reviewed in \cite{97JS,03Vaz}. See also some related discussions in \cite{18Kol} and below.)

The inference problem can also be stated as an optimization. The variational approach to Partition Function computation dates back to Gibbs \cite{gibbs_2010},  and possibly earlier.  Similar considerations are known in statistics under the name of Kullback--Leibler divergence \cite{kullback1951}. The resulting optimization stated in terms of beliefs (i.e., proxies for probabilities of states) is convex but not tractable because of the exponential number of states (and respectively beliefs). Developing relaxations, and more generally approximations,  for the Gibbs--Kullback--Leibler variational formulation is the primary research to which this manuscript is contributing. 

Theoretical efforts in the field of deterministic estimations of Partition Functions have focused on devising (a) lower and/or upper bounds for GMs of a special type and (b) Fully Polynomial Deterministic Algorithmic Schemes  for even more restrictive classes of GMs. (See Section \ref{sec:bistab-monotone} for an extensive discussion of the low bounds and related subjects. Section \ref{sec:conclusion} for a brief discussion on unification of these ideas with Fully Polynomial Deterministic Algorithmic Schemes.) 

Provable lower bounds for Partition Functions are known for Perfect Matching  problems over bi-partite graphs \cite{11Gur,14GS}, independent set problems \cite{06Wei,11CCGSS,14SS}, and Ising models of attractive (log-supermodular) \cite{07SWW,12Ruo,17LSS} and general \cite{16Ris} types. In a few cases where the exact computation of Partition Functions is polynomial, noticeably GM over tree graphs \cite{35Bet,36Pei,63Gal,88Pea,01For} and also cases where Partition Function becomes a determinant of a polynomial (in the size of the original GM) matrix correspondent to Ising, PM and other specialized models over planar graphs \cite{63Kas,66Fis,82Bar,00GLV,08Val,10CC}, would normally be considered good starting points for analysis of lower bounds on the Partition Functions. 

This manuscript contributes to the line of research  with roots in the tree-graph and dynamic programming (DP) methodology and also its extensions to loopy multi-graphs. The subject has a distinguished history in physics \cite{35Bet,36Pei,09MM}, information theory \cite{63Gal,01For,08RU}, artificial intelligence and machine learning \cite{88Pea,02Mac,09KF}, statistics, and computer science \cite{08WJ,11MM}.  It culminates in the so-called Belief Propagation (BP) analysis, theory, and algorithms. (The term BP was coined by Pearl, who has pioneered related applications in artificial intelligence and machine learning \cite{88Pea}.)  Applied to graphs with loops, as first done by Gallager in the context of the Low-Density Parity Check codes \cite{63Gal},  BP is a practically successful heuristic algorithm, generally lacking quality assurance. The iterative/algorithmic part of BP was connected to the variational Gibbs--Kullback--Leibler interpretation in \cite{05YFW}, where it was shown that (in the case of convergence) the BP algorithm corresponds to a fixed point of the so-called Bethe Free Energy, stated in terms of the marginal beliefs associated with nodes and edges of the GM. (See also Section \ref{subsec:vbp} for details.) In the following, we refer to BP as a fixed point (possibly one of many) of the Bethe Free Energy, assuming that it can be found efficiently \cite{14Shi}. We will also generalize the notion of fixed points to the cases when the minimum of the Bethe Free Energy may be  achieved at a plaquet/side of the belief polytope over which the Bethe Free Energy is defined (not necessarily within the interior of the polytope). 

Heuristic success as well as results claiming exactness of BP for some special optimization problems over loopy graphs (e.g., finding maximum weight perfect matching over bi-partite graphs \cite{08BSS}) have stimulated the design of a number of methods relating results of BP to exact results. These methods include Gauge Transformation  and Loop Calculus of \cite{06CCa,06CCb}, the spanning tree approach of \cite{06Wei}, the cumulant expansion approach of \cite{12WGI}, the graph cover approach of \cite{13Von}, and most recently the Real Stable Polynomial approach of \cite{11Gur,17AG,17SVa}. The first and last approaches are most relevant to this manuscript.

The Gauge Transformation--Loop Calculus method of \cite{06CCa,06CCb} suggests an exact construct exploring invariance of the Partition Function with respect to special transformations of factors, called gauges, also related to the so-called re-parametrizations of \cite{03WJW} and holographic transformations of \cite{08Val}. It was shown that BP corresponds to a special choice of gauges, which then lead to expressing the Partition Function in terms of the so-called generalized loops, where each generalized loop contribution is stated explicitly in terms of the underlying BP solution. The Gauge Transformation--Loop Calculus approach was utilized (1) to prove that BP provides a lower bound for attractive Ising models with some additional technical constraints in \cite{08WSW} (it was then shown in \cite{12Ruo,17Ruo} through the use of the graph cover approach of \cite{13Von} that the additional constraints are insignificant); (2) to prove that BP is exact asymptotically for an ensemble of independent set problems \cite{11CCGSS}; (3) to relate matching models, Fermion models of statistical physics with loop and determinant considerations \cite{08CCa,08CCb}; (4) to approximate Partition Function in planar GM \cite{08CCT,10GKC}; (5) to apply Loop Calculus to permanent (Partition Function of perfect matching model over bi-partite graph) \cite{10WC}, to provide a proof that is alternative to the original \cite{11Gur} for the fact that BP results in a lower bound for permanent, and then construct a sequence of fractional-BP approximations for permanents \cite{13CY}; (6) to build a Fully Polynomial Randomized Approximation Schemes for a subclass of planar GMs \cite{16ASS} by sampling Loop Series; and (7) to construct a provable lower bound for Partition Function in the case when BP fails to provide such guarantees \cite{17ASS} by finding an optimal non-BP gauge that certifies that all terms in the Loop Series are positive. 

The Real Stable Polynomials approach to the Partition Function, first developed for permanents in \cite{11Gur} and then generalized to binary GM over (normal) bi-partite graphs with submodular factors in \cite{17AG,17SVa}, is built on the recent progress in the Real Stable Polynomials theory \cite{06Bra,07BBL,08BB,09BB}. The essence of the approach  is in representing the Bethe Free Energy as a polynomial optimization and then showing that the Partition Function is a result of a sequential application of edge-local differential operators to the Bethe Free Energy/BP estimate of Partition Function. It was shown in \cite{17SVa} that if all polynomials associated with nodes of the GM are Real Stable  and the graph is bi-partite, then each application of the edge-local differential operator ensures that the respective Partition Function estimate does not decrease, thus resulting in the statement that BP (the zero term in the sequence) provides a lower bound for the partition function (the last term in the sequence). In a related paper, \cite{17AG} a polynomial version of the GM statement of \cite{17SVa} was proven for a more general case of the so-called Bi-Stable polynomials over arbitrary graphs.

\subsection{Contributions of this manuscript}

\begin{figure}
	\centering 
	\includegraphics[width=6in,page=6]{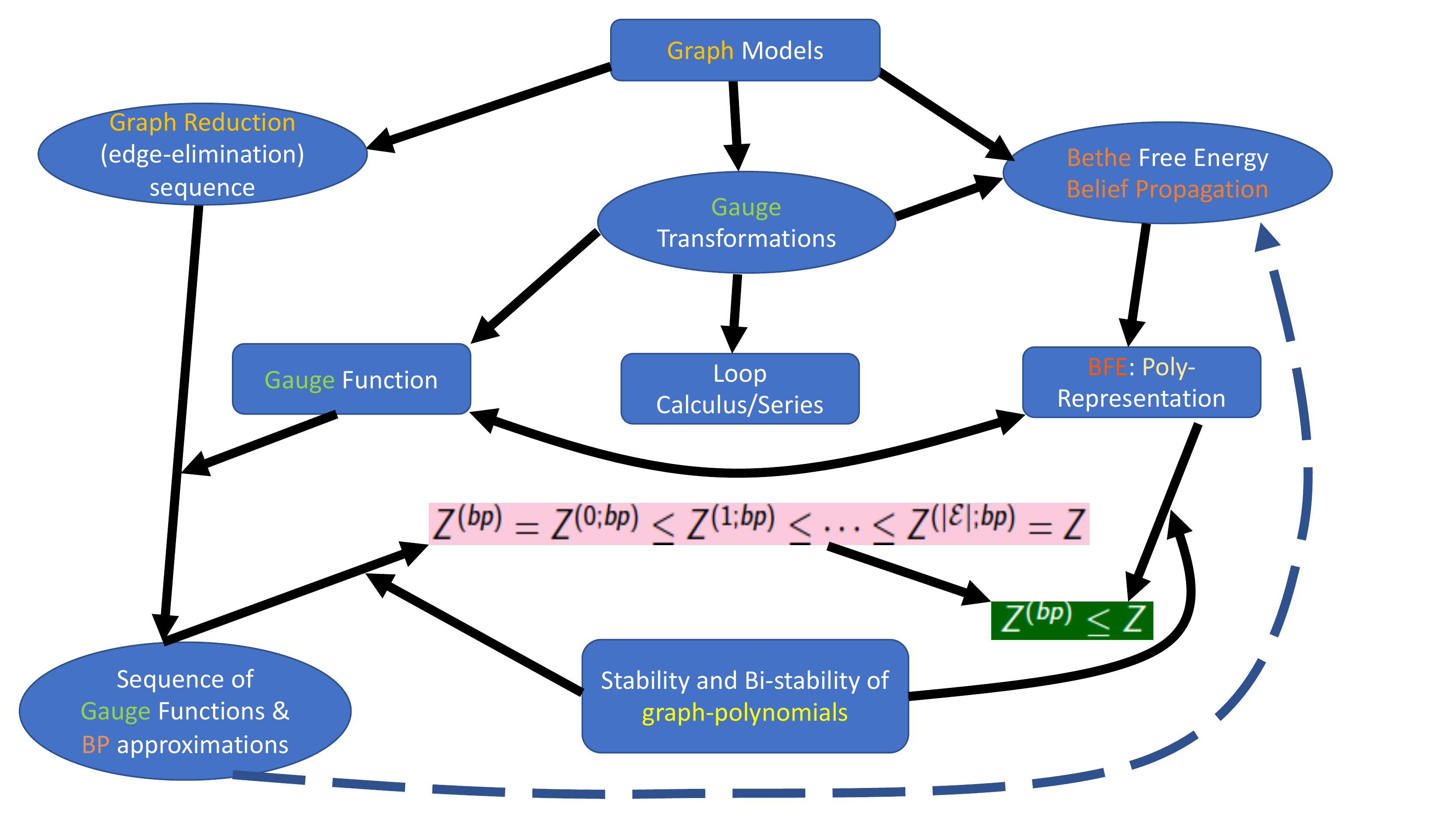}
	\caption{Relation diagram of steps and concepts of the manuscript with links to respective Sections. \label{fig:scheme}}
\end{figure}

We complement the Real Stable Polynomials approach of \cite{11Gur,17AG,17SVa} by merging it with the Gauge Transformation approach of \cite{06CCa,06CCb}, and thus in a sense generalize both. Our approach consists of the following steps (see also Fig.~(\ref{fig:scheme}) presenting a diagram of relations between the manuscript's steps and concepts) :
\begin{itemize}
    \item Generalize Variational Bethe Free Energy approach (from the case of normal graphical models) to the Multi-Graphical Models. Show that solution of any Soft Multi-Graph Model is attained strictly within a polytope of parameters -- so-called Belief Polytope -- describing the solution. All further results reported in the paper (unless specified otherwise) apply strictly speaking only to Multi-Graph Models (Multi-GM) which are soft -- even though some of the factors may be infinitesimally small. (Section \ref{sec:pre}).

    \item Restate the Gauge Transformation expression for Partition Function from \cite{06CCa,06CCb} as a series of polynomials in variables/gauges. Single out a term from the series, which we call the Gauge Function, relate stationary points of the Gauge Function, so-called BP-gauges and show that the minimum of the Bethe Free Energy is achieved at the maximal BP-gauge. (Section \ref{sec:Gauge Transformation-BP}).
    
    \item Introduce a sequence of Multi-GMs, where each new member is a result of an edge contraction (graphically) or summation over respective edge variables (algebraically). Build BP polynomial (principal polynomial evaluated at the optimal BP gauge) for each Multi-GM in the sequence such that the last term is the Partition Function (constant) corresponding to the fully contracted graph. Introduce BP-optimal gauge for each Multi-GM and show that BP-optimal estimation stays exact in the process of contraction of  a normal edge, however, it becomes approximate respective contraction of a self-edge.  (Section \ref{sec:elim}.)
    
    \item Observe that the Bi-Stability of polynomials correspondent to factors of the original Multi-GM results in the Real Stable Polynomials of each factor in each Multi-GM of the aforementioned contraction sequence. Show that the variational BP solution (correspondent to the minimum of the respective Bethe Free Energy) for each next Multi-GM in the sequence is larger or equal to BP if all factors in the original Multi-GM correspond to Real Stable Polynomials polynomials.
    A direct corollary of this construction is the desired statement that the BP optimal estimation for the Partition Function of the original Multi-GM low bounds the exact Partition Function. 
    (Section \ref{sec:mix_der}.)
\end{itemize}

We present in Section \ref{sec:pre}, for the purpose of setting terminology and self-consistency of the presentation, introductory material for the Bethe Free Energy approach. For the Loop Calculus approach of \cite{06CCa,06CCb}, we present introductory material in Appendix \ref{app:LS}. Section \ref{sec:conclusion} is reserved for discussions of the results and the path forward.

\section{Preliminaries}
\label{sec:pre}

\begin{figure}
	\centering 
	\includegraphics[width=2.3in,page=1]{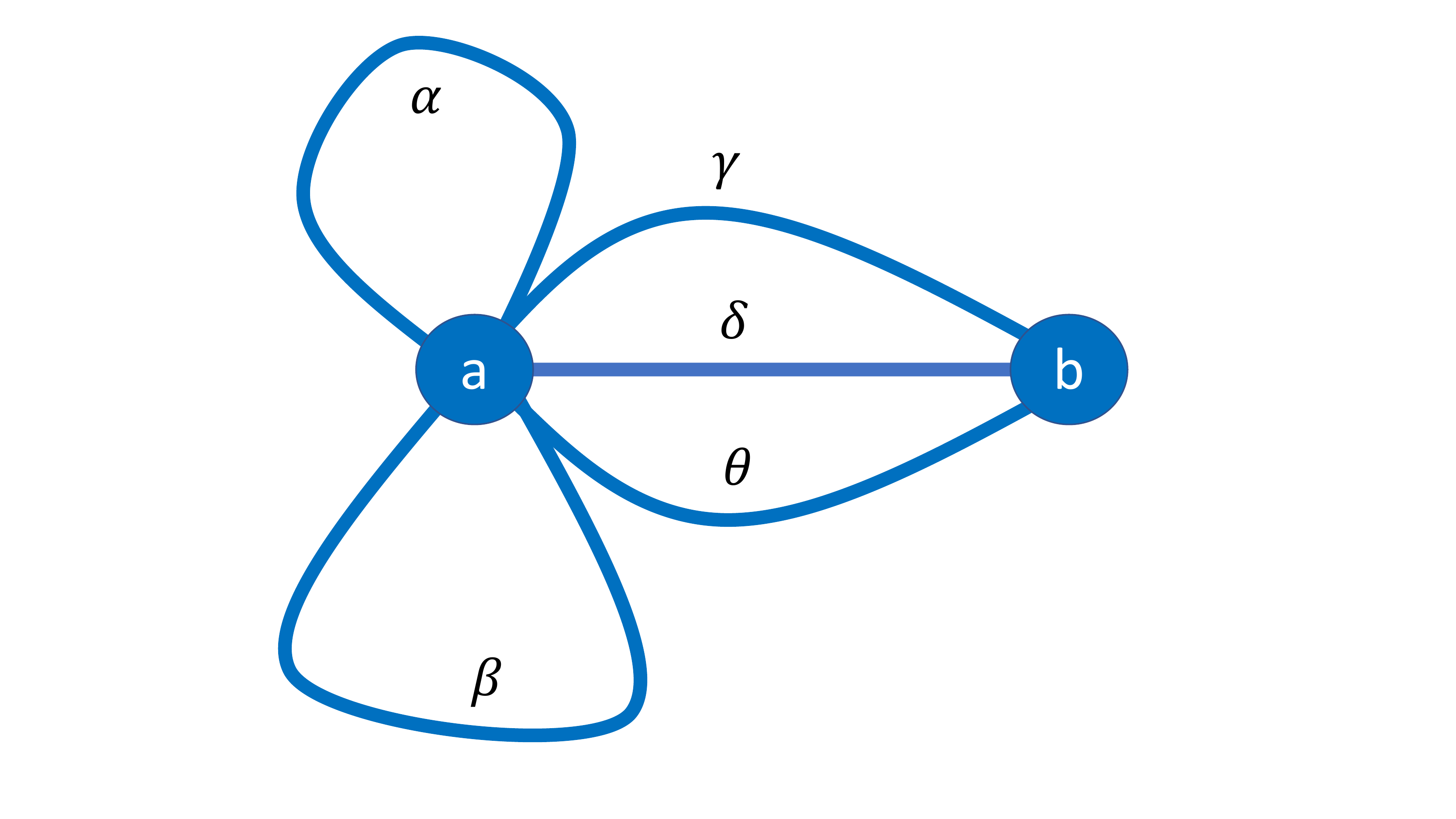}
	\includegraphics[width=2.3in,page=2]{figs/MF-GM.pdf}
	\includegraphics[width=2.3in,page=3]{figs/MF-GM.pdf}
	\caption{Multi-Factor Graphical Model notations for undirected (left sub-figure) and directed (middle and right sub-figures) edges are illustrated. ${\cal V}=\{a,b\}$, ${\cal E}=\{\alpha,\beta,\gamma,\delta,\theta\}$ and ${\cal E}_d=\{\alpha_d,\bar{\alpha}_d,\beta_d,\bar{\beta}_d,\gamma_d,\bar{\gamma}_d,\delta_d,\bar{\delta}_d,\theta_d,\bar{\theta}_d\}$ are the sets of nodes, set of undirected edges and set of directed edges, respectively, where 
	$\bar{\cdots}$ is the notation used to denote the directed edge $\cdots$ reversal, thus $\bar{\bar{\alpha}}_d=\alpha_d$. $e(a)=\{\alpha,\beta,\gamma,\delta,\theta\}$, $e(b)=\{\gamma,\delta,\theta\}$  and $e_d(a)=\{\alpha_d,\bar{\alpha}_d,\beta_d,\bar{\beta}_d,\bar{\gamma}_d,\delta_d,\theta_d\}$, $e_d(b)=\{\gamma_d,\bar{\delta}_d,\bar{\theta}_d\}$ describe functions, $e(\cdots):{\cal V}\to {\cal E}$ and $e_d(\cdots):{\cal V}\to {\cal E}_d$, mapping a node to the set of directed and undirected edges, respectively, of the multi-graph shown in the figure. $u(e):{\cal E}_d\to {\cal E}$ describes function mapping directed edges into respective undirected edges; $u(\alpha_d)=u(\bar{\alpha}_d)=\alpha$. $v(\alpha):{\cal E}\to{\cal V}^2$ describes function mapping an undirected edge to its end nodes. \label{fig:MF-GM} }
\end{figure}

We consider multi-graph generalization of \cite{06CCa,06CCb}. Following terminology of \cite{04Loe}, we may also call it binary Factor-Multi-GM: variables are associated with edges, and factor functions (or simply factors) are associated with nodes of the undirected multi-graph, ${\cal G}\doteq ({\cal V},{\cal E})$, thus allowing multi-edges between two different nodes and multiple self-edges associated with a single node, where ${\cal V}$ and ${\cal E}$ are the sets of nodes and edges, respectively. The main reason for dealing with multi-graphs is that certain geometrical graph transformations, namely edge contraction, introduced in Section~\ref{sec:bistab-monotone} turn simple graphs (no multiple edges, no self-edges) into multi-graphs.  

It is also useful to introduce an oriented version of the undirected multi-graph, i.e., a multi-graph, equipped with orientation. We will then denote ${\cal E}_d$ the set of directed edges including the original orientation and its reverse.  $e_d(a)$, $v(\alpha)$ and $u(\alpha_d)$ will denote, respectively,  the set of directed edges associated with the node $a$, two nodes associated with the undirected edge, $\alpha$, and undirected edge, $\alpha$, associated with the directed edge $\alpha_d$. Also, and abusing notations a bit, $\alpha_d$ may denote the primary oriented edge for previously introduced undirected edge $\alpha$; $\alpha_d\in e_d(a)$ denotes a variable directed edge picked from the $e_d(a)$ set. See Fig.~\ref{fig:MF-GM} for clarifying example.

\begin{definition}[(Multi-) Graph Model] 
\label{def:GM}
Multi-GM describes factorization for the probability of a binary-component vector, 
$\sigma\doteq (\sigma_\alpha=0,1|\alpha\in{\cal E})\in\{0,1\}^{|{\cal E}|}$,
consistent with the (multi)-graph:
\begin{eqnarray}
p(\sigma)\doteq\frac{f(\sigma)}{Z},\quad f(\sigma)\doteq
\prod_{a\in{\cal V}} f_a(\sigma_a),\quad Z\doteq \sum_\sigma  f(\sigma).
\label{eq:GM}
\end{eqnarray}
Here $\sigma_a$ is a sub-vector of $\sigma$ built from all components of the latter containing node $a$, i.e. $\sigma_a \in \Sigma_{a}\doteq \{0, 1\}^{e_d(a)}$.  
\end{definition}

Notice that the Partition Function, $Z$, defined in Eq.~(\ref{eq:GM}) as a summation over all configurations, $\sigma$, allows a recast in terms of the following exact variational principle.
\begin{theorem}[Gibbs-Kullback-Leibler (Gibbs--Kullback--Leibler) Variational Reformulation for Partition Function, in the spirit of \cite{gibbs_2010,kullback1951}]
\label{theorem:Gibbs--Kullback--Leibler}
The Partition Function, $Z$, defined in Eq.~(\ref{eq:GM}) can be computed through the following optimization 
\begin{eqnarray}
-\log Z=\min_{\mathbb{b}}\left. \sum_{\sigma\in\{0,1\}^{|{\cal E}|}} \mathbb{b}(\sigma)\log \frac{\mathbb{b}(\sigma)}{\prod\limits_{a\in{\cal V}} f_a(\sigma_a)}\right|_{\begin{array}{l} \forall \sigma:\quad \mathbb{b}(\sigma)\geq 0\\ \sum_\sigma \mathbb{b}(\sigma)=1\end{array}},\label{eq:var-exact}
\end{eqnarray}
where $\mathbb{b}=(\mathbb{b}(\sigma)|\forall \sigma\in \sigma\in\{0,1\}^{|{\cal E}|} )$, and $\mathbb{b}(\sigma)$ are  beliefs (i.e., proxies for probabilities) of the state $\sigma$. 
\end{theorem}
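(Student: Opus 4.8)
The plan is to verify that the stated variational principle is just the Gibbs inequality (nonnegativity of Kullback--Leibler divergence) specialized to the distribution $p(\sigma) = f(\sigma)/Z$ defined in Eq.~(\ref{eq:GM}). First I would observe that for any admissible $\mathbb{b}$ — that is, any $\mathbb{b}(\sigma)\geq 0$ with $\sum_\sigma \mathbb{b}(\sigma)=1$ — one may split the objective functional as
\begin{eqnarray}
\sum_\sigma \mathbb{b}(\sigma)\log\frac{\mathbb{b}(\sigma)}{\prod_a f_a(\sigma_a)}
= \sum_\sigma \mathbb{b}(\sigma)\log\frac{\mathbb{b}(\sigma)}{p(\sigma)} - \log Z,
\end{eqnarray}
using $f(\sigma) = Z\, p(\sigma)$ and $\sum_\sigma \mathbb{b}(\sigma)=1$ to pull the constant $\log Z$ out of the sum. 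The first term on the right is the relative entropy $D(\mathbb{b}\,\|\,p)$.

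Next I would invoke the standard fact that $D(\mathbb{b}\,\|\,p)\geq 0$ for any two probability distributions on the same finite set, with equality if and only if $\mathbb{b}=p$; this follows from the strict concavity of the logarithm via Jensen's inequality, $\sum_\sigma \mathbb{b}(\sigma)\log\frac{p(\sigma)}{\mathbb{b}(\sigma)} \leq \log\sum_\sigma \mathbb{b}(\sigma)\frac{p(\sigma)}{\mathbb{b}(\sigma)} = \log 1 = 0$. Consequently the objective is bounded below by $-\log Z$, and the bound is attained at $\mathbb{b}=p$, which is itself admissible. This establishes Eq.~(\ref{eq:var-exact}).

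I expect there to be essentially no obstacle here — the result is classical — but two small points of care are worth flagging. One must handle the boundary of the simplex where some $\mathbb{b}(\sigma)=0$, using the convention $0\log 0 = 0$ and noting that the functional remains lower semicontinuous so the minimum is genuinely achieved; and one should note where the soft assumption on the Multi-GM enters, namely that $f(\sigma)>0$ for all $\sigma$ guarantees $Z>0$ and that $p(\sigma)>0$ everywhere, so that $D(\mathbb{b}\,\|\,p)$ is well defined for every admissible $\mathbb{b}$ (no division by a vanishing $p(\sigma)$). With these conventions fixed, the chain of equalities and the Gibbs inequality give the claim directly.
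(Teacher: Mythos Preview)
Your argument is correct and is exactly the standard Gibbs/Kullback--Leibler derivation. The paper itself does not supply a proof of this theorem; it states the result as classical (``in the spirit of'' the cited references) and moves on, so there is nothing to compare against beyond noting that your approach is the canonical one.
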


The optimization (\ref{eq:var-exact}) is convex but not practical because the number of states (and number of respective beliefs) is exponential in the system size (number of edges).

\begin{theorem}[Exact Maximum-A-Posteriori as a Linear Programming]
\label{theorem:LP-ML}
The Maximum-A-Posteriori versions of Eqs.~(\ref{eq:GM},\ref{eq:var-exact}) are
\begin{eqnarray}
E\doteq -\min\limits_{\sigma} \log f(\sigma)=-\left.\min_{\mathbb{b}} \sum_\sigma \mathbb{b}(\sigma) \sum_{a\in{\cal V}}\log f_a(\sigma_a)\right|_{\begin{array}{l} \forall \sigma:\quad \mathbb{b}(\sigma)\geq 0\\ \sum_\sigma \mathbb{b}(\sigma)=1\end{array}}.
\label{eq:ML}
\end{eqnarray}
\end{theorem}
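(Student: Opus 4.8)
The plan is to reduce Eq.~(\ref{eq:ML}) to an elementary fact about linear optimization over a simplex. Inserting the factorization of Definition~\ref{def:GM}, so that $\sum_{a\in{\cal V}}\log f_a(\sigma_a)=\log\prod_{a\in{\cal V}}f_a(\sigma_a)=\log f(\sigma)$ (finite because the Multi-GM is soft, hence every $f_a>0$), the right-hand side of Eq.~(\ref{eq:ML}) is minus the minimum of the \emph{linear} functional $\mathbb{b}\mapsto\sum_{\sigma}\mathbb{b}(\sigma)\log f(\sigma)$ over the probability simplex $\Delta\doteq\{\mathbb{b}:\ \mathbb{b}(\sigma)\ge 0\ \ \forall\sigma,\ \ \sum_\sigma\mathbb{b}(\sigma)=1\}$, a compact polytope indexed by the finite set $\{0,1\}^{|{\cal E}|}$. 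So the claim is precisely that this linear program has optimal value $\min_\sigma\log f(\sigma)$.

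First I would establish the lower-bound direction by a direct averaging argument, which avoids quoting any structural theorem about linear programs: for every feasible $\mathbb{b}\in\Delta$,
\[
\sum_{\sigma}\mathbb{b}(\sigma)\log f(\sigma)\ \ge\ \Big(\min_{\sigma'}\log f(\sigma')\Big)\sum_{\sigma}\mathbb{b}(\sigma)\ =\ \min_{\sigma'}\log f(\sigma'),
\]
where the inequality holds termwise ($\mathbb{b}(\sigma)\ge 0$ and $\log f(\sigma)\ge\min_{\sigma'}\log f(\sigma')$) and the last equality uses $\sum_\sigma\mathbb{b}(\sigma)=1$. Taking the infimum over $\mathbb{b}\in\Delta$ gives $\min_{\mathbb{b}\in\Delta}\sum_\sigma\mathbb{b}(\sigma)\log f(\sigma)\ge\min_\sigma\log f(\sigma)$.

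Next I would establish the matching upper bound by exhibiting a minimizer: pick a state $\sigma^\star$ realizing $\min_\sigma\log f(\sigma)$ — it exists because $\{0,1\}^{|{\cal E}|}$ is finite — and take $\mathbb{b}=\delta_{\sigma^\star}$, the deterministic belief supported on $\sigma^\star$. This $\mathbb{b}$ is feasible and makes the functional equal to $\log f(\sigma^\star)=\min_\sigma\log f(\sigma)$, so the optimal value is at most $\min_\sigma\log f(\sigma)$. The two bounds together yield equality of the two minimizations, and multiplying by $-1$ gives Eq.~(\ref{eq:ML}). An equivalent, more abstract route is to invoke the fundamental theorem of linear programming, or Bauer's minimum principle: a linear objective on the polytope $\Delta$ attains its minimum at a vertex, and the vertices of $\Delta$ are exactly the deterministic beliefs $\{\delta_\sigma\}$, on which the objective equals $\log f(\sigma)$.

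I do not anticipate a genuine obstacle; the only delicate point is bookkeeping around the logarithm. The identity $\log f(\sigma)=\sum_a\log f_a(\sigma_a)$ presumes $f(\sigma)>0$, which is guaranteed for soft Multi-GMs; if vanishing factors are allowed, the same argument applies verbatim under the convention $\log 0=-\infty$, and both sides of Eq.~(\ref{eq:ML}) still coincide (possibly as $+\infty$). The theorem is essentially organizational: it records the maximum-a-posteriori (zero-temperature) counterpart of the Gibbs--Kullback--Leibler reformulation of Theorem~\ref{theorem:Gibbs--Kullback--Leibler}, which underpins the subsequent transitions between inference and optimization in the paper.
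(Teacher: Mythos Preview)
Your proof is correct; the paper itself offers no proof of this theorem, treating it as a standard fact and merely remarking afterward that the right-hand side is an LP over exponentially many belief variables. Your averaging-plus-delta-distribution argument is the natural elementary justification.
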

Notice that the formulation on the right of Eq.~(\ref{eq:ML}) is an LP over (exponentially many) belief variables.

\subsection{Variational Belief Propagation}
\label{subsec:vbp}

Belief propagation (BP) is a popular and practical tool that approximates original beliefs via marginal beliefs according to the following dynamic programming (DP) expression \cite{08RU,02Mac,08WJ,09MM,11MM}, which gets the following form when stated for the Multi-GM model (\ref{eq:GM})
\begin{eqnarray} \label{eq:b-BP}
 \mathbb{b}(\sigma)\approx \frac{\prod\limits_a b_a(\sigma_a)}
{\prod\limits_{\alpha\in{\cal E}}\beta_\alpha(1-\beta_\alpha)}
,\mbox{ s.t. } \left\{\begin{array}{c}
 \forall a\in{\cal V},\ \forall \sigma_a\in \Sigma_{a}^{(0)}:\ b_a(\sigma_a)\doteq\sum\limits_{\sigma\setminus \sigma_a} \mathbb{b}(\sigma);\\ \forall \alpha\in{\cal E}:\ \beta_\alpha=\sum\limits_{\sigma\setminus\sigma_\alpha}^{\sigma_\alpha=1}\mathbb{b}(\sigma);\end{array}\right.
\end{eqnarray}
which would be exact for a tree-graph (no loops in GM).  Reducing description from the exponential in size vector of original beliefs to the linear in the size of the graph (assuming that node degree in the Multi-GM is $O(1)$) vector of marginal beliefs
\begin{eqnarray}
b=\left.\left(b_a(\sigma_a)\right| \forall a\in{\cal V},\ \sigma_a \in \Sigma_{a}\right),\quad 
\beta=\left.\left(\beta_\alpha\right| \forall \alpha\in{\cal E}\right).
\label{eq:beta+b}
\end{eqnarray}
considered over the following marginal polytope 
\begin{eqnarray}
 \Pi\doteq  \left\{ (b, \beta) \in [0,1]^{\Sigma} \times [0, 1]^{\cal E}  \left| \begin{array}{c}
 \forall a \in {\cal V}:\  \sum_{s \in \Sigma_{a}} b_{a}(s) = 1;\\ \forall a \in {\cal V},\ \forall \alpha_d \in {\cal E}_{\rm d}(a):\quad \sum_{s \in \Sigma_{a}}^{s_{u(\alpha_d)}=1} b_{a}(s) = \beta_{u(\alpha_d)}\end{array} \right. \right\}.
\label{eq:Pi}
\end{eqnarray}
and substituting Eq.~(\ref{eq:b-BP}) into Eq.~(\ref{eq:var-exact}), one arrives at the following optimization.
\begin{definition}[Bethe Free Energy and Variational Belief Propagation approximation for the Partition Function, by analogy with \cite{05YFW}]
\label{def:BFE}
Variational BP estimation for the Partition Function, $Z^{{\rm (vbp})}$,  is defined according to 
\begin{eqnarray}
&& F^{(\rm {vbp})}\doteq -\log Z^{{\rm (vbp})}\doteq \min_{(b,\beta)\in \Pi} F^{(bp)}(b,\beta),\label{eq:vbp}\\
&& F^{(bp)}(b,\beta)\doteq E^{(bp)}(b)- S^{(bp)}(b,\beta),
\label{eq:Fbp}\\
&& E^{(bp)}(b)\doteq -\sum_{a\in{\cal V}_a} \sum_{\sigma_a\in\Sigma_a} b_a(\sigma_a)\log f_a(\sigma_a),
\label{eq:Ebp}\\
&& S^{(bp)}(b,\beta)\doteq \sum_{a\in{\cal V}_a} \sum_{\sigma_a\in\Sigma_a} b_a(\sigma_a)\log b_a(\sigma_a)-
\sum_{\alpha\in{\cal E}} \left(\beta_\alpha\log\beta_\alpha+(1-\beta_\alpha)\log(1-\beta_\alpha)\right),
\label{eq:Sbp}
\end{eqnarray}
where $F^{(\rm {vbp})}$ is the Variational Bethe Free Energy, $F^{(bp)}(b,\beta)$, $E^{(bp)}(b)$ and $S^{(bp)}(b,\beta)$ are the Bethe Free Energy, Bethe Self Energy and Bethe Entropy  functions of marginal beliefs.
\end{definition}

We further notice that Eq.~(\ref{eq:vbp}) may be restated as a polynomial optimization \cite{17SVa}. To derive the polynomial representation one, first, rewrites Eq.~(\ref{eq:vbp}) as
\begin{eqnarray}
&& \log Z^{({\rm vbp})}=\max_{\beta\in[0,1]^{{\cal E}}} \left. \left(S^{(bp-r)}(\beta)-E^{(bp-r)}(\beta)\right)\right., \label{eq:vbp-r}\\
&& S^{(bp-r)}(\beta)\doteq \sum_{\alpha\in {\cal E}}\left(\beta_{\alpha}\log\beta_{\alpha}+(1-\beta_{\alpha})\log(1-\beta_{\alpha})\right),\label{eq:Sbp-r}\\
&& E^{(bp-r)}(\beta)\doteq \min\limits_{b\in \Pi_r(\beta)} \sum_{a \in {\cal V}} \sum_{\sigma_a \in \Sigma_{a}} b_a(\sigma_a)\log \frac{b_a(\sigma_a)}{f_a(\sigma_a)}\label{eq:Ebp-R}\\
&&  \Pi_r(\beta)\doteq  \left\{ b \in [0,1]^{\Sigma}  \left| \begin{array}{c}
\forall a \in {\cal V}:\  \sum_{s \in \Sigma_{a}} b_{a}(s) = 1;\\ \forall a \in {\cal V},\ \forall \alpha_d \in e_d(a):\quad \sum_{s \in \Sigma_{a}}^{s_{u(\alpha_d)}=1} b_{a}(s) = \beta_{u(\alpha_d)}\end{array} \right. \right\}.
\label{eq:Pi-r}
\end{eqnarray}
where $S^{(bp-r)}(\beta)$ and $E^{(bp-r)}(\beta)$ are the reduced BP entropy and the reduced BP self-energy functions, respectively, dependent only on the vector of edge probabilities, $\beta$.  Applying strong duality to the reduced BP self-energy  one derives
\begin{eqnarray}
&& E^{(bp-r)}(\beta)=\sup_{x \in \mathbb{R}_{+}^{{\cal E}_{\rm d}}}\left(\sum_{\alpha\in{\cal E}}\beta_{\alpha}\log (x_{\alpha_d} x_{\bar{\alpha}_d})
-\sum_{a\in{\cal V}} \log(h_a(x_a))\right), 
\label{eq:Ebp-r1}\\
&& h_a(x_a)\doteq \sum_{s \in \Sigma_{a}}f_a(s) \prod_{\alpha \in {\cal E}_{\rm d}(a)} x_{\alpha}^{s_{\alpha}}
, \label{eq:h_a}
\end{eqnarray}
where, $x \doteq (x_{\alpha}>0 | \forall \alpha \in {\cal E}_{\rm d}) \in \mathbb{R}_{+}^{{\cal E}_{\rm d}}$ and
$x_a \doteq(x_{\alpha}>0|\forall \alpha\in e_{\rm d}(a)) \in \mathbb{R}_{+}^{e_{\rm d}(a)}$. The $\log(x_{\alpha})$ components, with $\alpha \in {\cal E}_{\rm d}$ were introduced as Lagrangian multipliers (dual variables) for the belief consistency conditions in Eq.~(\ref{eq:Ebp}). Combining Eqs.~(\ref{eq:vbp},\ref{eq:Fbp},\ref{eq:vbp-r},\ref{eq:Sbp-r},\ref{eq:Ebp-r1}), one arrives at the following statement.
\begin{theorem}[Polynomial Max-Min Representation for Variational BP, multi-graph version of Theorem 3.1 of \cite{17SVa}]
\label{theorem:max-min_VBP}
Variational BP, defined  Eqs.~(\ref{eq:vbp},\ref{eq:Fbp},\ref{eq:Ebp},\ref{eq:Sbp}), can also be stated as the following max-min optimization:
\begin{eqnarray}
&& Z^{({\rm vbp})} = \sup\limits_{\beta\in[0;1]^{\cal E}} \min\limits_{x \in \mathbb{R}_{+}^{{\cal E}_{\rm d}}} {\cal L}(\beta,x), \label{eq:vbp-p}\\ && {\cal L}(\beta,x)\doteq 
\left(\prod_{\alpha \in{\cal E}}\beta_{\alpha}^{\beta_{\alpha}} (1-\beta_{\alpha})^{1-\beta_{\alpha}}\right) \prod_{a\in{\cal V}}\frac{h_a(x_a)}{\prod\limits_{\alpha_d\in {\cal E}_d(a)}x_{\alpha}^{\beta_{u(\alpha_d)}}},\label{eq:L}
\end{eqnarray}
\end{theorem}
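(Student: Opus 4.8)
The plan is to obtain the asserted max--min formula by concatenating the three displays (\ref{eq:vbp-r}), (\ref{eq:Sbp-r}) and (\ref{eq:Ebp-r1}) that immediately precede the statement, turning the outer difference into a nested supremum--infimum, and then exponentiating. I would start from (\ref{eq:vbp-r}), namely $\log Z^{({\rm vbp})}=\sup_{\beta}\bigl(S^{(bp-r)}(\beta)-E^{(bp-r)}(\beta)\bigr)$, and insert the strong-duality representation (\ref{eq:Ebp-r1}) of $E^{(bp-r)}(\beta)$. Since $-\sup_x g(\beta,x)=\inf_x\bigl(-g(\beta,x)\bigr)$, this produces
\[
\log Z^{({\rm vbp})}=\sup_{\beta\in[0,1]^{\cal E}}\;\inf_{x\in\mathbb{R}_{+}^{{\cal E}_{\rm d}}}\left(S^{(bp-r)}(\beta)-\sum_{\alpha\in{\cal E}}\beta_\alpha\log(x_{\alpha_d}x_{\bar{\alpha}_d})+\sum_{a\in{\cal V}}\log h_a(x_a)\right).
\]
No minimax exchange enters here: the $\inf_x$ arises purely from flipping the sign of a supremum, so the nesting order $\sup_\beta\inf_x$ is forced rather than assumed.

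Next I would identify the bracket with $\log{\cal L}(\beta,x)$. By (\ref{eq:Sbp-r}) one has $S^{(bp-r)}(\beta)=\log\prod_{\alpha\in{\cal E}}\beta_\alpha^{\beta_\alpha}(1-\beta_\alpha)^{1-\beta_\alpha}$ (with $0^0=1$ on the boundary of the cube). The only bookkeeping step is to reorganize the remaining denominator node by node: each directed edge lies in exactly one incidence set $e_d(a)$ and $u(\alpha_d)=u(\bar{\alpha}_d)$, so $\prod_{\alpha\in{\cal E}}(x_{\alpha_d}x_{\bar{\alpha}_d})^{\beta_\alpha}=\prod_{\alpha_d\in{\cal E}_{\rm d}}x_{\alpha_d}^{\beta_{u(\alpha_d)}}=\prod_{a\in{\cal V}}\prod_{\alpha_d\in e_d(a)}x_{\alpha_d}^{\beta_{u(\alpha_d)}}$, which is exactly the denominator in (\ref{eq:L}). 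Hence the bracket equals $\log{\cal L}(\beta,x)$, and since $t\mapsto e^{t}$ is increasing it commutes with both $\sup_\beta$ and $\inf_x$; exponentiating the display above gives $Z^{({\rm vbp})}=\sup_\beta\inf_x{\cal L}(\beta,x)$.

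The one step of real substance, which I expect to be the main obstacle, is upgrading the inner infimum to a genuine minimum so that ``$\min_x$'' is literally correct. For fixed $\beta$ in the open cube $(0,1)^{\cal E}$ I would prove that ${\cal L}(\beta,\cdot)$ is coercive on $\mathbb{R}_{+}^{{\cal E}_{\rm d}}$: softness makes every coefficient $f_a(s)$ strictly positive, hence every monomial $\prod_{\alpha_d\in T}x_{\alpha_d}$, $T\subseteq e_d(a)$, occurs in $h_a$ with positive coefficient and $h_a(x_a)\ge m\prod_{\alpha_d\in e_d(a)}(1+x_{\alpha_d})$ with $m\doteq\min_{a,s}f_a(s)>0$; feeding this into the formula yields $\log{\cal L}(\beta,x)\ge{\rm const}(\beta)+\sum_{\alpha_d\in{\cal E}_{\rm d}}\bigl(\log(1+x_{\alpha_d})-\beta_{u(\alpha_d)}\log x_{\alpha_d}\bigr)$, and each summand is bounded below and tends to $+\infty$ as $x_{\alpha_d}\to 0^{+}$ or $x_{\alpha_d}\to\infty$ precisely because $0<\beta_{u(\alpha_d)}<1$. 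So ${\cal L}(\beta,\cdot)$ is continuous, positive, bounded below and coercive, hence attains its minimum at an interior point of $\mathbb{R}_{+}^{{\cal E}_{\rm d}}$. Finally, restricting the outer supremum to $(0,1)^{\cal E}$ does not change its value, because for a soft Multi-GM the Bethe variational problem is solved strictly inside the belief polytope (the soft-model interiority established in Section~\ref{sec:pre}), so the optimal $\beta$ has all components in $(0,1)$; on that set the inner infimum is a minimum, which is the stated $\sup_\beta\min_x$ form. At boundary values of $\beta$ the inner optimum is merely approached, as some $x_{\alpha_d}\to 0^{+}$, and there ``$\min$'' is to be read as ``$\inf$'', with no change in value.
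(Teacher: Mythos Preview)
Your proposal is correct and follows essentially the same route as the paper: the paper's ``proof'' is the text immediately preceding the statement, which rewrites the Bethe variational problem as (\ref{eq:vbp-r})--(\ref{eq:Sbp-r}), applies strong duality to obtain (\ref{eq:Ebp-r1}), and then simply says ``combining'' these yields the theorem. Your write-up makes this combination explicit and adds rigor the paper omits --- in particular your coercivity argument justifying the literal ``$\min$'' is something the paper defers to the later proof of Theorem~\ref{theorem:Soft-VBP} (step (b) there), so you are slightly ahead of the paper rather than diverging from it.
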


\subsection{Linear Programming  Relaxation for Maximum-A-Posteriori}
\label{subsec:LP-BP}

\begin{definition}[Linear programming--Belief Propagation approximation]
Linear Programming -- Belief Propagation approximation for Maximum-A-Posteriori optimization (\ref{eq:ML}) is 
\begin{eqnarray}
E^{({\rm lp-bp})}\doteq -\min_{(b,\beta)\in P_B} \left(\sum_{a \in {\cal V}} \sum_{\sigma_a \in \Sigma_{a}} b_a(\sigma_a)\log f_a(\sigma_a)\right).\label{eq:lp-bp}
\end{eqnarray}
\end{definition}
Notice  that Linear Programming -- Bellief Propagation is tractable, and it can be considered both as the ``entropy-free'' version of the optimization (\ref{eq:vbp}) and also as a relaxation of the exact LP formulation (\ref{eq:ML}) and therefore results in the following statement.
\begin{theorem}[Lower bounding by Linear Programming -- Belief Progagation (see for example \cite{08Joh,08WJ,10Son} and references therein)]
\label{theorem:LP-BP}
Linear Programming -- Belief Progagation lower bounds exact self-energy.
$E^{({\rm lp-bp})}\leq  E$.
\end{theorem}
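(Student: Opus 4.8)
The plan is to recognize (\ref{eq:lp-bp}) as a linear-programming relaxation of the exact formula for $E$ established in Theorem~\ref{theorem:LP-ML}, and then to read off the bound from the monotonicity of optima under relaxation.

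First I would rewrite the right-hand side of (\ref{eq:ML}) in terms of node-local marginals only. Since each $\log f_a(\sigma_a)$ depends on $\sigma$ only through $\sigma_a$, for any global belief $\mathbb{b}$ on $\{0,1\}^{|{\cal E}|}$ one has $\sum_\sigma\mathbb{b}(\sigma)\sum_{a\in{\cal V}}\log f_a(\sigma_a)=\sum_{a\in{\cal V}}\sum_{\sigma_a}b_a(\sigma_a)\log f_a(\sigma_a)$, where $b_a(\sigma_a)\doteq\sum_{\sigma\setminus\sigma_a}\mathbb{b}(\sigma)$ and $\beta_\alpha\doteq\sum_{\sigma:\,\sigma_\alpha=1}\mathbb{b}(\sigma)$ are the marginals induced by $\mathbb{b}$. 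Hence the exact LP of (\ref{eq:ML}) is exactly the optimization of the node-local linear functional $(b,\beta)\mapsto\sum_{a}\sum_{\sigma_a}b_a(\sigma_a)\log f_a(\sigma_a)$ over the \emph{true marginal polytope} $M$, i.e.\ over the set of pairs $(b,\beta)$ realizable as marginals of some $\mathbb{b}\ge0$ with $\sum_\sigma\mathbb{b}(\sigma)=1$.

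Second I would check the inclusion $M\subseteq P_B$, where $P_B$ is the feasible set of (\ref{eq:lp-bp}) — the local, or pseudo-marginal, polytope cut out by the node-normalization and edge-consistency constraints (the same ones appearing in (\ref{eq:Pi})). Every genuine marginal vector trivially satisfies these local constraints, so it lies in $P_B$; on a multi-graph with cycles the inclusion is in general strict, since local consistency of $(b,\beta)$ does not entail existence of a compatible global $\mathbb{b}$. Consequently (\ref{eq:lp-bp}) extremizes the \emph{same} functional as the rewritten (\ref{eq:ML}) but over the \emph{larger} set $P_B\supseteq M$, so the extremum moves in the relaxing direction; carrying through the overall minus sign common to the definitions of $E$ and $E^{({\rm lp-bp})}$ turns this into the asserted $E^{({\rm lp-bp})}\le E$. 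This is precisely the classical LP-relaxation (Schlesinger-type) bound; see \cite{08Joh,08WJ,10Son}.

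The argument carries essentially no real obstacle — this is a textbook-level statement. The one step worth a careful sentence is the second: one must be sure that $P_B$ is a genuine \emph{outer} relaxation of $M$, i.e.\ that the constraints defining $P_B$ are exactly those implied by (but, on loopy multi-graphs, strictly weaker than) membership in $M$, so that $M\subseteq P_B$ rather than some reverse or unrelated inclusion holds. A fully self-contained alternative is to apply LP duality directly to (\ref{eq:lp-bp}) (equivalently, to note that (\ref{eq:lp-bp}) is the ``entropy-free'' version of (\ref{eq:vbp})), but establishing $M\subseteq P_B$ is the most transparent route.
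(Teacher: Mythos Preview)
Your approach---recognizing (\ref{eq:lp-bp}) as the LP relaxation of (\ref{eq:ML}) over the local polytope $P_B\supseteq M$---is exactly what the paper does; the paper offers no formal proof beyond the one-sentence remark preceding the theorem that LP--BP ``can be considered \ldots\ as a relaxation of the exact LP formulation (\ref{eq:ML}).''

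There is, however, a genuine sign slip in your last step. With $M\subseteq P_B$, minimizing the same linear functional over the larger set $P_B$ yields a \emph{smaller} minimum, so after the common outer minus sign one obtains a \emph{larger} value: the relaxation argument with the definitions (\ref{eq:ML}) and (\ref{eq:lp-bp}) taken literally gives
\[
\min_{(b,\beta)\in P_B}\sum_{a,\sigma_a} b_a(\sigma_a)\log f_a(\sigma_a)\;\le\;\min_{(b,\beta)\in M}\sum_{a,\sigma_a} b_a(\sigma_a)\log f_a(\sigma_a),
\]
and hence $E^{({\rm lp\text{-}bp})}\ge E$, not $\le$. Your sentence ``carrying through the overall minus sign \ldots\ turns this into the asserted $E^{({\rm lp\text{-}bp})}\le E$'' has the direction reversed. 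The mismatch almost certainly traces to a sign/convention inconsistency in the paper's own definitions: in the references cited, the LP relaxation upper-bounds the MAP \emph{score} $\max_\sigma\log f(\sigma)$, equivalently lower-bounds the MAP \emph{energy} $\min_\sigma(-\log f(\sigma))$, whereas (\ref{eq:ML}) writes $E=-\min_\sigma\log f(\sigma)$. Either way, the inequality you derive from $M\subseteq P_B$ does not match the one you claim; you should flag and resolve the sign convention rather than assert the final direction.
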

Note that the same Linear Programming -- Belief Progagation is known under the name of  ``Basic Linear Programming Relaxation" in the community analyzing Constrain Satisfaction Problems.  See, e.g. \cite{15KTZ} and references therein. 

\subsection{Variational Belief Propagation in the Soft Model}

\begin{definition}[Soft Multi-GM]
	\label{def:MGM-soft}
	If $\forall a\in{\cal V},\ \forall\sigma_a\in\Sigma_a:\ f_a(\sigma_a)>0$, the Multi-GM  is called soft.
\end{definition}

\begin{theorem}[Variational BP of Soft Multi-GM -- in the spirit of Proposition 6 of \cite{05YFW}]
	\label{theorem:Soft-VBP}
	Minimum in Eq.~(\ref{eq:vbp}) is achieved within the interior of $\Pi$ in the case of soft Multi-GM.
\end{theorem}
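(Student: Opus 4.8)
The plan is to show that the Bethe Free Energy $F^{(bp)}$ has an infinite inward-pointing slope at every boundary point of $\Pi$, so that its minimum over the compact polytope $\Pi$ cannot be attained on $\partial\Pi$. First I would record the existence of a minimizer: by softness, $\log f_a(\sigma_a)$ is finite for every $a$ and every $\sigma_a$, so $E^{(bp)}(b)$ is linear, hence continuous, in $b$; with the usual convention $0\log 0=0$ the scalar map $t\mapsto t\log t$ is continuous on $[0,1]$, so $S^{(bp)}(b,\beta)$ is continuous on $\Pi$; therefore $F^{(bp)}=E^{(bp)}-S^{(bp)}$ is continuous on the compact set $\Pi$ and attains its minimum at some $(b^\ast,\beta^\ast)\in\Pi$. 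Softness is exactly what excludes the $0\cdot(-\infty)$ pathology here, and it is also what will keep the minimum off the boundary.

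Next I would reduce to a single vanishing coordinate. The relative interior of $\Pi$ consists of those $(b,\beta)$ with $b_a(\sigma_a)>0$ for all $a,\sigma_a$ and $\beta_\alpha\in(0,1)$ for all $\alpha$; conversely, by the consistency constraint $\sum_{\sigma_a:\,\sigma_{u(\alpha_d)}=1}b_a(\sigma_a)=\beta_\alpha$ from Eq.~(\ref{eq:Pi}), having $\beta^\ast_\alpha\in\{0,1\}$ forces $b^\ast_a(\sigma_a)=0$ for some $a\in v(\alpha)$ and some $\sigma_a$. Hence, if $(b^\ast,\beta^\ast)$ is not in the interior of $\Pi$, there exist $a_0$ and $\sigma_{a_0}$ with $b^\ast_{a_0}(\sigma_{a_0})=0$, and it suffices to exhibit a feasible direction out of $(b^\ast,\beta^\ast)$ along which $F^{(bp)}$ strictly decreases.

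For that I would move along the segment $(b^t,\beta^t)\doteq(1-t)(b^\ast,\beta^\ast)+t(b^0,\beta^0)$, $t\in[0,1]$, toward a fixed interior point $(b^0,\beta^0)$ of $\Pi$ (for instance the product-form uniform point, with $b^0_a$ constant on $\Sigma_a$ and $\beta^0_\alpha=1/2$, which lies in the relative interior of $\Pi$); convexity of $\Pi$ keeps the whole segment inside $\Pi$. Along it, the entropy contribution $b^t_{a_0}(\sigma_{a_0})\log b^t_{a_0}(\sigma_{a_0})=tc\log(tc)$, with $c\doteq b^0_{a_0}(\sigma_{a_0})>0$, which enters $F^{(bp)}=E^{(bp)}-S^{(bp)}$ with a $+$ sign, has $t$-derivative $c\bigl(\log t+\log c+1\bigr)\to-\infty$ as $t\to0^+$; every other summand of $t\mapsto F^{(bp)}(b^t,\beta^t)$ has $t$-derivative bounded near $t=0$, since the $b\log b$ terms with $b^\ast_a(\sigma_a)>0$ and the $\beta\log\beta$ and $(1-\beta)\log(1-\beta)$ terms with $\beta^\ast_\alpha\in(0,1)$ are smooth there, any residual boundary $\beta$-terms only add a further $-\infty$ of the same sign, and $E^{(bp)}(b^t)$ is affine in $t$ with finite slope by softness. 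Hence $\tfrac{d}{dt}F^{(bp)}(b^t,\beta^t)\to-\infty$ as $t\to0^+$, so $F^{(bp)}(b^t,\beta^t)<F^{(bp)}(b^\ast,\beta^\ast)$ for all sufficiently small $t>0$, contradicting the minimality of $(b^\ast,\beta^\ast)$. Therefore every minimizer of $F^{(bp)}$ over $\Pi$ lies in the interior of $\Pi$.

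I expect the only genuine obstacle to be bookkeeping: tracking the sign of each entropy term through $F^{(bp)}=E^{(bp)}-S^{(bp)}$, and keeping the perturbation feasible despite the consistency constraints coupling $b$ and $\beta$. Both are sidestepped by perturbing along the straight line toward a fixed interior point rather than along a single coordinate. The scheme is the multi-graph counterpart of the argument behind Proposition~6 of \cite{05YFW}.
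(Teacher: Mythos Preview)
Your argument contains a genuine gap. The assertion ``any residual boundary $\beta$-terms only add a further $-\infty$ of the same sign'' is false: in the Bethe Free Energy the node-entropy terms $+\sum_{a,\sigma_a} b_a(\sigma_a)\log b_a(\sigma_a)$ and the edge-entropy terms $-\sum_\alpha\bigl(\beta_\alpha\log\beta_\alpha+(1-\beta_\alpha)\log(1-\beta_\alpha)\bigr)$ enter with \emph{opposite} signs (this opposite sign is the very essence of the Bethe correction, and it is exactly how the paper counts in its own proof sketch). Hence, along your segment, a boundary $\beta_\alpha^\ast\in\{0,1\}$ contributes $+\infty$ to $\tfrac{d}{dt}F^{(bp)}$, not $-\infty$; it \emph{fights} the node contributions. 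Moving toward the uniform interior point does not guarantee that the node divergences dominate. Concretely, take the Multi-GM on $K_5$ (five degree-$4$ factor nodes, ten edges) and the boundary vertex with all $\beta_\alpha^\ast=0$ (so $b_a^\ast(0_a)=1$, all other $b_a^\ast=0$). Along the segment toward the uniform point the coefficient of $\log t$ in $\tfrac{d}{dt}F^{(bp)}$ is $5\cdot\tfrac{15}{16}-10\cdot\tfrac12=-\tfrac{5}{16}<0$, so $\tfrac{d}{dt}F^{(bp)}\to+\infty$ and $F^{(bp)}$ \emph{increases} along your chosen direction. Your proof therefore does not rule out such boundary points.

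The paper's proof avoids this trap by \emph{not} moving toward a generic interior point. It first treats a single boundary edge $\alpha$ with $\beta_\alpha^\ast=0$ and chooses the perturbation direction so that only $\beta_\alpha$ moves (to $\epsilon$), all other $\beta_\gamma$ stay fixed, and at each endpoint of $\alpha$ exactly one belief becomes $O(\epsilon)$. This yields precisely two $+\epsilon\log\epsilon$ node contributions against one $-\epsilon\log\epsilon$ edge contribution, netting $\epsilon\log\epsilon<0$. Only after establishing that no $\beta_\alpha^\ast$ lies on the boundary does the paper conclude (via finiteness of the dual variables $x$) that no $b_a^\ast(\sigma_a)$ vanishes either. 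Your ``move along a straight line to a fixed interior point'' shortcut collapses this two-step argument into one and loses exactly the sign control that makes it work; to repair it you must either emulate the paper's edge-by-edge perturbation or otherwise prove that for every boundary minimizer one can choose an interior target so that the weighted count of vanishing $b$-coordinates strictly exceeds that of vanishing $\beta$-coordinates.
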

\begin{proof}
	The theorem is proved in three steps: first, one shows that the minimum in Eq.~(\ref{eq:vbp}) cannot be achieved at $\beta$ such that at one edge, $\alpha$ (at least one edge), $\beta_\alpha$ is exactly zero or one; (b) given (a) one checks explicitly that when all factors are soft (and thus no terms in the multi-linear polynomials $h_a$ are zero) the minimum over $x$ in Eq.~(\ref{eq:vbp-p}) is achieved at a finite $x\in \mathbb{R}_+^{{\cal E}_d}$; finally, given that the Lagrangian multipliers, $x$,  for the edge and node belief consistency are all finite, the minimum in Eq.~(\ref{eq:vbp}) can only be achieved at $\forall a\in{\cal V},\ \forall\sigma_a\in\Sigma_a:\quad b_a(\sigma_a)\in ]0;1[$. 
	
	Therefore, only the first step is left to be proven. We present here only a sketch of the proof. Assume that $\exists \alpha\in{\cal E}$ such that the minimum in Eq.~(\ref{eq:vbp})  is achieved at  $\beta_\alpha=0$. Our strategy consists in showing that one can find a direction from the point on the polytope boundary towards interior along which $-\log {\cal Z}(b)$ will decrease, thus arriving at a contradiction. Indeed, when $\beta_\alpha=\epsilon>0$ with $\epsilon\to 0$, one derives that according to the belief consistency relations, i.e. equality relations between beliefs embedded  in the definition (\ref{eq:Pi}) of the polytope $\Pi$, all $b_a(\sigma_a)=O(\epsilon)$ where $\forall a\in v(\alpha)$, $\sigma_a$ is consistent with $\sigma_\alpha=1$.  Moreover, one may redistribute the $O(\epsilon)$ perturbations over these  $b_a(\sigma_a)$ such that $\beta_\gamma$, where $\gamma\neq \alpha$, do not depend on the $\epsilon$-perturbation at all. On the other hand,   $\epsilon$-corrections to $-\log {\cal Z}(b)$ are $O(\epsilon\log\epsilon)$. The corrections  originate from the entropy contributions associated with $O(\epsilon)$ beliefs of two types --- associated with $\beta_\alpha$ and associated with the respective $b_a(\sigma_a)$.  Accurate counting of the contributions results in the overall $\epsilon\log\epsilon$ correction to $-\log {\cal Z}(b)$, where $2\epsilon\log\epsilon$ term comes from the two  $b_a(\sigma_a)=O(\epsilon)$ contributions and one $-\epsilon\log\epsilon$ term comes from the single $\beta_\alpha$ contribution. The resulting, $\epsilon\log\epsilon$,  is negative and it decreases with increase in $\epsilon$,  thus leading to the contradiction.  Similar consideration, now with, $\beta_\alpha=1-\epsilon$, where $\epsilon>0$, $\epsilon\to 0$,  results in the statement that at the optimum $\beta_\alpha$ cannot be equal to unity.
\end{proof}

Note that solution of Eq.~(\ref{eq:vbp}) can be on the boundary of the $\Pi$ polytope if the Multi-GM is hard. See \cite{10WC,14Lel} for discussion of special hard cases, e.g. of the perfect matching problem, where solution is achieved at the boundary of $\Pi$.

Theorem \ref{theorem:Soft-VBP} guarantees that an infinitesimally weak softening of a hard model (achieved by adding an infinitesimal positive correction to $f_a(\sigma_a)=0$ factors) shifts a solution of the optimization (\ref{eq:Fbp}) into the interior of the polytope.  We will use this softening feature of Multi-GM later in Section \ref{subsec:BP-G} to relate Variation Belief Propagation formulations and solutions discussed in this Section to the Gauge Transformation and Belief Propagation Equations we are switching out attention to in the next Section.

\section{Gauge Transformation and Belief Propagation Equations}
\label{sec:Gauge Transformation-BP}

\subsection{Gauge Transformation}
\label{subsec:Gauge Transformation}

\begin{definition}[Gauge Transformation, \cite{06CCa,06CCb}]
Gauge Transformation is a multi-linear transformation of the GM factors:
\begin{eqnarray}
&& \forall a \in {\cal V},\quad \forall \sigma_a \in \Sigma_{a}^{(0)}: \quad f_a(\sigma_a) \mapsto \tilde{f}_a(\sigma_a|G) \doteq \sum_{\varsigma_a \in \Sigma_{a}} f_a(\varsigma_a) \prod_{\alpha \in {\cal E}_{\rm d}(a)}G_{\alpha}(\sigma_\alpha,\varsigma_{\alpha})
\label{eq:Gauge Transformation},
\nonumber
\end{eqnarray}
which keeps the Partition Function invariant; that is, 
\begin{eqnarray}
\forall G:\quad && Z = \sum_{\sigma \in S} \prod_{a\in{\cal V}} f_a(\sigma_a) = \sum_{\sigma \in S^{(0)}} \prod_{a\in{\cal V}} \tilde{f}_a(\sigma_a|G)  = \sum_{\sigma \in S^{(0)}} z(\sigma|G), \nonumber \\ &&
z(\sigma|G) \doteq \sum_{\varsigma \in S^{(0)}} \prod_{a \in \cal{V}} f_a(\varsigma_a)
\prod_{\alpha \in {\cal E}_{\rm d}(a)} G_{\alpha}(\sigma_{\alpha},\varsigma_{\alpha}),
\label{eq:Z_inv}
\end{eqnarray}
where  $\varsigma_a \in \Sigma_{a}^{(0)} \doteq \{0, 1\}^{{\cal E}_{\rm d}(a)}$.
\end{definition}
It is straightforward to check that Eq.~(\ref{eq:Z_inv}) holds if the following condition is met.
\begin{theorem}[Orthogonality of Gauge Transformation \cite{06CCa,06CCb}]
Gauge Transformation $2\times 2$ (in the case of a binary alphabet) matrices satisfy
\begin{eqnarray}
\forall \alpha\in{\cal E},\quad  G_{\alpha_d}^T*G_{\bar{\alpha}_d}=\mathbb{1}_{\alpha},
\label{eq:GC}
\end{eqnarray}
where $\alpha_d$ and $\bar{\alpha}_d$ mark two directed siblings of $\alpha$, and the matrices, $G_{\alpha_d}$ and $G_{\bar{\alpha}_d}$ are non-singular with real-valued components. 
\end{theorem}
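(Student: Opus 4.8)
The plan is to show that the orthogonality relation (\ref{eq:GC}) is precisely what the defining invariance $\sum_{\sigma\in S^{(0)}}z(\sigma|G)=Z$ of Eq.~(\ref{eq:Z_inv}) imposes on the matrices, so that a transformation of the form preceding Eq.~(\ref{eq:Z_inv}) is $Z$-preserving if and only if each pair of directed siblings satisfies $G_{\alpha_d}^{T}*G_{\bar{\alpha}_d}=\mathbb{1}_{\alpha}$. First I would substitute $z(\sigma|G)=\prod_{a\in{\cal V}}\tilde f_a(\sigma_a|G)$ and expand, keeping a separate node-internal summation variable $\varsigma_a$ at every node. Writing $\varsigma^{a}_{\alpha}$ for the value the configuration $\varsigma_a$ assigns to an edge $\alpha\in{\cal E}_{\rm d}(a)$, and grouping the matrix factors by undirected edge $\alpha$ with endpoints $v(\alpha)=\{a,b\}$, the sum reorganizes as
\begin{eqnarray}
\sum_{\sigma\in S^{(0)}} z(\sigma|G)=\sum_{\{\varsigma_a\}}\left(\prod_{a\in{\cal V}} f_a(\varsigma_a)\right)\prod_{\alpha\in{\cal E}}\ \sum_{\sigma_\alpha\in\{0,1\}}G_{\alpha_d}(\sigma_\alpha,\varsigma^{a}_{\alpha})\,G_{\bar{\alpha}_d}(\sigma_\alpha,\varsigma^{b}_{\alpha}).\nonumber
\end{eqnarray}

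The decisive step is the innermost sum over the single shared edge variable $\sigma_\alpha$. On the binary alphabet the two sibling matrices share exactly this index, so the sum collapses to a matrix product,
\begin{eqnarray}
\sum_{\sigma_\alpha\in\{0,1\}}G_{\alpha_d}(\sigma_\alpha,s)\,G_{\bar{\alpha}_d}(\sigma_\alpha,t)=\left(G_{\alpha_d}^{T}*G_{\bar{\alpha}_d}\right)_{s,t},\nonumber
\end{eqnarray}
with $s,t$ the two node-internal copies of the edge variable (internal copies at the two endpoints of a normal edge, or two internal copies at a single node for a self-edge). Substituting back, the gauge-transformed partition function is a sum over \emph{independent} node-local copies of every edge variable, coupled only through the per-edge matrices $G_{\alpha_d}^{T}*G_{\bar{\alpha}_d}$, whereas the original $Z=\sum_{\varsigma}\prod_{a} f_a(\varsigma_a)$ carries a single \emph{shared} variable on each edge. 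The two agree precisely when each coupling reduces to a Kronecker delta, $\left(G_{\alpha_d}^{T}*G_{\bar{\alpha}_d}\right)_{s,t}=\delta_{s,t}$.

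I would close the argument by invoking the arbitrariness of the factors. Since a gauge transformation must leave $Z$ invariant as an identity in the factors $\{f_a\}$ (the transformation is required to be $Z$-preserving for the model and, more strongly, for the whole family of models built on ${\cal G}$), the two displayed expressions must coincide coefficient-by-coefficient in the $f_a$. Testing against factors supported on a single configuration isolates one edge at a time and forces $\left(G_{\alpha_d}^{T}*G_{\bar{\alpha}_d}\right)_{s,t}=\delta_{s,t}$ for every $\alpha\in{\cal E}$ and every pair $(s,t)$, i.e. $G_{\alpha_d}^{T}*G_{\bar{\alpha}_d}=\mathbb{1}_{\alpha}$. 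Conversely, inserting this identity telescopes every edge contraction to a delta, merges the two node-copies of each edge variable, and returns $\sum_{\varsigma}\prod_a f_a(\varsigma_a)=Z$; hence the condition is also sufficient, recovering the ``straightforward check'' of Eq.~(\ref{eq:Z_inv}).

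The step I expect to be the main obstacle is making the coefficient-by-coefficient comparison rigorous: one must rule out that cancellations across different edges conspire to preserve $Z$ without each edge factor being separately the identity. The clean resolution is that the edge matrices couple \emph{disjoint} pairs of node-internal variables, so the global coupling factorizes over ${\cal E}$; probing with rank-one factors then decouples the edges and pins down each $G_{\alpha_d}^{T}*G_{\bar{\alpha}_d}$ individually. Self-edges require only the notational adjustment that $s,t$ are internal copies belonging to the same node, after which the conclusion is identical. Finally, non-singularity is not an added hypothesis but a consequence---$G_{\alpha_d}^{T}*G_{\bar{\alpha}_d}=\mathbb{1}_{\alpha}$ forces both siblings to be invertible---and restricting to real entries simply keeps the transformed factors $\tilde f_a$ real-valued.
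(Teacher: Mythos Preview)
Your argument is correct. The paper does not supply a detailed proof; it only states that ``it is straightforward to check that Eq.~(\ref{eq:Z_inv}) holds if the following condition is met,'' i.e., it asserts sufficiency of the orthogonality relation for $Z$-invariance. Your expansion of $\sum_{\sigma} z(\sigma|G)$, regrouping by undirected edge, and identification of the inner sum $\sum_{\sigma_\alpha} G_{\alpha_d}(\sigma_\alpha,s)\,G_{\bar{\alpha}_d}(\sigma_\alpha,t)=(G_{\alpha_d}^{T}G_{\bar{\alpha}_d})_{s,t}$ \emph{is} precisely that straightforward check, so on the sufficiency side your approach coincides with the paper's.

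You go beyond the paper by also arguing necessity: that if the invariance is to hold as an identity in the factors $\{f_a\}$, then each edge's coupling matrix must individually be the identity. This is a genuine addition---the paper neither states nor proves this direction---and your factorization-over-edges argument combined with probing by rank-one factors is the right way to do it. One small caveat: the paper's phrasing of the theorem is ambiguous as to whether necessity is being claimed at all, so your extra work may be establishing more than the statement strictly requires; but it is correct and clarifies the logical status of the condition.
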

To lift the gauge-constraint (\ref{eq:GC}), one introduces the following explicit representation for $G$.
\begin{definition}[Polynomial, $x$-, Representation of Gauges]
We call the following representation for $G$, polynomial- or $x$-representation.
\begin{eqnarray}
G = (G_{\alpha} \, | \, \alpha \in {\cal E}) \quad G_{\alpha} = \frac{1}{(x_{\alpha_d}x_{\bar{\alpha}_d})^{1/4} \sqrt{1 + x_{\alpha_d}x_{\bar{\alpha}_d}}} \left(\begin{array}{cc} \sqrt{x_{\bar{\alpha}_d}} & x_{\alpha_d} \sqrt{x_{\bar{\alpha}_d}}\\ -x_{\bar{\alpha}_d} \sqrt{x_{\alpha_d}} & \sqrt{x_{\alpha_d}}\end{array}\right),
\label{eq:G-x}
\end{eqnarray}
where the vector $x$ is positive component-wise, i.e., $x_{\alpha} > 0, \, \forall \alpha \in {\cal E}_{\rm d}$.
\end{definition}
A number of remarks are in order. 
\begin{itemize}
    \item $x$-representation for $G$ (\ref{eq:G-x}) satisfies Eq.~(\ref{eq:GC}) automatically (by construction). 
    \item The trivial, $G  =\mathbb{1}$, case is recovered in the $(x_{\alpha_{d}} = x_{\bar{\alpha}_{d}}) \to 0$ limit. 
    \item Emergence of negative components in the matrix (in the lower left corner of the representation of Eq. (\ref{eq:G-x})) is unavoidable in order to ensure validity of Eq.~(\ref{eq:GC}). 
    \item Parameterized according to Eq.~(\ref{eq:G-x}), $z(\sigma|G)$, defined in Eq.~(\ref{eq:Z_inv}), adopts the following form:
\begin{eqnarray}
z(\sigma|G)=z(\sigma|x)&=&\left(\prod_{\beta\in{\cal E}} \frac{1}{1+x_{\beta_d}x_{\bar{\beta}_d}}\right) \nonumber\\ &\times& \prod_{a \in {\cal V}} \left(\sum_{\varsigma_a \in \Sigma_{a}^{(0)}} \left(\prod_{\alpha \in {\cal E}_{\rm d}(a)} \left(
x_{\alpha}^{\varsigma_{t(\alpha)}} (x_{\alpha} x_{t(\alpha)})^{(1/2-\varsigma_{\alpha})\sigma_{\alpha}}
(-1)^{(1-\varsigma_{\alpha})\sigma_{\alpha}}\right)\right)f_a(\varsigma_a)\right)
\label{eq:Z_sigma-x}\\ 
&=&  \left(\prod_{\alpha\in{\cal E}}\frac{(x_{\alpha_d} x_{\bar{\alpha}_d})^{\sigma_{\alpha_d}}}{1+x_{\alpha_d}x_{\bar{\alpha}_d}}\right) \prod_{a \in {\cal E}} Q_a (x_a;\sigma_a), \textrm{where} \label{eq:Z_sigma-x-2}\\ 
Q_a (x_a;\sigma_a)&\doteq& \sum_{\varsigma \in S^{(0)}}f_a(\varsigma_a)\prod_{\alpha\in {\cal E}_{\rm d}(a)} \left(x_{\alpha}^{\varsigma_{\alpha}}(-1)^{\sigma_{\alpha}} (-x_{\alpha}x_{t(\alpha)})^{-\varsigma_{\alpha}\sigma_{\alpha}}\right) \label{eq:Q_a}\\
&=& \left(\prod_{\beta_d\in {\cal E}_{\rm d}(a)}^{\sigma_\beta=1} \frac{1+x_{\beta_d}x_{\bar{\beta}_d}}{x_{\beta_d}x_{\bar{\beta}_d}}\right) \sum_{\varsigma_a \in \Sigma_{a}^{(0)}} f_a(\varsigma_a) \prod_{\alpha\in {\cal E}_{\rm d}(a)} \left( x_{\alpha}^{\varsigma_{\alpha}}\left(\varsigma_{\alpha}- \frac{x_{\alpha}x_{t(\alpha)}}{1+x_{\alpha}x_{t(\alpha)}}\right)^{\sigma_{\alpha}}\right).\label{eq:Q_a2}
\end{eqnarray}
Here, we transitioned in our notation from the general gauges $G$ to the $x$-representation; $x_a \doteq (x_{\alpha} \,| \, \alpha \in {\cal E}_{\rm d}(a))$. 
\item $z(\sigma|x)$ is a polynomial in $x$, up to the factor on the left-hand side of the first raw in Eq.~(\ref{eq:Z_sigma-x}) and when  stated in terms of $x$, thus explaining the name chosen for the representation. 
\item $Z=\sum_\sigma z(\sigma|x)$ is a constant; that is, $x$-independent, polynomial in $x$.
\end{itemize}

We observe in the next sections that a  single out $\sigma$-term, say $\sigma=0$ (chosen without loss of generality) and represented as 
\begin{eqnarray}
&& z(x)\doteq \frac{\prod\limits_{a\in{\cal V}} h_{a}(x_{a})}{\prod\limits_{\alpha \in {\cal E}} \left(1 + x_{\alpha_d}x_{\bar{\alpha}_d}\right)}, \label{eq:ZG0}
\end{eqnarray}
where $h_{a}$ is the vertex polynomial, defined in Eq.~(\ref{eq:h_a}), and  $z(x)$ is a short-cut notation for $z(0|x)$ that plays a special role in establishing known and new relations. We call $z(x)$, described by Eq.~(\ref{eq:ZG0}), the Gauge Function of the Multi-GM (\ref{eq:GM}).

\subsection{Belief Propagation Gauges}
\label{subsec:BP-G}

\begin{definition}[Interior Belief Propagation gauge \cite{06CCa,06CCb} for Soft Multi-Graph Models]
\label{def:BP-gauge}
We call a solution $x^{({\rm bp})} \in \mathbb{R}_{+}^{{\cal E}_{\rm d}}$ of the following stationary-point condition for the Gauge Function (\ref{eq:ZG0}) equations of the Soft Multi-Graph Models
\begin{eqnarray}
\forall a \in {\cal V},\quad \forall \alpha \in {\cal E}_{\rm d}(a):  \left.\partial_{x_{\alpha}} z(x)\right|_{x=x^{({\rm bp})}}=0,\label{eq:BP_0}
\end{eqnarray}
a BP gauge. 
\end{definition}

A relation between the Variational BP and BP gauge approaches is established by the following straightforward corollary of Theorem \ref{theorem:Soft-VBP}.
\begin{corollary}[Soft Multi-GM BP gauge optimality]
\label{theorem:BPG-to-VBP} 
In the case of Soft Multi-GM there exists a BP-gauge, $x$, solving Eq.~(\ref{eq:BP_0}) 
\begin{eqnarray}
\label{eq:BP-gauge-VBP} Z^{({\rm vbp})} = z(x),
\end{eqnarray}
where the Variational BP estimate for the partition function was defined in Eq.~(\ref{eq:vbp}).
\end{corollary}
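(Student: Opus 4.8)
\emph{Proof plan.} The plan is to exhibit the required BP-gauge as the inner minimizer over $x$ of the Lagrangian ${\cal L}(\beta,x)$ of Theorem~\ref{theorem:max-min_VBP}, evaluated at the optimal edge-probability vector, and then to check that the first-order conditions of the resulting max--min coincide with the stationary-point equations~(\ref{eq:BP_0}) for the Gauge Function $z(x)=\prod_{a}h_a(x_a)\big/\prod_{\alpha}(1+x_{\alpha_d}x_{\bar\alpha_d})$. First I would invoke Theorem~\ref{theorem:max-min_VBP} to write $Z^{({\rm vbp})}=\sup_{\beta}\min_{x}{\cal L}(\beta,x)$, and Theorem~\ref{theorem:Soft-VBP} to fix an optimal $\beta^{*}$ lying in the open cube $]0;1[^{{\cal E}}$ (softness is used here). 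For this interior $\beta^{*}$ I would argue the inner minimum is attained at a finite $x^{*}\in\mathbb{R}_{+}^{{\cal E}_{\rm d}}$: in the variables $u_{\alpha_d}=\log x_{\alpha_d}$ the $x$-dependent part of $\log{\cal L}(\beta^{*},\cdot)$ is $\sum_{a}\log h_a(x_a)-\sum_{\alpha}\beta^{*}_{\alpha}\log(x_{\alpha_d}x_{\bar\alpha_d})$, which is convex (softness makes each $h_a$ a positive-coefficient, hence log-convex, polynomial; in fact strictly log-convex since the monomial exponents at a node affinely span), and coercive precisely because $0<\beta^{*}_{\alpha}<1$ drives it to $+\infty$ as any $x_{\alpha_d}\to 0$ or $x_{\alpha_d}\to\infty$. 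Strict convexity gives uniqueness of $x^{*}=x^{*}(\beta^{*})$, so Danskin's theorem makes $\beta\mapsto\min_{x}{\cal L}$ differentiable and $\beta^{*}$ being an interior maximizer yields clean first-order conditions (alternatively, $x^{*}$ and these conditions can be read off directly from the Lagrangian-duality derivation~(\ref{eq:Ebp-r1})--(\ref{eq:L}) at the interior optimum, bypassing Danskin).

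The two families of stationarity conditions I would then record are: from $\partial_{x_{\alpha_d}}{\cal L}=0$ at $x^{*}$, the balance $x^{*}_{\alpha_d}\,\partial_{x_{\alpha_d}}h_a(x^{*}_a)/h_a(x^{*}_a)=\beta^{*}_{u(\alpha_d)}$ for the node $a$ with $\alpha_d\in e_d(a)$; and from the outer (envelope) condition $\partial_{\beta_{\alpha}}{\cal L}=0$, the relation $x^{*}_{\alpha_d}x^{*}_{\bar\alpha_d}=\beta^{*}_{\alpha}/(1-\beta^{*}_{\alpha})$. Differentiating $\log z(x)=\sum_{a}\log h_a(x_a)-\sum_{\alpha}\log(1+x_{\alpha_d}x_{\bar\alpha_d})$ with respect to $u_{\alpha_d}$ and substituting both conditions gives $x^{*}_{\alpha_d}\,\partial_{x_{\alpha_d}}\log z(x^{*})=\beta^{*}_{\alpha}-x^{*}_{\alpha_d}x^{*}_{\bar\alpha_d}/(1+x^{*}_{\alpha_d}x^{*}_{\bar\alpha_d})=\beta^{*}_{\alpha}-\beta^{*}_{\alpha}=0$; since softness forces $z(x^{*})>0$ and $x^{*}_{\alpha_d}>0$, this is exactly~(\ref{eq:BP_0}), so $x^{*}$ is a BP-gauge. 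Finally I would evaluate $z(x^{*})$ using $1+x^{*}_{\alpha_d}x^{*}_{\bar\alpha_d}=1/(1-\beta^{*}_{\alpha})$ and $x^{*}_{\alpha_d}x^{*}_{\bar\alpha_d}=\beta^{*}_{\alpha}/(1-\beta^{*}_{\alpha})$: in $\log z(x^{*})=\sum_{a}\log h_a(x^{*}_a)+\sum_{\alpha}\log(1-\beta^{*}_{\alpha})$ and in $\log Z^{({\rm vbp})}=\log{\cal L}(\beta^{*},x^{*})$ the per-edge contributions $\beta^{*}_{\alpha}\log\beta^{*}_{\alpha}+(1-\beta^{*}_{\alpha})\log(1-\beta^{*}_{\alpha})-\beta^{*}_{\alpha}\log(x^{*}_{\alpha_d}x^{*}_{\bar\alpha_d})+\log(1+x^{*}_{\alpha_d}x^{*}_{\bar\alpha_d})$ collapse to $0$, leaving $z(x^{*})=Z^{({\rm vbp})}$, which is the claim.

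The main obstacle I anticipate is the soft-model analysis of the second half of the first paragraph: showing that both the inner $\min_{x}$ and the outer $\sup_{\beta}$ are genuinely attained, at a finite interior point, with enough regularity (uniqueness of $x^{*}(\beta^{*})$, differentiability of $\beta\mapsto\min_x{\cal L}$) to legitimately write the envelope condition. Once the saddle point $(\beta^{*},x^{*})$ and its first-order conditions are in hand, the remaining steps are a short, direct computation matching the logarithmic gradient of $z$ with those conditions and then cancelling the per-edge terms. I would also emphasize that this argument uses nothing beyond what is already granted — softness (strict positivity of every coefficient of $h_a$ and of $z$ on $\mathbb{R}_{+}^{{\cal E}_{\rm d}}$) together with Theorem~\ref{theorem:Soft-VBP} (interiority of $\beta^{*}$) — and that the hard case, where the optimum sits on the boundary of $\Pi$, is deliberately excluded.
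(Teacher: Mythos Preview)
Your proposal is correct and is precisely the natural unpacking of what the paper leaves unwritten: the paper states the result as a ``straightforward corollary of Theorem~\ref{theorem:Soft-VBP}'' and gives no proof, while the proof of Theorem~\ref{theorem:Soft-VBP} already records (step~(b)) that in the soft case the inner $\min_{x}$ in Eq.~(\ref{eq:vbp-p}) is attained at a finite $x\in\mathbb{R}_{+}^{{\cal E}_{\rm d}}$. Your route via the max--min representation of Theorem~\ref{theorem:max-min_VBP}, the interiority of $\beta^{*}$ from Theorem~\ref{theorem:Soft-VBP}, and the two first-order conditions $x^{*}_{\alpha_d}\partial_{x_{\alpha_d}}h_a/h_a=\beta^{*}_{\alpha}$ and $x^{*}_{\alpha_d}x^{*}_{\bar\alpha_d}=\beta^{*}_{\alpha}/(1-\beta^{*}_{\alpha})$, followed by the direct check that $\partial_{x_{\alpha_d}}\log z(x^{*})=0$ and $z(x^{*})={\cal L}(\beta^{*},x^{*})=Z^{({\rm vbp})}$, is exactly the computation the paper has in mind.
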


Given that $z(x)$ is differentiable in $x$, BP-equations (\ref{eq:BP_0}) are well defined. Explicit version of Eq.~(\ref{eq:BP_0}), derived from Eq.~(\ref{eq:ZG0}), is: 
\begin{eqnarray}
\forall a,\quad \forall \alpha_d\in e_d(a):\quad \sum_{\varsigma_a} f_a(\varsigma_a)\left(\prod_{\beta_d\in e_d(a)} (x_{\beta_d}^{(bp)})^{\varsigma_{\beta_d}}\right)\left(\frac{x_{\alpha_d}^{(bp)}x_{\bar{\alpha}_d}^{(bp)}}{1+x_{\alpha_d}^{(bp)}x_{\bar{\alpha}_d}^{(bp)}}-\varsigma_{\alpha}\right)=0. \label{eq:BP}
\end{eqnarray}

\begin{theorem}[BP-Solution as the ``No Loose Coloring'' Condition \cite{06CCa,06CCb}]
\label{theorem:no-coloring}
The BP Eq.~(\ref{eq:BP_0}), or equivalently Eq.~(\ref{eq:BP}) can be restated in terms of the following conditions:
\begin{eqnarray}
\forall a,\quad \sum_{\alpha_d\in e_d(a)} \sigma_{\alpha_d}=1:\quad Q_a(x_a;\sigma_a)=0.
\label{eq:BP_color}
\end{eqnarray}
\end{theorem}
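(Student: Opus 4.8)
The plan is to convert the stationarity condition~(\ref{eq:BP_0}) into the explicit form~(\ref{eq:BP}) by a direct differentiation of the Gauge Function, and then to recognize the left-hand side of~(\ref{eq:BP}) as the vertex quantity $Q_a$ evaluated on a weight-one coloring, up to a scalar that never vanishes on the admissible domain. First I would differentiate~(\ref{eq:ZG0}): fix $a\in{\cal V}$ and a directed edge $\alpha_d\in e_d(a)$; the coordinate $x_{\alpha_d}$ enters $z(x)$ only through $h_a(x_a)$ and through the single denominator factor $1+x_{\alpha_d}x_{\bar{\alpha}_d}$ attached to the undirected edge $u(\alpha_d)$. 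Since the Multi-GM is soft (Definition~\ref{def:MGM-soft}) and $x\in\mathbb{R}_{+}^{{\cal E}_{\rm d}}$, we have $h_a(x_a)>0$ and $z(x)>0$, so~(\ref{eq:BP_0}) may be divided by $z(x)$ and becomes the logarithmic-derivative identity $\partial_{x_{\alpha_d}}h_a(x_a)/h_a(x_a)=x_{\bar{\alpha}_d}/(1+x_{\alpha_d}x_{\bar{\alpha}_d})$.

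Next I would make this explicit. Because each $\varsigma_{\alpha_d}\in\{0,1\}$, multiplying the identity by $x_{\alpha_d}h_a(x_a)$ and using~(\ref{eq:h_a}) gives
\[
\sum_{\varsigma_a\in\Sigma_a^{(0)}} f_a(\varsigma_a)\Bigl(\prod_{\beta_d\in e_d(a)}x_{\beta_d}^{\varsigma_{\beta_d}}\Bigr)\Bigl(\varsigma_{\alpha_d}-\frac{x_{\alpha_d}x_{\bar{\alpha}_d}}{1+x_{\alpha_d}x_{\bar{\alpha}_d}}\Bigr)=0,
\]
which is exactly Eq.~(\ref{eq:BP}) up to an overall sign; carrying this out for every $a$ and every $\alpha_d\in e_d(a)$ proves the equivalence~(\ref{eq:BP_0})$\,\Leftrightarrow\,$(\ref{eq:BP}) asserted in the statement.

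Finally I would specialize formula~(\ref{eq:Q_a2}) to a coloring $\sigma_a$ with $\sum_{\beta_d\in e_d(a)}\sigma_{\beta_d}=1$, say $\sigma_{\alpha_d}=1$ and $\sigma_\beta=0$ otherwise. Then every factor carrying exponent $\sigma_\beta=0$ collapses to $x_{\beta_d}^{\varsigma_{\beta_d}}$, the prefactor reduces to the single term $(1+x_{\alpha_d}x_{\bar{\alpha}_d})/(x_{\alpha_d}x_{\bar{\alpha}_d})$ (using $x_{t(\alpha)}=x_{\bar{\alpha}_d}$), and the factor attached to $\alpha_d$ inside the sum is $\varsigma_{\alpha_d}-x_{\alpha_d}x_{\bar{\alpha}_d}/(1+x_{\alpha_d}x_{\bar{\alpha}_d})$. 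Hence $Q_a(x_a;\sigma_a)$ equals the left-hand side of the display above multiplied by the nonzero, finite scalar $(1+x_{\alpha_d}x_{\bar{\alpha}_d})/(x_{\alpha_d}x_{\bar{\alpha}_d})$, so $Q_a(x_a;\sigma_a)=0$ is equivalent to the $(a,\alpha_d)$-equation of~(\ref{eq:BP}); ranging over all $a\in{\cal V}$ and all $\alpha_d\in e_d(a)$ then yields the reformulation~(\ref{eq:BP_color}). The whole computation is routine; the only delicate points are the directed/undirected edge bookkeeping --- in particular checking that self-edges, where $\alpha_d$ and $\bar{\alpha}_d$ sit at the same vertex, are covered by the same chain-rule step --- and invoking softness at the right moments so that the divisions by $z(x)$, by $h_a(x_a)$, and by $x_{\alpha_d}x_{\bar{\alpha}_d}$ are all legitimate. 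I expect this (very mild) bookkeeping to be the main obstacle.
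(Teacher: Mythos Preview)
Your proposal is correct and follows essentially the same route as the paper, which simply remarks that the result ``follows from the definition of $Q$ in Eq.~(\ref{eq:Q_a}) or Eq.~(\ref{eq:Q_a2})'': you make this explicit by specializing~(\ref{eq:Q_a2}) to a weight-one coloring and identifying the resulting sum with the left-hand side of~(\ref{eq:BP}) up to the nonvanishing scalar $(1+x_{\alpha_d}x_{\bar{\alpha}_d})/(x_{\alpha_d}x_{\bar{\alpha}_d})$. Your additional derivation of~(\ref{eq:BP}) from~(\ref{eq:BP_0}) via the logarithmic derivative of $z(x)$ reproduces what the paper states (without details) just before the theorem, and your flagged concerns about self-edges and softness are handled exactly as you indicate.
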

This result follows from the definition of $Q$ in Eq.~(\ref{eq:Q_a}) or Eq.~(\ref{eq:Q_a2}). With regard to the name chosen for the Theorem \ref{theorem:no-coloring}, it emphasizes that rewriting BP equations in the form of Eq.~(\ref{eq:BP_color}) highlights interpretation of BP in terms of the ``edge coloring''. Indeed, Eq.~(\ref{eq:BP_color}) enforces cancellation (exact zero) for all $z(\sigma|x)=z(\sigma|G)$ terms in the series on the right-hand side of Eq.~(\ref{eq:Z_inv}), where at least one node, $a$, has one of its neighboring edges, say $\alpha_d\in e_d(a)$, ``colored''; that is, it is set to $\sigma_{\alpha_d}=1$, whereas all other neighboring edges of the node, $\beta_d\in e_d(a),\ \beta_d\neq\alpha_d$, remain ``uncolored'', that is, set to, $\sigma_{\beta_d}=0$.

Solutions of the BP Eq.~(\ref{eq:BP}) (or Eq.~(\ref{eq:BP_color})) also allow a transparent interpretation in terms of the edge-consistent (but graph-globally not consistent) probability distributions. Expressions for the node- and edge- marginal probabilities evaluated at a BP gauge are 
\begin{eqnarray}
&& P^{(bp)}_a(\varsigma_a)\doteq\frac{f_a(\varsigma_a)\prod\limits_{\alpha_d\in e_d(a)}G^{(bp)}_{\alpha_d}(0,\varsigma_{\alpha_d})}{\sum\limits_{\varsigma_a}f_a(\varsigma_a)\prod\limits_{\alpha_d\in e_d(a)}G^{(bp)}_{\alpha_d}(0,\varsigma_{\alpha_d})}=\frac{f_a(\varsigma_a) \prod\limits_{\alpha_d\in e_d(a)}(x_{\alpha_d}^{(bp)})^{\varsigma_{\alpha_d}}}{\sum\limits_{\varsigma_a}f_a(\varsigma_a) \prod\limits_{\alpha_d\in e_d(a)}(x_{\alpha_d}^{(bp)})^{\varsigma_{\alpha_d}}}, \label{eq:P_bp_a}\\
&& P^{(bp)}_{\alpha}(\varsigma_{\alpha_d})\doteq G^{(bp)}_{\alpha_d}(0,\varsigma_{\alpha_d}) G^{(bp)}_{\bar{\alpha}_d}(0,\varsigma_{\alpha_d})=\frac{\left(x_{\alpha_d}^{(bp)} x_{\bar{\alpha}_d}^{(bp)}\right)^{\varsigma_{\alpha_d}}}{1+x_{\alpha_d}^{(bp)} x_{\bar{\alpha}_d}^{(bp)}} .\label{eq:P_bp_alpha}
\end{eqnarray}
Edge-consistency of the marginal beliefs means 
\begin{eqnarray}
&& \forall a,\quad \forall \alpha_d\in e_d(a),\quad \forall \varsigma_{\alpha_d}=\{0,1\}:\nonumber\\
&& P^{(bp)}_{\alpha}(\varsigma_{\alpha_d})=\frac{\left(x_{\alpha_d}^{(bp)} x_{\bar{\alpha}_d}^{(bp)}\right)^{\varsigma_{\alpha_d}}}{1+x_{\alpha_d}^{(bp)} x_{\bar{\alpha}_d}^{(bp)}}=\frac{\sum\limits_{\varsigma_a\setminus \varsigma_{\alpha_d}} f_a(\varsigma_a) \prod\limits_{\beta_d\in e_d(a)}(x_{\beta_d}^{(bp)})^{\varsigma_{\beta_d}}}{\sum\limits_{\varsigma_a}f_a(\varsigma_a) \prod\limits_{\gamma_d\in e_d(a)}(x_{\gamma_d}^{(bp)})^{\varsigma_{\gamma_d}}}=\sum_{\varsigma_a\setminus \varsigma_{\alpha_d}} P^{(bp)}_a(\varsigma_a).
\label{eq:P_bp_consistency}
\end{eqnarray}
Multiplying Eq.~(\ref{eq:P_bp_consistency}) on $\varsigma_{\alpha_d}$ and summing it up over $\varsigma_{\alpha_d}=0,1$, one arrives at the already introduced system of BP  Eqs.~(\ref{eq:BP}). We also note for consistency with earlier notations that
\begin{eqnarray}
\forall \alpha\in{\cal E}: \beta^{(bp)}_\alpha= P^{(bp)}_{\alpha}(1),\quad 
\beta^{(bp)}\doteq \left.\left(\beta^{(bp)}_{\alpha}\right|\alpha\in{\cal E}\right). \label{eq:beta-BP}
\end{eqnarray}
Notice that, consistently with the Corollary \ref{theorem:BPG-to-VBP}, Soft Multi-GM may have multiple (more then one) BP-gauges solving Eqs.~(\ref{eq:BP_0}). 

\section{Elimination of Edges} 
\label{sec:elim}

\begin{figure}
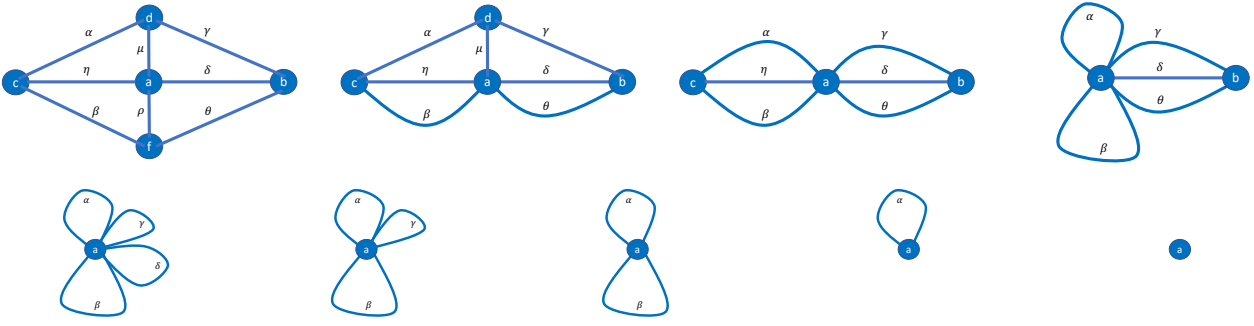

	\centering 
	\includegraphics[width=1.73in,page=4]{figs/MF-GM.pdf}
	\includegraphics[width=1.73in,page=5]{figs/MF-GM.pdf}
	\includegraphics[width=1.73in,page=6]{figs/MF-GM.pdf}
	\includegraphics[width=1.73in,page=1]{figs/MF-GM.pdf}
	\includegraphics[width=1.38in,page=7]{figs/MF-GM.pdf}
	\includegraphics[width=1.38in,page=8]{figs/MF-GM.pdf}
	\includegraphics[width=1.38in,page=9]{figs/MF-GM.pdf}
	\includegraphics[width=1.38in,page=10]{figs/MF-GM.pdf}
	\includegraphics[width=1.38in,page=11]{figs/MF-GM.pdf}
	\caption{Graph transformation via elimination of edges is demonstrated through a sequence of sub-figures (from left to right, top to bottom). Each step results in elimination of an edge.  Eliminations of edges of two types are possible: (1) elimination of an edge connecting two nodes, e.g., as seen eliminating edges $\rho,\mu,\eta$ and $\delta$ in the first four steps in the sequence, and (2) elimination of self-edges, as seen eliminating edges $\delta,\gamma,\beta,\alpha$. \label{fig:elimination}}
\end{figure}

Assume some ordering of the graph edges,
\begin{eqnarray}
m=1,\cdots,|{\cal E}|: \alpha^{(1)},\cdots,\alpha^{(|{\cal E}|)}
\label{eq:edge-ord}
\end{eqnarray}
and consider summing up over (i.e., eliminating) edges in the expression for the Partition Function (\ref{eq:GM}) one-by-one according to this order, therefore naturally arriving at the sequence of graphs, ${\cal G}^{(0)}=({\cal V}^{(0)},{\cal E}^{(0)}),\cdots, {\cal G}^{(|{\cal E}|)}=({\cal V}^{(|{\cal E}|)},{\cal E}^{(|{\cal E}|)})$ such that 
$\forall m=1,\cdots,|{\cal E}|:\quad
{\cal E}^{(m)}\doteq {\cal E}^{(m-1)}\setminus \alpha^{(m)}$. Each next graph in the sequence has one less number of edges and the same or one less number of nodes than its predecessor, see Fig.~(\ref{fig:elimination}) for illustration. (If the number of nodes at elementary step of the sequence decreases by one,  then one of the two merged nodes, chosen arbitrarily is removed from the resulting/new set of nodes.) Then we define a sequence of Multi-GMs, on the sequence of graphs just defined, as follows
\begin{eqnarray}
& m=0: & \sigma^{(0)}=\sigma,\quad p^{(0)}(\sigma^{(0)})\doteq p(\sigma),
\nonumber \\
& \forall m=1,\cdots,|{\cal E}|:\quad &
\sigma^{(m)}=\sigma^{(m-1)}\setminus \sigma_{\alpha^{(m)}},\quad
p^{(m)}(\sigma^{(m)})\doteq \frac{f^{(m)}(\sigma^{(m)})}{Z},\quad f^{(m)}(\sigma^{(m)}) \doteq \prod_{a\in{\cal V}^{(m)}} f_a^{(m)}(\sigma_a^{(m)}),\nonumber\\ &&
\forall a\in {\cal V}^{(m)},\ a\notin v(\alpha^{(m)}):\quad f_a^{(m)}(\sigma_a^{(m)})\doteq f_a^{(m-1)}(\sigma_a^{(m-1)}),\nonumber\\
&& a\in {\cal V}^{(m)}, v(\alpha^{(m)}),{\cal V}^{(m-1)}:\quad 
f_a^{(m)}(\sigma_a^{(m)})\doteq \sum_{\sigma_{\alpha^{(m)}}} \prod_{b\in v(\alpha^{(m)}}f_b^{(m-1)}(\sigma_b^{(m-1)})
\label{eq:GM-m}
\end{eqnarray}
where $v(\alpha^{(m)})$ is defined as a set of nodes in ${\cal V}^{(m-1)}$ associated with the edge $\alpha^{(m)}$ (there may be one or two of these depending on if the edge is a self-edge or edge linking two distinct nodes). Notice also that the elimination procedure just introduced is such that the node picked in the last condition (last line) in Eq.~(\ref{eq:GM-m}) is defined uniquely.

In the following, we state a number of remarks about the elimination sequence.
\begin{itemize}
\item Notice that by construction, the Partition Function stays the same for all the Multi-GMs in the sequence. The following relations elucidate this point, $m=1,\cdots,|{\cal E}|$
\begin{eqnarray}
Z =\sum\limits_{\sigma} f(\sigma)=
\sum \limits_{\sigma_{\alpha^{(|{\cal E}|)}}}\cdots \sum \limits_{\sigma_{\alpha^{(1)}}} 
\prod_{a\in{\cal V}} f_a(\sigma_a)=\sum \limits_{\sigma_{\alpha^{(|{\cal E}|)}}}\cdots \sum \limits_{\sigma_{\alpha^{(m)}}} 
\prod_{a\in{\cal V}} f_a(\sigma_a)=\sum\limits_{\sigma^{(m)}} f^{(m)}(\sigma^{(m)}).
\label{eq:Z-inv-m}
\end{eqnarray}

\item As illustrated in Fig.~(\ref{fig:elimination}), the elimination procedure may lead to double-, triple-, and in general multiple-edges connecting the same nodes, and it may also result in self-edges, even if the original graph is a normal/simple graph. In fact,  this observation explains why we choose to work in this manuscript with the general Multi-GMs.

\item Even though we are free to choose any edge-elimination sequence, it is reasonable to eliminate, first, all normal edges (having two distinct nodes associated with the edge), as in the illustrative example of Fig.~(\ref{fig:elimination}). This results in a ``bouquet'' (of self-edges) graph containing a single node and multiple self-edges. Notice that if the original graph is a tree the resulting bouquet graph contains no self-edges, and then in this case the elimination sequence is completed. 
It is straightforward to check that the number of self-edges in the bouquet is invariant of the the normal portion of the edge elimination procedure, i.e. it does not depend on the order of the normal edge eliminations. Moreover this number is exactly equal to the number of edge cuts one needs to apply to the original graph to turn it into a tree. In the following (and unless specify otherwise) we will be assuming that the normal edges are eliminated in the elimination sequence first. 
Significance of the bouquet graph for our procedure will become clear in the following when we analyze application of the BP procedure to Multi-GMs from the sequence.

\item Obviously the exact elimination (\ref{eq:GM-m}) is not practical because the factor degree and most importantly the complexity of computing the factors grow exponentially with $m$. 

\item A number of approximate elimination schemes was introduced in the past to bypass the hardness of the exact computations of $Z$. Of these approximations the mini-bucket elimination schemes \cite{dechter2003mini,liu2011bounding,18ACSW,18ACWS} are arguably the most popular and also related to the gauge Gauge Transformation and BP subjects.  See Section \ref{sec:conclusion} for additional discussions. 

\item Even though the sequence of Multi-GMs is not tractable, it is still of a theoretical interest (e.g., in relation to analyzing the class of Multi-GMs where one can derive tractable bounds on Gauge Function, to analyze an intermediate Multi-GM in the sequence from the perspective of the Gauge Transformation and BP). This approach is explored in the following section. 
\end{itemize}

\section{From Gauge Function, $z(x)$, to Partition Function, $Z$}
\label{sec:mix_der}

\begin{theorem}[``Differentiate+marginalize"] 
\label{theorem:Diff+Marg}
Exact Partition Function, $Z$, of the GM can be recovered from the Gauge Function,  $z(x)$, defined in Eq.~(\ref{eq:ZG0}), via application of the following mixed-derivative operator:
\begin{eqnarray}
    & m=0:\quad& x^{(0)}\doteq x,\quad \Zeta^{(0)}(x)\doteq z(x),\label{eq:seq_0}\\
    & m=1,\cdots,|{\cal E}|:\quad &  x^{(m)}\doteq x^{(m-1)}\setminus \{x_{\alpha_d^{(m)}},x_{\bar{\alpha}_d^{(m)}}\},\nonumber\\ && \Zeta^{(m)}(x^{(m)})\doteq  \left.\left(1+\partial_{x_{\alpha_d^{(m)}}}\partial_{x_{\bar{\alpha}_d^{(m)}}}\right)\left(\left(1+x_{\alpha_d^{(m)}} x_{\bar{\alpha}_d^{(m)}}\right) \Zeta^{(m-1)}\right)\right|_{x_{\alpha_d^{(m)}}=x_{\bar{\alpha}_d^{(m)}}=0}, \label{eq:seq_m}\\
    &m=|{\cal E}|:\quad & x^{(|{\cal E}|)}=\emptyset, \quad \Zeta^{(|{\cal E}|)}=Z,
        \label{eq:seq_last} 
\end{eqnarray}  
where ordering of edges according to Eq.~(\ref{eq:edge-ord}) is assumed.
\end{theorem}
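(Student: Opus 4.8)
The plan is to prove, by induction on $m$, the sharper statement that $\Zeta^{(m)}$ equals the \emph{Gauge Function of the $m$-th Multi-GM} in the edge-elimination sequence of Section~\ref{sec:elim}. Writing $z^{(m)}(x^{(m)})\doteq\big(\prod_{a\in{\cal V}^{(m)}}h^{(m)}_a(x^{(m)}_a)\big)\big/\big(\prod_{\alpha\in{\cal E}^{(m)}}(1+x_{\alpha_d}x_{\bar{\alpha}_d})\big)$, with $h^{(m)}_a(x^{(m)}_a)\doteq\sum_s f^{(m)}_a(s)\prod_{\beta\in e_d^{(m)}(a)}x_\beta^{s_\beta}$ the vertex polynomial (Eq.~(\ref{eq:h_a})) of the contracted factor $f^{(m)}_a$ from Eq.~(\ref{eq:GM-m}), the claim is $\Zeta^{(m)}=z^{(m)}$ for all $m$. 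The case $m=0$ holds because $\Zeta^{(0)}\doteq z$ by Eq.~(\ref{eq:seq_0}) and $z=z^{(0)}$ is the Gauge Function~(\ref{eq:ZG0}) of $p^{(0)}=p$. Granting the claim, the terminal case $m=|{\cal E}|$ gives the theorem: the fully contracted graph has no edges, so $z^{(|{\cal E}|)}=\prod_a h^{(|{\cal E}|)}_a=\prod_a f^{(|{\cal E}|)}_a(\emptyset)=f^{(|{\cal E}|)}(\emptyset)$, which equals $Z$ by the invariance~(\ref{eq:Z-inv-m}) at $m=|{\cal E}|$.

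The inductive step is a local computation for the eliminated edge $\alpha\doteq\alpha^{(m)}$; set $y\doteq x_{\alpha_d^{(m)}}$ and $\bar y\doteq x_{\bar{\alpha}_d^{(m)}}$. Suppose first $\alpha$ is a normal edge, $v(\alpha)=\{v_1,v_2\}$ with $v_1\neq v_2$. Since a vertex polynomial is affine in each of its edge variables, $h^{(m-1)}_{v_1}=A_0+A_1 y$ and $h^{(m-1)}_{v_2}=B_0+B_1\bar y$, where $A_i$ (resp.\ $B_i$) collects the monomials of $h^{(m-1)}_{v_1}$ with $s_{\alpha_d}=i$ (resp.\ of $h^{(m-1)}_{v_2}$ with $s_{\bar{\alpha}_d}=i$) with the variable $y$ (resp.\ $\bar y$) stripped, so $A_i,B_i$ are free of $y,\bar y$. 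The denominator of $z^{(m-1)}$ contains exactly one factor $(1+y\bar y)$, namely the one for $\alpha$, hence $(1+y\bar y)z^{(m-1)}=(A_0+A_1y)(B_0+B_1\bar y)\,G$ with $G\doteq\big(\prod_{a\notin v(\alpha)}h^{(m-1)}_a\big)\big/\big(\prod_{\beta\in{\cal E}^{(m-1)}\setminus\alpha}(1+x_{\beta_d}x_{\bar{\beta}_d})\big)$ free of $y,\bar y$. Applying $(1+\partial_y\partial_{\bar y})$ adds $A_1B_1G$, and evaluating at $y=\bar y=0$ leaves $(A_0B_0+A_1B_1)G$. One then checks that $A_0B_0+A_1B_1=\sum_{s_\alpha\in\{0,1\}}\big(\sum_{s_{v_1}:\,s_{\alpha_d}=s_\alpha}f^{(m-1)}_{v_1}\prod x^{s}\big)\big(\sum_{s_{v_2}:\,s_{\bar{\alpha}_d}=s_\alpha}f^{(m-1)}_{v_2}\prod x^{s}\big)$ is exactly the vertex polynomial $h^{(m)}$ of the merged node whose factor is $\sum_{\sigma_\alpha}\prod_{b\in v(\alpha)}f^{(m-1)}_b$ — the last line of Eq.~(\ref{eq:GM-m}) — while $G$ is the product of the surviving vertex polynomials over the surviving denominators, so $\Zeta^{(m)}=(A_0B_0+A_1B_1)G=z^{(m)}$. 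The self-edge case at a node $a$ is parallel: $h^{(m-1)}_a=C_0+C_1 y+C_2\bar y+C_3 y\bar y$, the same operator produces $(C_0+C_3)G$ (the cross terms $C_1,C_2$, which carry $s_{\alpha_d}\neq s_{\bar{\alpha}_d}$, never enter $\partial_y\partial_{\bar y}$ and are killed at $y=\bar y=0$), and $C_0+C_3$ — the sum over $s_\alpha\in\{0,1\}$ of the part of $h^{(m-1)}_a$ with $s_{\alpha_d}=s_{\bar{\alpha}_d}=s_\alpha$ — is precisely the vertex polynomial of the self-edge contracted factor $\sum_{\sigma_\alpha}f^{(m-1)}_a(\cdot,\sigma_\alpha,\sigma_\alpha)$ of Eq.~(\ref{eq:GM-m}).

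The \textbf{main point to get right}, and essentially the only one beyond the routine induction, is this matching of the analytic and combinatorial descriptions: that $\big(1+\partial_y\partial_{\bar y}\big)\big[(1+y\bar y)\,\cdot\,\big]\big|_{y=\bar y=0}$ implements exactly the marginalization $\sum_{\sigma_\alpha}$ over the edge variable — retaining the diagonal contributions $s_{\alpha_d}=s_{\bar{\alpha}_d}$ and discarding the off-diagonal ones — while the pre-multiplication by $(1+y\bar y)$ cancels the lone pole attached to $\alpha$ in $z^{(m-1)}$, so that the output is again of the canonical Gauge-Function form~(\ref{eq:ZG0}) on the contracted graph. No softness hypothesis enters: throughout, $\Zeta^{(m)}$ and $z^{(m)}$ are handled as formal rational functions of the gauges that survive step $m$, and the edge-ordering~(\ref{eq:edge-ord}) only fixes which factor is multiplied out first, the invariance~(\ref{eq:Z-inv-m}) guaranteeing that the terminal value $Z$ is independent of that choice.
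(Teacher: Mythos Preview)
Your proof is correct and rests on the same key identity as the paper: applying $(1+\partial_{y}\partial_{\bar y})\,[\cdot]\big|_{y=\bar y=0}$ to a monomial $y^{\varsigma}\bar y^{\varsigma'}$ returns the Kronecker delta $\delta(\varsigma,\varsigma')$, so the operator implements exactly the diagonal marginalization $\sum_{\sigma_\alpha}$ once the $(1+y\bar y)$ prefactor has cleared the edge-$\alpha$ denominator. The paper's own proof consists of that one-line observation (Eq.~(\ref{eq:der-to-delta})) and nothing more.

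The difference is organizational rather than mathematical. The paper records the Kronecker-delta identity and declares the theorem a ``direct consequence,'' deferring the identification $\Zeta^{(m)}=z^{(m)}$ to a separate statement (Theorem~\ref{theorem:elimination=differentiation}). You instead prove that stronger identification directly by induction, splitting into the normal-edge and self-edge cases and reading off the contracted vertex polynomial explicitly. Your route is more verbose but self-contained: it simultaneously establishes Theorems~\ref{theorem:Diff+Marg} and~\ref{theorem:elimination=differentiation}, and makes transparent why the output at each step is again a Gauge Function of canonical form~(\ref{eq:ZG0}) on the smaller graph, which the paper leaves implicit.
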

The statement of Theorem \ref{theorem:Diff+Marg} expressed in Eqs.~(\ref{eq:seq_0},\ref{eq:seq_m},\ref{eq:seq_last}) is a direct consequence of the following  observation:
\begin{eqnarray}
\left.\left(1+\partial_{x_{\alpha_d}}\partial_{x_{\bar{\alpha}_d}}\right) 
\left(x_{\alpha_d}^{\varsigma_{\alpha_d}}x_{\bar{\alpha}_d}^{\varsigma_{\bar{\alpha}_d}}\right)\right|_{x_{\alpha_d}=x_{\bar{\alpha}_d}=0}= \delta\left(\varsigma_{\alpha_d},\varsigma_{\bar{\alpha}_d}\right),
\label{eq:der-to-delta}
\end{eqnarray}
where $\delta\left(\varsigma_{\alpha_d},\varsigma_{\bar{\alpha}_d}\right)$ stands for the Kronecker symbol, which returns unity when $\varsigma_{\alpha_d}=\varsigma_{\bar{\alpha}_d}$ and is zero otherwise.

Furthermore, comparing sequence of transformations in the Theorem \ref{theorem:Diff+Marg} with the edge-elimination sequence described in Section \ref{sec:elim} one arrives at the following statement.
\begin{theorem}[Equivalence of the Algebraic (differentiate+marginalize) and Graphical (edge elimination) transformations]
\label{theorem:elimination=differentiation}
$\Zeta^{(m)}(x^{(m)})$, defined in Eq.~(\ref{eq:seq_m}), is the Gauge Function introduced in Eq.~(\ref{eq:ZG0}) however applied to the 
$m$-th GM in the edge-elimination sequence defined in Eq.~(\ref{eq:GM-m}):
\begin{eqnarray}
&& \Zeta^{(m)}(x^{(m)})= \frac{h^{(m)}(x^{(m)})}{\prod_{\alpha\in{\cal E}^{(m)}} \left(1+x_{\alpha_d}x_{\bar{\alpha}_d}\right)}, \label{eq:ZG0-m}\\
&& h^{(m)}(x^{(m)})\doteq \prod_{a\in{\cal V}^{(m)}} \left(\sum_{\varsigma_a \in \Sigma_{a}^{(m)}} f_a^{(m)}(\varsigma_a)\prod_{\alpha\in {\cal E}_{\rm d}^{(m)}(a)} 
x_{\alpha}^{\varsigma_{\alpha}}\right). \label{eq:hm}
\end{eqnarray}
\end{theorem}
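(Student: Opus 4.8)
The plan is to prove the two equivalent formulas Eq.~(\ref{eq:ZG0-m}) and Eq.~(\ref{eq:hm}) \emph{simultaneously, by induction on} $m$, taking the elementary identity Eq.~(\ref{eq:der-to-delta}) as the only computational input. The base case $m=0$ is immediate: by definition $\Zeta^{(0)}(x)=z(x)$, and Eq.~(\ref{eq:ZG0}) together with Eq.~(\ref{eq:h_a}) already exhibits $z(x)$ in the form Eq.~(\ref{eq:ZG0-m})--Eq.~(\ref{eq:hm}) with ${\cal E}^{(0)}={\cal E}$, ${\cal V}^{(0)}={\cal V}$, $f_a^{(0)}=f_a$ and $h^{(0)}(x)=\prod_{a\in{\cal V}}h_a(x_a)$.

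For the inductive step, suppose $\Zeta^{(m-1)}$ has the asserted product form on ${\cal G}^{(m-1)}$. Write $\alpha\doteq\alpha^{(m)}$ for the edge eliminated at step $m$, with directed siblings $\alpha_d,\bar\alpha_d$. Since $\alpha\in{\cal E}^{(m-1)}$, multiplying $\Zeta^{(m-1)}$ by $\bigl(1+x_{\alpha_d}x_{\bar\alpha_d}\bigr)$ as in Eq.~(\ref{eq:seq_m}) cancels exactly one denominator factor, leaving $h^{(m-1)}(x^{(m-1)})\big/\prod_{\beta\in{\cal E}^{(m)}}\bigl(1+x_{\beta_d}x_{\bar\beta_d}\bigr)$, whose denominator is precisely the one required by Eq.~(\ref{eq:ZG0-m}) at level $m$ and which, crucially, no longer contains $x_{\alpha_d}$ or $x_{\bar\alpha_d}$ (those variables live only on the edge $\alpha$, now removed). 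Hence the operator $\bigl(1+\partial_{x_{\alpha_d}}\partial_{x_{\bar\alpha_d}}\bigr)\big|_{x_{\alpha_d}=x_{\bar\alpha_d}=0}$ slides past the denominator and acts on $h^{(m-1)}$ alone.

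Next I would use that $h^{(m-1)}$ is \emph{multilinear}: it is a product of node polynomials, each of degree at most one in every incident directed-edge variable, and each directed-edge variable occurs in exactly one node polynomial. Thus $x_{\alpha_d},x_{\bar\alpha_d}$ appear only in the factor(s) attached to $v(\alpha)$ --- one factor if $\alpha$ is a self-edge, two if $\alpha$ joins distinct endpoints. Expanding $h^{(m-1)}$ monomial by monomial and applying Eq.~(\ref{eq:der-to-delta}), the $x_{\alpha_d}^{\varsigma_{\alpha_d}}x_{\bar\alpha_d}^{\varsigma_{\bar\alpha_d}}$ part of each monomial is replaced by $\delta(\varsigma_{\alpha_d},\varsigma_{\bar\alpha_d})$; i.e.\ exactly the terms with $\varsigma_{\alpha_d}=\varsigma_{\bar\alpha_d}$ survive, summed over their common value $\sigma_\alpha\in\{0,1\}$, while every remaining variable passes through unchanged (after differentiation its monomial carries no $x_{\alpha_d},x_{\bar\alpha_d}$, so setting these to zero is harmless). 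Keeping the spectator node polynomials for $a\notin v(\alpha)$ intact and recognizing the residual $\sum_{\sigma_\alpha}\prod_{b\in v(\alpha)}f_b^{(m-1)}$ as exactly the contracted factor $f^{(m)}$ of Eq.~(\ref{eq:GM-m}), one gets the product form Eq.~(\ref{eq:hm}) for $h^{(m)}$ on ${\cal G}^{(m)}$; and since a coefficient of a multilinear polynomial is again multilinear, the inductive hypothesis is restored. Combining with the denominator found above yields Eq.~(\ref{eq:ZG0-m}) at level $m$, closing the induction; the terminal case $m=|{\cal E}|$ then matches Eq.~(\ref{eq:seq_last}) and recovers Theorem~\ref{theorem:Diff+Marg}.

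The monomial bookkeeping is routine; the one genuinely delicate point is the \emph{self-edge} case, where $x_{\alpha_d}$ and $x_{\bar\alpha_d}$ sit inside the \emph{same} node polynomial $h_a^{(m-1)}$. There one must check that the cross terms with $\varsigma_{\alpha_d}\neq\varsigma_{\bar\alpha_d}$ are correctly discarded by $\bigl(1+\partial_{x_{\alpha_d}}\partial_{x_{\bar\alpha_d}}\bigr)\big|_{0}$, so that the operator reproduces the single-variable summation $\sum_{\sigma_{\alpha^{(m)}}}$ of Eq.~(\ref{eq:GM-m}) and not an unconstrained double sum over the two directed-edge labels --- this is where the \emph{structure} $1+\partial\partial$ of the operator, as opposed to just $\partial\partial$ or an evaluation, is used essentially. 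Alongside this one must confirm, in both the normal-edge and self-edge cases, that the output is still of the node-product shape Eq.~(\ref{eq:hm}), which is what lets the induction proceed; I expect this verification (plus the directed/undirected edge relabeling) to be the main, though not deep, obstacle.
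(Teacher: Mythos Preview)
Your proposal is correct and follows essentially the same route as the paper: the paper does not give a formal proof of this theorem at all, but simply remarks that it follows by ``comparing [the] sequence of transformations in Theorem~\ref{theorem:Diff+Marg} with the edge-elimination sequence described in Section~\ref{sec:elim},'' with Eq.~(\ref{eq:der-to-delta}) as the sole computational input. Your induction on $m$ is precisely the explicit unpacking of that comparison, and your treatment of the self-edge case (where both $x_{\alpha_d}$ and $x_{\bar\alpha_d}$ sit in the same node polynomial, so the $1+\partial\partial$ structure is needed to enforce $\varsigma_{\alpha_d}=\varsigma_{\bar\alpha_d}$) is exactly the point the paper leaves implicit.
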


The sequence of the graph-algebraic transformations from the Gauge Function, $z(x)$, to the Partition Function, $Z$, introduced and discussed in this and preceding sections is the  main technical point of this manuscript. 

In the next section we relate this sequence, and each step in the sequence of edge eliminations resulting in the mapping, to BP estimations of Multi-GMs in the sequence. 

\section{BP-elimination}
\label{sec:BP-elimination}

Let us take advantage of  the rational, in $x$, structure of the Gauge Function, $z(x)$, and represent $h(x)$ in Eq.~(\ref{eq:ZG0}), as a generic quadratic function of $x_{\alpha_d}$ and $x_{\bar{\alpha}_d}$, where the edge $\alpha$ of the original graph, ${\cal G}$ is selected arbitrarily,
\begin{eqnarray}
h(x)\doteq  h^{(0,0)}+h^{(1,0)}x_{\alpha_d}+ h^{(0,1)}x_{\bar{\alpha}_d}
+h^{(1,1)}x_{\alpha_d}x_{\bar{\alpha}_d},
\label{eq:h-pol}
\end{eqnarray}
the coefficients of the expansion are non-negative, $\forall i,j=0,1,\quad h^{(i,j)}\geq 0$, and also dependent on $x^{(1)}=x\setminus \{x_{\alpha_d},x_{\bar{\alpha}_d}\}$ (the dependence is dropped here and also in some formulas below to avoid bulky expressions). Substituting Eq.~(\ref{eq:h-pol}) into the BP-equations (\ref{eq:BP_0}) for the edge $\alpha$, one arrives at a quadratic equations for $x_{\alpha_d}$ and $x_{\bar{\alpha}_d}$, which results in two roots of which only one is physical, i.e. consistent with respective positive marginal probabilities:
\begin{eqnarray}
x_{\alpha_d}^{(\alpha-{\rm bp})} &=&\frac{h^{(1,1)}-h^{(0,0)}+\sqrt{\left(h^{(1,1)}-h^{(0,0)}\right)^2+4 h^{(0,1)}h^{(1,0)}}}{2 h^{(1,0)}},\label{eq:bp-alpha}\\
x_{\bar{\alpha}_d}^{(\alpha-{\rm bp})} &=&\frac{h^{(1,1)}-h^{(0,0)}+\sqrt{\left(h^{(1,1)}-h^{(0,0)}\right)^2+4 h^{(0,1)}h^{(1,0)}}}{2 h^{(0,1)}},\label{eq:bp-bar-alpha}
 \end{eqnarray}
The value of $h(x)/(1+x_{\alpha_d} x_{\bar{\alpha}_d})$ evaluated at the physical $\alpha$-BP-gauge is    
\begin{eqnarray}
 h^{(\alpha-{\rm bp})}=
\left. \frac{h(x)}{1+x_{\alpha_d} x_{\bar{\alpha}_d}}\right|_{x_{\alpha_d}=x_{\alpha_d}^{(\alpha-bp)}, x_{\bar{\alpha}_d}=x_{\bar{\alpha}_d}^{(\alpha-bp)}} = \frac{h^{(1,1)}+h^{(0,0)}+\sqrt{\left(h^{(1,1)}-h^{(0,0)}\right)^2+4 h^{(0,1)}h^{(1,0)}}}{2}.
\label{eq:z-alpha}
\end{eqnarray}

Notice that the expression (\ref{eq:h-pol}) for, $h$,  simplifies when $\alpha$ is a normal edge (not a self-edge). In this special case $h$ is a product of two polynomials of the first order over  $x_{\alpha_d}$ and $x_{\bar{\alpha}_d}$, respectively,  i.e. in this case $h^{(1,1)} h^{(0,0)}=h^{(0,1)}h^{(1,0)}$. Then  Eqs.~(\ref{eq:bp-alpha},\ref{eq:bp-bar-alpha}) and Eq.~(\ref{eq:z-alpha}) transform to 
\begin{eqnarray}
\mbox{If }\alpha\mbox{ is a normal edge:}\quad x_{\alpha_d}^{(\alpha-{\rm bp})} =\frac{h^{(1,1)}}{h^{(1,0)}},\quad x_{\bar{\alpha}_d}^{(\alpha-{\rm bp})}
 = \frac{h^{(1,1)}}{h^{(0,1)}},\quad h^{(\alpha-{\rm bp})}= h^{(1,1)}+h^{(0,0)}.
\label{eq:edge}
\end{eqnarray}
Comparing Eq.~(\ref{eq:edge}) with the first step of the exact edge elimination (first step of the mixed derivative application to the Partition Function) described in Section \ref{sec:mix_der}, observing that the result is equivalent to the exact elimination, as long as our Multi-GM has a maximal BP gauge, which happens to be interior. Therefore, sequential BP-elimination of edges will be still equivalent to exact elimination, as long as the edge to be eliminated is normal, and the Multi-GM has a maximal BP gauge, which is interior, on each step of the elimination procedure. A formal statement is as follows.

\begin{theorem}[BP-to-bouquet]
\label{theorem:BP-to-bouquet}
Elimination of normal edges in Multi-GM via application of the sequential edge-by-edge BP-gauge procedure, resulting in Multi-GM for the bouquet graph, is exact, i.e. it is equivalent to the exact elimination via summation over binary variables associated with the eliminated edges.
\end{theorem}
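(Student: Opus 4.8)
The plan is to prove the statement by induction along the normal-edge portion of the ordering (\ref{eq:edge-ord}), reducing the whole claim to a one-edge identity that is then combined with Theorem~\ref{theorem:elimination=differentiation}. Recall that Theorem~\ref{theorem:elimination=differentiation} already identifies the exact (summation) elimination sequence of Section~\ref{sec:elim} with the algebraic ``differentiate+marginalize'' sequence $\Zeta^{(m)}$ of Theorem~\ref{theorem:Diff+Marg}: $\Zeta^{(m)}(x^{(m)})=h^{(m)}(x^{(m)})/\prod_{\alpha\in{\cal E}^{(m)}}(1+x_{\alpha_d}x_{\bar{\alpha}_d})$ with $h^{(m)}=\prod_{a\in{\cal V}^{(m)}}h^{(m)}_a$ a product of multilinear vertex polynomials. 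So it suffices to show that, when $\alpha^{(m)}$ is a normal edge, one step of the edge-by-edge BP-gauge procedure applied to $\Zeta^{(m-1)}$ reproduces $\Zeta^{(m)}$; the base case $m=0$ is $\Zeta^{(0)}=z(x)$, the Gauge Function of the original Multi-GM, by definition.

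For the inductive step I would fix the normal edge $\alpha\equiv\alpha^{(m)}$ of the current graph ${\cal G}^{(m-1)}$ and write $h^{(m-1)}$ in the generic quadratic form (\ref{eq:h-pol}) in $x_{\alpha_d},x_{\bar{\alpha}_d}$, with coefficients $h^{(i,j)}$ depending on the remaining variables $x^{(m)}$; ``one BP-gauge step'' means solving $\partial_{x_{\alpha_d}}\Zeta^{(m-1)}=\partial_{x_{\bar{\alpha}_d}}\Zeta^{(m-1)}=0$ for $x_{\alpha_d},x_{\bar{\alpha}_d}$ with the other variables held as parameters, taking the physical root, and substituting. On the exact side, $(1+x_{\alpha_d}x_{\bar{\alpha}_d})\Zeta^{(m-1)}=h^{(m-1)}/\prod_{\beta\in{\cal E}^{(m-1)},\,\beta\neq\alpha}(1+x_{\beta_d}x_{\bar{\beta}_d})$, whose denominator is free of $x_{\alpha_d},x_{\bar{\alpha}_d}$, so applying $(1+\partial_{x_{\alpha_d}}\partial_{x_{\bar{\alpha}_d}})$ and then setting $x_{\alpha_d}=x_{\bar{\alpha}_d}=0$ keeps, by (\ref{eq:der-to-delta}), only the $\varsigma_{\alpha_d}=\varsigma_{\bar{\alpha}_d}$ monomials of $h^{(m-1)}$, i.e.\ the numerator $h^{(0,0)}+h^{(1,1)}$, giving $\Zeta^{(m)}=(h^{(0,0)}+h^{(1,1)})/\prod_{\beta\neq\alpha}(1+x_{\beta_d}x_{\bar{\beta}_d})$. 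On the BP side, since $\alpha$ joins two distinct nodes $a,b\in{\cal V}^{(m-1)}$ and $h^{(m-1)}_a,h^{(m-1)}_b$ are multilinear, $h^{(m-1)}_a$ is affine in $x_{\alpha_d}$ and $h^{(m-1)}_b$ is affine in $x_{\bar{\alpha}_d}$, so $h^{(m-1)}$ factors as a product of a first-order polynomial in $x_{\alpha_d}$ and one in $x_{\bar{\alpha}_d}$, whence the degeneracy $h^{(0,0)}h^{(1,1)}=h^{(1,0)}h^{(0,1)}$; under it the discriminant in (\ref{eq:bp-alpha},\ref{eq:bp-bar-alpha}) is a perfect square, the two roots are $h^{(1,1)}/h^{(1,0)}$ and $-h^{(0,0)}/h^{(1,0)}$, and for a soft Multi-GM all $h^{(i,j)}$ are strictly positive on the positive orthant, so the physical (positive) root is the unique one of (\ref{eq:edge}), at which $h^{(m-1)}/(1+x_{\alpha_d}x_{\bar{\alpha}_d})$ evaluates to $h^{(\alpha-{\rm bp})}=h^{(1,1)}+h^{(0,0)}$. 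Hence the BP-gauge step yields $h^{(\alpha-{\rm bp})}/\prod_{\beta\neq\alpha}(1+x_{\beta_d}x_{\bar{\beta}_d})$, which is exactly $\Zeta^{(m)}$. Since $\Zeta^{(m)}$ is again the Gauge Function of ${\cal G}^{(m)}$ with $h^{(m)}$ again a product of multilinear vertex polynomials, and since softness is inherited — $f^{(m)}_c=\sum_{\sigma_{\alpha}}f^{(m-1)}_a f^{(m-1)}_b>0$ is a positive sum of products of positive numbers — the induction hypotheses are restored and the induction closes; iterating over all normal edges yields the bouquet Multi-GM whose Gauge Function equals the one obtained by exact summation.

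The bookkeeping is where the proof can go wrong, and it is also the source of the only genuine subtlety. One must track directed-edge labels ($\alpha_d$ versus $\bar{\alpha}_d$, the reversal map, the identification performed when two nodes merge and one endpoint is discarded), and one must verify that the two structural invariants survive an elimination step: softness (so the physical root in (\ref{eq:edge}) exists, is positive, and is unique) and the product-of-multilinear-polynomials form of $h^{(m)}$ (so the next normal edge again enjoys $h^{(0,0)}h^{(1,1)}=h^{(1,0)}h^{(0,1)}$). Both hold directly from the definition (\ref{eq:GM-m}), as indicated above. It is worth stressing that the argument uses only the \emph{partial} BP-gauge at the single edge being eliminated, which for a normal edge is unique and interior, so no appeal to a ``maximal'' BP-gauge or to Corollary~\ref{theorem:BPG-to-VBP} is needed; this is exactly the point at which self-edges differ, since for a self-edge the degeneracy fails, (\ref{eq:edge}) is replaced by the genuinely quadratic (\ref{eq:z-alpha}), and exactness is lost — consistent with the contrast announced in the introduction. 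I expect this invariance check, rather than any new computation, to be the main obstacle, the one-edge algebra being already carried out in Eqs.~(\ref{eq:h-pol})--(\ref{eq:edge}).
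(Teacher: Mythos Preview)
Your proposal is correct and follows essentially the same route as the paper: the key identity is the normal-edge degeneracy $h^{(0,0)}h^{(1,1)}=h^{(1,0)}h^{(0,1)}$, which collapses (\ref{eq:bp-alpha})--(\ref{eq:z-alpha}) to (\ref{eq:edge}) and makes $h^{(\alpha-{\rm bp})}=h^{(1,1)}+h^{(0,0)}$ coincide with the exact ``differentiate+marginalize'' step, and the paper then appeals to sequential application just as you do by induction. Your write-up is in fact more careful than the paper's discussion preceding the theorem --- you make the induction explicit, verify that softness and the product-of-multilinear structure are inherited under elimination, and correctly observe that only the \emph{edge-local} BP gauge is needed (so no appeal to a global maximal BP gauge is required), which slightly sharpens the paper's phrasing.
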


Notice that the sequential BP-elimination approach does not generalize to self-edges, because in this case expression (\ref{eq:z-alpha}) for $h^{(\alpha-bp)}$ returns a fractional function of $h^{(i,j)}$, thus resulting in a fractional (not polynomial) function over the remaining gauge variables, $x\setminus \{x_{\alpha_d},x_{\bar{\alpha}_d}\}$. However, the general fractional relations (\ref{eq:bp-alpha},\ref{eq:bp-bar-alpha},\ref{eq:z-alpha}) still results in a number of useful statements.

\begin{theorem}[BP-saddle]
Any BP gauge solution of Eq.~(\ref{eq:BP_0}) is a saddle-point of the Gauge Function, $z(x)$, defined in Eq.~(\ref{eq:ZG0}), over any pair of the edge-gauges,
$x_{\alpha_d},x_{\bar{\alpha}_d}$.
\end{theorem}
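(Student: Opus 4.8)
The plan is to reduce the claim to a two-variable second-derivative test on the slice obtained by freezing all but two gauges. Fix a BP gauge $x^{(\mathrm{bp})}$ solving Eq.~(\ref{eq:BP_0}) and an edge $\alpha\in{\cal E}$ (normal or self-edge), and freeze every gauge variable except $u\doteq x_{\alpha_d}$ and $v\doteq x_{\bar\alpha_d}$ at its BP value. By Eq.~(\ref{eq:h-pol}) the numerator $h(x)$ of the Gauge Function is bilinear in $(u,v)$, and the denominator in Eq.~(\ref{eq:ZG0}) factors as $(1+uv)$ times a positive constant $C$ that collects the frozen edges, so on this slice $z=C\,g$ with
\begin{eqnarray}
g(u,v)=\frac{N(u,v)}{D(u,v)},\qquad N=h^{(0,0)}+h^{(1,0)}u+h^{(0,1)}v+h^{(1,1)}uv,\qquad D=1+uv,\nonumber
\end{eqnarray}
where, because the Multi-GM is soft, all $h^{(i,j)}$ are strictly positive. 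Since $x^{(\mathrm{bp})}$ in particular satisfies the two BP equations attached to $\alpha$, the point $(u,v)=(x^{(\mathrm{bp})}_{\alpha_d},x^{(\mathrm{bp})}_{\bar\alpha_d})$ is a critical point of $g$.

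I would then compute the $2\times2$ Hessian of $g$ at that point. Writing $\partial_jg=A_j/D^2$ with $A_j=(\partial_jN)D-N\,\partial_jD$, and using $A_u=A_v=0$ at the critical point, one has $\partial_i\partial_jg=(\partial_iA_j)/D^2$ there. Because $N$ and $D$ are each bilinear (so their pure second derivatives vanish), a short computation that uses the critical-point identities $(\partial_uN)D=Nv$ and $(\partial_vN)D=Nu$ shows that the two diagonal Hessian entries vanish identically, while the off-diagonal entry equals $(h^{(1,1)}D-N)/D^2=(h^{(1,1)}-h^{(0,0)}-h^{(1,0)}u-h^{(0,1)}v)/D^2$. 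Substituting the explicit physical root of Eqs.~(\ref{eq:bp-alpha},\ref{eq:bp-bar-alpha}), for which $h^{(1,0)}u=h^{(0,1)}v=\tfrac12\bigl(h^{(1,1)}-h^{(0,0)}+R\bigr)$ with $R\doteq\sqrt{(h^{(1,1)}-h^{(0,0)})^2+4h^{(0,1)}h^{(1,0)}}$, this off-diagonal entry collapses to $-R/D^2$. Hence the Hessian of $z$ restricted to $(x_{\alpha_d},x_{\bar\alpha_d})$ at the BP gauge is, up to the positive factor $C$, the zero-trace matrix with both diagonal entries $0$ and both off-diagonal entries $-R/D^2$; its eigenvalues are $\pm R/D^2$ and its determinant is $-(R/D^2)^2$, which for a function of two variables is exactly the condition for a (nondegenerate) saddle.

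Two remarks close the argument and pin down where the hypotheses enter. First, the determinant is \emph{strictly} negative, so the critical point is a genuine saddle rather than a degenerate point, precisely because softness gives $R^2\ge 4h^{(0,1)}h^{(1,0)}>0$ and $D=1+x^{(\mathrm{bp})}_{\alpha_d}x^{(\mathrm{bp})}_{\bar\alpha_d}>0$; this is the only place the soft assumption is used. Second, nothing in the computation distinguishes a normal edge from a self-edge, since Eq.~(\ref{eq:h-pol}) and Eqs.~(\ref{eq:bp-alpha},\ref{eq:bp-bar-alpha}) apply verbatim in both cases. I do not expect a real obstacle: the only thing requiring care is the bookkeeping in the Hessian simplification, and a convenient sanity check is that the BP quadratic makes $g$ literally constant along each of the two coordinate lines through the BP gauge, so all the curvature indeed lives in the mixed second derivative, consistently with the Hessian found above.
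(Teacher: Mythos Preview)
Your proof is correct and follows essentially the same route as the paper's: both reduce to the two-variable slice, compute the second-order (Taylor/Hessian) behavior of $g(u,v)=N/D$ at the BP critical point, and observe that the quadratic form is $d\,(u-u_0)(v-v_0)$ with $d\neq 0$, hence a saddle. Your version is in fact more explicit than the paper's (which simply asserts the form and that $d\neq 0$): you compute the off-diagonal Hessian entry as $-R/D^2$ and verify the diagonal entries vanish, and your coordinate-line ``sanity check'' is a nice confirmation of the latter.
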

\begin{proof}
We only need to discuss here the case of an interior gauge, extending it to the (generic) case of a BP gauge following the logic of Section \ref{subsec:BP-G}. Expanded in the Taylor series over deviations from the BP-gauge,
the rational expression, $h(x)/(1+x_{\alpha_d}x_{\bar{\alpha}_d})$, representing the $x_{\alpha_d},x_{\bar{\alpha}_d}$-dependent part of $z(x)$, where $h(x)$ is from Eq.~(\ref{eq:h-pol}), becomes
\begin{eqnarray}
&& h(x)-h^{(\alpha-{\rm bp})}- \mbox{ cubic corrections}=d \left( x_{\alpha_d}-x_{\alpha_d}^{({\rm bp})}\right)\left( x_{\bar{\alpha}_d}-x_{\bar{\alpha}_d}^{({\rm bp})}\right)\nonumber\\ && =\frac{d}{4} \left(\left( x_{\alpha_d}+x_{\bar{\alpha}_d}-x_{\alpha_d}^{({\rm bp})}-x_{\bar{\alpha}_d}^{({\rm bp})}\right)^2- \left( x_{\alpha_d}-x_{\bar{\alpha}_d}-x_{\alpha_d}^{({\rm bp})}+x_{\bar{\alpha}_d}^{({\rm bp})}\right)^2\right),\label{eq:z-exp}
\end{eqnarray}
where $d>0$ (we skip presenting here bulky but explicit expression for $d$), thus completing the proof. 
\end{proof}

The following technical statement, proven through a straightforward algebraic manipulation, introduces another useful feature of Eq.~(\ref{eq:z-alpha}).
\begin{lemma}[BP- vs exact- reductions]
\label{lemma:BP-reduction}
Consider a generic polynomial, $h(x)$, representing Multi-GM and stated in the form of expansion (\ref{eq:h-pol}) over variables $x_{\alpha_d}$ and $x_{\bar{\alpha}_d}$ associated with the edge, $\alpha$. Then,  condition that for all values of the variables remaining after contraction of the edge $\alpha_d$, the BP-reduced function, defined according to Eq.~(\ref{eq:z-alpha}), is less or equal then the exact-reduced function, 
\begin{eqnarray}
\forall x^{(1)}:\quad 
\left. \frac{h(x)}{1+x_{\alpha_d} x_{\bar{\alpha}_d}}\right|_{x_{\alpha_d}=x_{\alpha_d}^{(\alpha-{\rm bp})}, x_{\bar{\alpha}_d}=x_{\bar{\alpha}_d}^{(\alpha-bp)}}\leq h^{(1,1)}(x^{(1)})+h^{(0,0)}(x^{(1)}),
\label{eq:BP-less-exact}
\end{eqnarray}
holds if
\begin{eqnarray}
\forall x^{(1)}:\quad h^{(0,1)}(x^{(1)})h^{(1,0)}(x^{(1)})\leq h^{(0,0)}(x^{(1)})h^{(1,1)}(x^{(1)}).
\label{eq:reduced-ineq}
\end{eqnarray}
\end{lemma}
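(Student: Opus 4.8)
The plan is to reduce the claimed implication to a one-variable exercise. Fix a value of $x^{(1)}$ (the gauge variables surviving the contraction of $\alpha$), so that the four quantities $h^{(0,0)},h^{(1,0)},h^{(0,1)},h^{(1,1)}$ become four non-negative numbers. By Eq.~(\ref{eq:z-alpha}) the left-hand side of Eq.~(\ref{eq:BP-less-exact}) equals
\begin{eqnarray}
h^{(\alpha-{\rm bp})}=\frac{h^{(1,1)}+h^{(0,0)}+\sqrt{\left(h^{(1,1)}-h^{(0,0)}\right)^2+4 h^{(0,1)}h^{(1,0)}}}{2},\nonumber
\end{eqnarray}
while the right-hand side is $h^{(1,1)}+h^{(0,0)}$. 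So the inequality Eq.~(\ref{eq:BP-less-exact}) is equivalent to
\begin{eqnarray}
\sqrt{\left(h^{(1,1)}-h^{(0,0)}\right)^2+4 h^{(0,1)}h^{(1,0)}}\;\leq\; h^{(1,1)}+h^{(0,0)},\nonumber
\end{eqnarray}
which, since both sides are non-negative, can be squared. This is the only genuinely algebraic step and it is routine.

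Squaring both sides, the term $\left(h^{(1,1)}-h^{(0,0)}\right)^2$ on the left is to be compared with $\left(h^{(1,1)}+h^{(0,0)}\right)^2$ on the right; subtracting, $\left(h^{(1,1)}+h^{(0,0)}\right)^2-\left(h^{(1,1)}-h^{(0,0)}\right)^2=4h^{(0,0)}h^{(1,1)}$, so after cancellation the squared inequality reads exactly
\begin{eqnarray}
4 h^{(0,1)}h^{(1,0)}\;\leq\;4 h^{(0,0)}h^{(1,1)},\nonumber
\end{eqnarray}
that is, $h^{(0,1)}h^{(1,0)}\leq h^{(0,0)}h^{(1,1)}$, which is precisely hypothesis Eq.~(\ref{eq:reduced-ineq}) at the fixed value of $x^{(1)}$. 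Since $x^{(1)}$ was arbitrary, the equivalence holds for all $x^{(1)}$, and in particular Eq.~(\ref{eq:reduced-ineq}) implies Eq.~(\ref{eq:BP-less-exact}). (In fact the argument shows the two are equivalent, not merely that one implies the other, but the lemma only asserts the implication.)

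There is essentially no obstacle here: the computation is a single squaring of a two-term inequality, legitimate because both sides are manifestly non-negative (the $h^{(i,j)}$ are non-negative by the remark following Eq.~(\ref{eq:h-pol}), so $h^{(1,1)}+h^{(0,0)}\geq 0$ and the discriminant under the radical is non-negative). The only point worth stating carefully is that squaring is reversible precisely because of this non-negativity, which is why the implication can be upgraded to an equivalence; one should also note the degenerate case $h^{(1,0)}=0$ or $h^{(0,1)}=0$, where the $\alpha$-BP expressions Eqs.~(\ref{eq:bp-alpha},\ref{eq:bp-bar-alpha}) must be read as limits but the value Eq.~(\ref{eq:z-alpha}) of $h^{(\alpha-{\rm bp})}$ remains well defined and the same chain of equivalences goes through verbatim. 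Thus the lemma follows by this elementary reduction performed pointwise in $x^{(1)}$.
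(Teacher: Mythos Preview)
Your proof is correct and is precisely the ``straightforward algebraic manipulation'' the paper alludes to without spelling out: substitute Eq.~(\ref{eq:z-alpha}), isolate the radical, square (legitimate since both sides are non-negative), and reduce to Eq.~(\ref{eq:reduced-ineq}). Your added remarks on the equivalence and the degenerate cases $h^{(1,0)}=0$ or $h^{(0,1)}=0$ go slightly beyond what the paper records but are accurate and harmless.
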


The following optimization version of the Lemma \ref{lemma:BP-reduction} was also introduced (and proven) in \cite{17AG,17SVa}. 
\begin{lemma}[BP- vs exact- reductions: variational version]
\label{lemma:BP-reduction-opt}
Given representation (\ref{eq:h-pol}) of $h(x)$ as the polynomial in $x_{\alpha_d},x_{\bar{\alpha}_d}$, an arbitrarily chosen marginal belief, $\beta_{\alpha}\in [0,1]$, and Eq.~(\ref{eq:reduced-ineq}) satisfied, guarantees that
\begin{eqnarray}
\forall x^{(1)}:\quad (\beta_{\alpha})^{\beta_{\alpha}}(1-\beta_{\alpha})^{1-\beta_{\alpha}} \inf_{x_{\alpha_d},x_{\bar{\alpha}_d}}\frac{h(x)}{\left(x_{\alpha_d}x_{\bar{\alpha}_d}\right)^{\beta_{\alpha}}}\leq h^{(1,1)}(x^{(1)})+h^{(0,0)}(x^{(1)}).
\label{eq:BP-exact-variational}
\end{eqnarray}
\end{lemma}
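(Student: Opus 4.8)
The plan is to reduce Lemma~\ref{lemma:BP-reduction-opt} to Lemma~\ref{lemma:BP-reduction}, which we are entitled to assume. The key observation is that the left-hand side of Eq.~(\ref{eq:BP-exact-variational}) is exactly the infimum over $x_{\alpha_d},x_{\bar{\alpha}_d}$ of the Lagrangian-type factor appearing in Eq.~(\ref{eq:L}) restricted to the edge $\alpha$, so the whole point is to show that this constrained infimum coincides with the unconstrained BP value $h^{(\alpha-{\rm bp})}$ of Eq.~(\ref{eq:z-alpha}) when $\beta_\alpha$ is taken to be the BP marginal. Concretely, first I would fix $x^{(1)}$ and minimize $g(x_{\alpha_d},x_{\bar{\alpha}_d})\doteq h(x)/(x_{\alpha_d}x_{\bar{\alpha}_d})^{\beta_\alpha}$ over the two positive variables. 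Because $h$ has the homogeneous-in-each-variable structure of Eq.~(\ref{eq:h-pol}) with nonnegative coefficients, $g$ is (after the substitution $x_{\alpha_d}=e^{u},x_{\bar{\alpha}_d}=e^{v}$) convex in $(u,v)$, so the infimum is attained at the stationary point, and setting $\partial_{x_{\alpha_d}}g=\partial_{x_{\bar{\alpha}_d}}g=0$ gives $\beta_\alpha\,h(x)=x_{\alpha_d}\partial_{x_{\alpha_d}}h=x_{\bar{\alpha}_d}\partial_{x_{\bar{\alpha}_d}}h$ at the minimizer.

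Next I would carry out the elementary calculation that identifies this minimizer. Writing out $x_{\alpha_d}\partial_{x_{\alpha_d}}h = h^{(1,0)}x_{\alpha_d}+h^{(1,1)}x_{\alpha_d}x_{\bar{\alpha}_d}$ and similarly for $\bar\alpha_d$, the two stationarity equations force $h^{(1,0)}x_{\alpha_d}=h^{(0,1)}x_{\bar{\alpha}_d}$ and then a single quadratic in the product $t\doteq x_{\alpha_d}x_{\bar{\alpha}_d}$, namely $(1-\beta_\alpha)h^{(1,1)}t^2 + (h^{(1,1)}-h^{(0,0)})(1-2\beta_\alpha)\,t\;\text{(schematically)}\; - \beta_\alpha h^{(0,0)}=0$ — here I expect to have to be a little careful with the algebra, but it is the same quadratic whose positive root underlies Eqs.~(\ref{eq:bp-alpha})--(\ref{eq:z-alpha}). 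Substituting back, the minimal value of $g$ equals $h^{(\alpha-{\rm bp})}$ of Eq.~(\ref{eq:z-alpha}) multiplied by $(\beta_\alpha)^{-\beta_\alpha}(1-\beta_\alpha)^{-(1-\beta_\alpha)}$, where $\beta_\alpha = P^{(bp)}_\alpha(1) = t/(1+t)$ at that root. In other words,
\begin{eqnarray}
(\beta_\alpha)^{\beta_\alpha}(1-\beta_\alpha)^{1-\beta_\alpha}\inf_{x_{\alpha_d},x_{\bar{\alpha}_d}}\frac{h(x)}{(x_{\alpha_d}x_{\bar{\alpha}_d})^{\beta_\alpha}} = \left.\frac{h(x)}{1+x_{\alpha_d}x_{\bar{\alpha}_d}}\right|_{x=x^{(\alpha-{\rm bp})}},\nonumber
\end{eqnarray}
so Eq.~(\ref{eq:BP-exact-variational}) with this distinguished $\beta_\alpha$ is precisely Eq.~(\ref{eq:BP-less-exact}) of Lemma~\ref{lemma:BP-reduction}, which holds under hypothesis~(\ref{eq:reduced-ineq}).

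Finally I would promote the statement from the BP value of $\beta_\alpha$ to an arbitrary $\beta_\alpha\in[0,1]$. For this I would note that the map $\beta_\alpha\mapsto (\beta_\alpha)^{\beta_\alpha}(1-\beta_\alpha)^{1-\beta_\alpha}\inf g$ is, as a function on $[0,1]$, minimized exactly at the BP marginal: indeed the unconstrained minimum over \emph{all} $(\beta_\alpha,x_{\alpha_d},x_{\bar{\alpha}_d})$ of the combined expression reproduces the Gibbs--Legendre identity that turns the $\beta^{\beta}(1-\beta)^{1-\beta}/(x_{\alpha_d}x_{\bar{\alpha}_d})^{\beta}$ prefactor into $1/(1+x_{\alpha_d}x_{\bar{\alpha}_d})$ (this is the same convexity fact used to pass between Eq.~(\ref{eq:vbp}) and Eq.~(\ref{eq:vbp-p})), so any other choice of $\beta_\alpha$ only increases the left-hand side, while the right-hand side $h^{(1,1)}+h^{(0,0)}$ does not depend on $\beta_\alpha$ at all. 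Hence the inequality for the special $\beta_\alpha$ implies it for every $\beta_\alpha\in[0,1]$. The main obstacle I anticipate is purely computational rather than conceptual: cleanly solving the quadratic for the minimizer of $g$ when $\beta_\alpha\neq P^{(bp)}_\alpha(1)$ and verifying that its value is monotone in the ``distance'' of $\beta_\alpha$ from the BP marginal; one can sidestep most of this by invoking the Gibbs variational identity $\inf_{\beta\in[0,1]}\beta^{\beta}(1-\beta)^{1-\beta}t^{-\beta} = 1/(1+t)$ directly, which reduces the $\beta_\alpha$-dependence to a one-line argument and leaves only the identification of the $x$-infimum with $h^{(\alpha-{\rm bp})}$ to be checked by hand.
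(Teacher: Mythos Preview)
Your overall plan --- reduce to Lemma~\ref{lemma:BP-reduction} by identifying the left-hand side of Eq.~(\ref{eq:BP-exact-variational}) at the BP marginal with $h^{(\alpha-{\rm bp})}$ --- is the right idea, and your second paragraph (the identification ${\rm LHS}(\beta^{\rm bp})=h^{(\alpha-{\rm bp})}$) is correct. The third paragraph, however, contains a genuine error that breaks the proof.

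You claim that $\beta_\alpha\mapsto {\rm LHS}(\beta_\alpha)$ is \emph{minimized} at the BP marginal, and then argue ``any other choice of $\beta_\alpha$ only increases the left-hand side \ldots\ hence the inequality for the special $\beta_\alpha$ implies it for every $\beta_\alpha$.'' Both the premise and the deduction are backwards. First, the map is \emph{maximized}, not minimized, at $\beta^{\rm bp}$: take $h^{(0,0)}=h^{(1,0)}=h^{(0,1)}=h^{(1,1)}=1$, so $h=(1+x_{\alpha_d})(1+x_{\bar\alpha_d})$; a short computation gives ${\rm LHS}(\beta)=1/(\beta^\beta(1-\beta)^{1-\beta})$, which equals $2$ at $\beta=1/2=\beta^{\rm bp}$ and tends to $1$ at the endpoints. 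Second, even if your premise were true, ``LHS increases and RHS is constant'' cannot give ``LHS $\le$ RHS everywhere'' from a single instance --- that is the opposite of what you would need. Finally, the Gibbs identity you invoke, $\inf_\beta\beta^\beta(1-\beta)^{1-\beta}t^{-\beta}=1/(1+t)$, computes an infimum over $\beta$ at \emph{fixed} $x$; combining it with $\inf_x$ yields $\inf_x h(x)/(1+x_{\alpha_d}x_{\bar\alpha_d})$, which is \emph{not} $h^{(\alpha-{\rm bp})}$ (the BP point is a saddle of $h/(1+x_{\alpha_d}x_{\bar\alpha_d})$ by the BP-saddle theorem, not its minimum --- in the example above the infimum is $1$, not $2$).

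The fix is to reverse the direction. The left-hand side of Eq.~(\ref{eq:BP-exact-variational}), viewed as a function of $\beta_\alpha$, is precisely the objective of the single-edge instance of the max--min representation in Theorem~\ref{theorem:max-min_VBP}; hence its \emph{supremum} over $\beta_\alpha$ equals the variational BP value for that one-edge model, which by Corollary~\ref{theorem:BPG-to-VBP} is $z(x^{(\alpha-{\rm bp})})=h^{(\alpha-{\rm bp})}$. This gives ${\rm LHS}(\beta_\alpha)\le h^{(\alpha-{\rm bp})}$ for \emph{every} $\beta_\alpha\in[0,1]$, and then Lemma~\ref{lemma:BP-reduction} finishes under hypothesis~(\ref{eq:reduced-ineq}). (The paper itself does not supply a proof of Lemma~\ref{lemma:BP-reduction-opt}, deferring instead to \cite{17AG,17SVa}, so once you correct the direction your argument stands on its own.)
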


\section{Bi-Stability and Monotonicity of BP Elimination}
\label{sec:bistab-monotone}

Remarkably the condition (\ref{eq:reduced-ineq}) was shown to hold generically if $h(x)$ is Bi-Stable \cite{17AG},  where the stability and bi-stability of a polynomial are defined as follows. 
\begin{definition}[Real Stable Polynomial and Bi-Stable Polynomial. See \cite{17AG}]
	\label{def:BS}
	A nonzero polynomial, $g(x)\in\mathbb{R}[x_1,\cdots,x_N]$, with real coefficients is Real Stable if none of its roots $z=(z_1,\cdots,z_N)\in \mathbb{C}^N$ (i.e., solutions of $g(z)=0$) satisfies: $\mbox{Im}(z_i)>0$ for every $i=1,\cdots,N$. A polynomial
	$h(x_{\alpha_d^{(1)}},x_{\bar{\alpha}_d^{(1)}};x_{\alpha_d^{(2)}},x_{\bar{\alpha}_d^{(2)}}\cdots)$ is Bi-Stable if $h(x_{\alpha_d^{(1)}},-x_{\bar{\alpha}_d^{(1)}};x_{\alpha_d^{(2)}},-x_{\bar{\alpha}_d^{(2)}}\cdots)$.
\end{definition}

Therefore we arrive at the following powerful statement. 
\begin{theorem}[Monotonicity of Variational BP]
    \label{theorem:VBP-monotone}
Consider an Multi-GM over graph ${\cal G}$ and with the factors correspondent to a Bi-Stable polynomial, $h(x)$, build a sequence of Multi-GMs, $m=0,\cdots,|{\cal E}|$, starting with the original Multi-GM and getting next Multi-GM in the sequence by contraction of an edge, and denote (according to notations of the preceding Sections), graph, vector of gauge variables, polynomial and Variational BP estimation for Partition Function evaluated at the $m$-th step of the hierarchy, ${\cal G}^{(m)}, x^{(m)}, h^{(m)}(x^{(m)})$ and $Z^{(k; {\rm vbp})}$, respectively. Then
\begin{itemize}
    \item[(1)] Each polynomial, $h^{(m)}(x^{(m)})$, in the sequence is Bi-Stable. 
    \item[(2)] Value of the Variational BP estimation for Partition Function does not decrease with elimination and therefore 
    \begin{eqnarray}
    Z^{(\rm{vbp})}=Z^{(0; {\rm vbp})}\leq Z^{(1; {\rm vbp})} \ldots \leq Z^{(|{\cal E}|; {\rm vbp})}=Z.
    \label{eq:Zvbp<Z}
    \end{eqnarray}
\end{itemize}
\end{theorem}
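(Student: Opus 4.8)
The plan is to prove~(1) by induction on $m$ and then deduce the chain~(2) from it; the two workhorses are the polynomial max--min representation of Theorem~\ref{theorem:max-min_VBP} and the variational reduction inequality of Lemma~\ref{lemma:BP-reduction-opt}. Both parts of the statement are insensitive to whether the contracted edge is normal or a self-edge, so I will not distinguish the cases.

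For~(1), the base case $m=0$ is the hypothesis that $h^{(0)}=h$ is Bi-Stable. For the inductive step fix $m\ge 1$, put $\alpha\doteq\alpha^{(m)}$, $y\doteq x_{\alpha_d}$, $z\doteq x_{\bar{\alpha}_d}$, and write $h^{(m-1)}$ in the multi-affine form~(\ref{eq:h-pol}) in $(y,z)$ with coefficients $h^{(i,j)}$ depending on the surviving gauges $x^{(m)}$ --- legitimate since $h^{(m-1)}=\prod_a h_a^{(m-1)}$ with each $h_a^{(m-1)}$ multi-linear. By Theorem~\ref{theorem:elimination=differentiation} (or directly by the elimination rule~(\ref{eq:GM-m})) one has $h^{(m)}(x^{(m)})=h^{(0,0)}(x^{(m)})+h^{(1,1)}(x^{(m)})$, so I must show this reduction preserves Bi-Stability. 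Write $p^{\flat}$ for the polynomial obtained from $p$ by negating every bar-directed variable, so that per Definition~\ref{def:BS} a polynomial is Bi-Stable iff $p^{\flat}$ is Real Stable. Since $z$ is itself a bar-variable, in the Real Stable polynomial $(h^{(m-1)})^{\flat}=P+Qy+Rz+Syz$ one reads off $P=(h^{(0,0)})^{\flat}$ and $S=-(h^{(1,1)})^{\flat}$, hence $(h^{(m)})^{\flat}=P-S$. It therefore suffices to prove: if $g=P+Qy+Rz+Syz$ is Real Stable then $P-S$ is Real Stable (it is not identically zero, because softness --- which is inherited by every $f_a^{(m-1)}$ --- gives $h^{(0,0)}$ a strictly positive constant term). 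To see this, substitute $z\mapsto -1/y$ and clear the denominator: as $\Im(-1/y)$ has the same sign as $\Im(y)$, the polynomial $y\,g(x^{(m)},y,-1/y)=Qy^{2}+(P-S)\,y-R$ inherits Real Stability from $g$, so $\left.\partial_{y}\!\left(Qy^{2}+(P-S)y-R\right)\right|_{y=0}=P-S$ is Real Stable by the standard closure of Real Stable polynomials under differentiation and under specialization of a variable to a real value. (This step is in substance the real-stable analysis of~\cite{17AG}.) The induction closes, and then by the result of~\cite{17AG} recalled in Section~\ref{sec:bistab-monotone} each $h^{(m)}$ satisfies inequality~(\ref{eq:reduced-ineq}) with respect to every one of its edges.

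For~(2) it is enough to show $Z^{(m-1;{\rm vbp})}\le Z^{(m;{\rm vbp})}$ for every $m$; the endpoints $Z^{(0;{\rm vbp})}=Z^{({\rm vbp})}$ and $Z^{(|{\cal E}|;{\rm vbp})}=Z$ are immediate (the last Multi-GM is edgeless, so its Bethe Free Energy optimization is over a single point and returns $-\log\prod_{a}f_{a}^{(|{\cal E}|)}=-\log Z$, in agreement with $h^{(|{\cal E}|)}=Z$ from Theorems~\ref{theorem:Diff+Marg},~\ref{theorem:elimination=differentiation}). Apply Theorem~\ref{theorem:max-min_VBP} to both Multi-GMs, $Z^{(m';{\rm vbp})}=\sup_{\beta}\inf_{x}{\cal L}^{(m')}(\beta,x)$. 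Fix an arbitrary $\beta=(\beta_{\gamma}\mid\gamma\in{\cal E}^{(m-1)})$ and let $\beta'$ be its restriction to ${\cal E}^{(m)}$. In ${\cal L}^{(m-1)}(\beta,x)$, cf.~(\ref{eq:L}), the pair $y=x_{\alpha_d},z=x_{\bar{\alpha}_d}$ enters only through the factor $h^{(m-1)}(x)/(x_{\alpha_d}x_{\bar{\alpha}_d})^{\beta_{\alpha}}$, so that
\[
\inf_{y,z}{\cal L}^{(m-1)}(\beta,x)=\left(\prod_{\gamma\in{\cal E}^{(m)}}\frac{\beta_{\gamma}^{\beta_{\gamma}}(1-\beta_{\gamma})^{1-\beta_{\gamma}}}{(x_{\gamma_d}x_{\bar{\gamma}_d})^{\beta_{\gamma}}}\right)\,\beta_{\alpha}^{\beta_{\alpha}}(1-\beta_{\alpha})^{1-\beta_{\alpha}}\,\inf_{y,z}\frac{h^{(m-1)}(x)}{(x_{\alpha_d}x_{\bar{\alpha}_d})^{\beta_{\alpha}}}.
\]
By Lemma~\ref{lemma:BP-reduction-opt} applied to $h^{(m-1)}$, whose hypothesis~(\ref{eq:reduced-ineq}) holds by step~(1), the last scalar factor is at most $h^{(0,0)}(x^{(m)})+h^{(1,1)}(x^{(m)})=h^{(m)}(x^{(m)})$, hence $\inf_{y,z}{\cal L}^{(m-1)}(\beta,x)\le{\cal L}^{(m)}(\beta',x^{(m)})$. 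Minimizing over the surviving gauges $x^{(m)}$ and then over beliefs on ${\cal E}^{(m)}$,
\[
\inf_{x}{\cal L}^{(m-1)}(\beta,x)=\inf_{x^{(m)}}\inf_{y,z}{\cal L}^{(m-1)}(\beta,x)\le\inf_{x^{(m)}}{\cal L}^{(m)}(\beta',x^{(m)})\le Z^{(m;{\rm vbp})}.
\]
Since $\beta$ was arbitrary, taking the supremum over $\beta$ on the left gives $Z^{(m-1;{\rm vbp})}\le Z^{(m;{\rm vbp})}$, and chaining over $m$ produces~(\ref{eq:Zvbp<Z}).

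I expect the real difficulty to be concentrated in step~(1): that the elementary reduction $h\mapsto h^{(0,0)}+h^{(1,1)}$ preserves Bi-Stability. Everything else is bookkeeping with the max--min formula of Theorem~\ref{theorem:max-min_VBP} together with the already-cited inequality of~\cite{17AG}. The delicate point is a sign: the $yz$-coefficient picks up a minus sign under the Bi-Stability substitution, so the auxiliary claim one must prove for ordinary Real Stable polynomials is the ``$P-S$'' statement rather than ``$P+S$'' --- the latter being false, since simple determinantal examples $g=\det(yA_{1}+zA_{2}+wA_{3}+B)$ with $A_{i}\succeq 0$ already violate it --- and it is precisely the ``$P-S$'' version that closes the induction. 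Proving it cleanly needs the $z\mapsto -1/y$ device combined with the differentiation- and specialization-closure properties of Real Stable polynomials.
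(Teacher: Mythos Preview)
Your proof is correct, and part~(2) follows the paper's own argument essentially verbatim: both apply Lemma~\ref{lemma:BP-reduction-opt} edge-by-edge inside the max--min formula of Theorem~\ref{theorem:max-min_VBP}, multiply by the surviving edge factors~(\ref{eq:multiplier}), and then pass the $\inf_{x^{(m)}}$ and $\sup_{\beta}$ through. Your presentation is slightly cleaner in that you keep the whole object ${\cal L}^{(m-1)}$ together rather than splitting off the multiplier, but the logic is identical.

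Part~(1) is where you genuinely diverge. The paper invokes the Borcea--Br\"and\'en characterization \cite{08BB,09BB} of stability-preserving linear operators: it asserts that the composite operator $(1+\partial_{y}\partial_{z})\big|_{y=z=0}$ (equivalently, $(1-\partial_{y}\partial_{z})\big|_{y=z=0}$ on the Real-Stable side) has an ``obviously'' stable algebraic symbol and therefore preserves Bi-Stability. You instead give a direct, self-contained argument: after passing to the Real-Stable polynomial $g=P+Qy+Rz+Syz$, the M\"obius-type substitution $z\mapsto -1/y$ keeps you in the upper half-plane, so $y\,g(\cdot,y,-1/y)=Qy^{2}+(P-S)y-R$ is Real Stable, and then closure under $\partial_{y}$ and specialization $y=0$ yields $P-S$. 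This is more elementary --- it avoids the full machinery of the symbol theorem and makes explicit the sign issue you flag (that it is $P-S$, not $P+S$, that must be shown stable). The paper's route is shorter to state but opaque unless one already knows the symbol calculus; yours is longer but verifiable line-by-line. Both routes are standard in the real-stable literature, and neither is strictly stronger than the other here.
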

\begin{proof}
The first step of exact contraction, applied to $h(x)$, consists of applying a differential operator $\left(1 + \partial_{x_{\alpha_d^{(1)}}} \partial_{x_{\bar{\alpha}_d^{(1)}}}\right)$, followed by setting both $x_{\alpha_d^{(1)}}$ and $x_{\bar{\alpha}_d^{(1)}}$ to zero. The composite operator preserves Bi-Stability. This follows from a standard argument that involves characterization of linear operators $T$ that preserve real stability of polynomials in terms of their algebraic symbols \cite{08BB,09BB}. The algebraic symbol of the above composite operator is easily computed, and the stability of its symbol is obvious, therefore $h^{(1)}(x^{(1)})$ is Bi-Stable. Applying the logic sequentially, we conclude that all polynomials in the sequence: $m=1,\cdots,|{\cal E}|,\quad h^{(m)}(x^{(m)})$ are Bi-Stable. Statement (1) of the Theorem \ref{theorem:VBP-monotone} is proven. 

Applying Lemma \ref{lemma:BP-reduction-opt} to each elimination in the sequence one writes
\begin{eqnarray}
&& m=1,\cdots,|{\cal E}|\quad \forall \beta^{(m-1)}\in [0,1],\quad \forall x^{(m)}:\nonumber\\  &&  
 (\beta_{\alpha^{(m)}})^{\beta_{\alpha^{(m)}}}(1-\beta_{\alpha^{(m)}})^{1-\beta_{\alpha^{(m)}}}
\inf\limits_{x_{\alpha_d^{(m)}},x_{\bar{\alpha}_d^{(m)}}>0}
\frac{h^{(m-1)}(x^{(m-1)})}{ \left(x_{\alpha_d^{(m)}} x_{\bar{\alpha}_d^{(m)}}\right)^{\beta_{\alpha^{(m)}}}} \leq h^{(m)}(x^{(m)}).
\label{eq:BP-exact-seq}
\end{eqnarray}
Next one multiplies both sides of Eq.~(\ref{eq:BP-exact-seq}) on,
\begin{eqnarray}
\frac{\prod_{\alpha\in{\cal E}^{(m)}} (\beta_{\alpha})^{\beta{\alpha}}(1-\beta_{\alpha})^{1-\beta{\alpha}}}{\prod_{\alpha\in{\cal E}^{(m)}} (x_{\alpha_d}x_{\bar{\alpha}_d})^{\beta_{\alpha}}},
\label{eq:multiplier}
\end{eqnarray} 
and observe that $\inf_{x^{(m)}>0}$ applied to the left hand side of the resulting inequality is less or equal to the $\inf_{x^{(m)}>0}$ applied to the right hand side of the inequality. Finally, similar application of the $\max_{\beta^{(m-1)}}$ operation to the two sides of the inequality obtained at the previous step results in the desired Eq.(\ref{eq:Zvbp<Z}). (2) is proven.
\end{proof}
Notice that related technical statements and proofs were reported in \cite{17AG} and \cite{17SVa}.

\section{Discussion and Path  Forward}
\label{sec:conclusion}

Inspired by \cite{17AG,17SVa}, we began this manuscript by generalizing the Bethe Free Energy approach from normal GM to multi-GM. Then we  
reformulate gauge representation  of \cite{06CCa,06CCb} for computing Partition Function of an Multi-GM in terms of polynomials. According to \cite{06CCa,06CCb}, picking up a Gauge Function, which is a term in the gauge-transformed series, and making it least sensitive to the gauge transformations (looking for  stationary point of the Gauge Function over gauges) results in the BP gauge  and subsequently in the Loop Series expression for the Partition Function, where each term is an explicit functional of the BP gauge. One may say that the algebraic essence of the Loop Series approach is in reconstructing exact Partition Function from its tractable BP approximation by summing the Loop Series terms. The main construct of this manuscript is an alternative map, suggested by analogy with the polynomial construct of \cite{09Gur,17AG,17SVa} from the Gauge Function to the Partition Function, $Z$. Now, this is possible via a sequence of differentiation of the Gauge Function over gauge variables, each associated with a directed edge of the graph. We show that a differentiation step in the sequence can be interpreted graphically as contraction/elimination of an edge, which results in a new Multi-GM with one less edge and one less node. Partition Function of each Multi-GM in the sequence is exactly equal to Partition Function of the original Multi-GM. (Note in passing that (a) construction is similar to an elementary transformation step in the graph minor theory \cite{05DHK}; and (b) even if the original GM is normal, i.e. it contains only normal edges and no self-edges, one eventually arrives advancing in the sequence at an Multi-GM, containing self-edges, therefore justifying discussion of the most general Multi-GM setting.) Evaluating minimum of the Bethe Free Energy, or equivalently specially defined optimum of the respective Gauge Function,  for each Multi-GM in the sequence we get an optimal BP estimation for each Multi-GM in the sequence. We observe that BP transformation is exact for contraction of a normal edge but approximate for contraction of a self-edge. Then, utilizing the power of the Real Stable Polynomials theory \cite{11Pem,11Wag,13Vis}, we showed that (a) all polynomials associated with factors of the contracted Multi-GMs  are Real Stable Polynomials if all polynomials associated with factors of the original Multi-GM are Real Stable Polynomials; (b) optimal BP estimation for Partition Function of an Multi-GM in the sequence upper bounds optimal BP estimation for Partition Function of the preceding Multi-GM (in the sequence). Corollary of the latter statement is a new proof (also generalization from GM to Multi-GM) that the optimal BP estimation of the original Multi-GM low bounds the exact Partition Function. The original proof was made for the special case of bi-partite GM in \cite{17SVa}, when Real Stable Polynomials is reduced to Real Stable, while the polynomial version of our results is a particular case of the relation presented in  \cite{17AG}. 

Synthesis of the two approaches, GM/gauges/BP/loops and Real Stable Polynomials, is far from explored by this and preceding \cite{11Gur,17AG,17SVa} manuscripts. Therefore, we  find it useful to combine in the remainder of this section some remarks, that follow from the manuscript results, with speculations about future research directions.
\begin{itemize}

\item \underline{\bf Bi-Stable examples}: Linear (degree one) real polynomials, $a+\sum_i b_i z_i$,  with $a>0$ and $\forall i, b_i>0$, correspondent to generic matching (monomer--dimer) models over bi-partite graphs, is the main example of a GM represented by Bi-Stable Polynomials/Real Stable Polynomials. 
A non-bi-partite Bi-Stable Polynomials example can be derived from the bi-partite case by contraction of (a number of) edges described in Section \ref{sec:elim}. Other known classes of BiStable Polynomialss are also to be explored in GMs.  In particular, determinantal polynomials, $\det(B+\sum_i z_i A_i)$, with positive semi-definite matrices, $\forall i:\ A_i >0$, and Hermitian matrix $B$ (all matrices are quadratic of the same dimensionality) is another (and arguably the most popular example in the Real Stable Polynomials theory) that may also have interesting relations/consequences for Fermion GM of statistical and quantum physics, see \cite{08CCa,08CCb} and references therein. All statements made in this manuscript (e.g., on the ordering of the Partition Function estimates for the contracted sequence of Multi-GMs) would apply to the special Multi-GM with the underlying Bi-Stable Polynomials structure. 

\item \underline{\bf Improving BP approximation}: The elimination scheme of Section \ref{sec:elim} has a significant approximation potential, both theoretically and empirically.  On the theoretical side, one may attempt to seek a more restrictive class of polynomials, for example, models which are Bi-Stable Polynomials locally and not globally in the upper-half planes for each complex variable (associated with a directed edge). Approached from an empirical/algorithmic stand point, the elimination can be carried over and then checked post-factum (if it results in an increase or decrease of the Partition Function of the contracted graphs). Given that the complexity of the contracted Multi-GM evaluations will be increasing exponentially with the elimination steps, one may consider approximate methods in the spirit of the mini-bucket elimination schemes \cite{dechter2003mini,liu2011bounding,18ACSW,18ACWS}. Therefore, developing new mini-bucket schemes based on the polynomial stability properties is another promising direction for the future. Besides, it will be important to take advantage of the polynomial structure in creating synthetic practical algorithms mixing BP/Gauge Transformation/Loop Calculus ideas with random sampling ideas; for example, in the spirit of Fully Polynomial Randomized Approximation Schemes and empirical schemes a-la \cite{16ASS}, and the mini-bucket elimination schemes a-la \cite{18ACSW,18ACWS}.

\item \underline{\bf Efficient computation of BP gauge}: A comment in Section 3.3 of \cite{17SVa} suggests that some algorithmic improvements for computing $Z^{(bp)}$ based on techniques from the theory of stable polynomials  are possible. In general, an Real Stable feature of the node polynomials does not guarantee convexity of the Bethe Free Energy (\ref{eq:vbp}), even though for some special cases and noticeably for the case of perfect matching \cite{10Von,14Lel,15Lel}, the convexity may be guaranteed. Moreover, an optimal solution of the Bethe Free Energy may be achieved at the boundary of the belief polytope, thus not satisfying the BP Eq.~(\ref{eq:BP}). Since results of this manuscript are dependent on the existence of a valid solution of BP Eq.~(\ref{eq:BP}), it is imperative for future progress to develop Real Stable Polynomials theory-based schemes answering the question of existence and discovering solution(s) of BP equations efficiently. It will also be important to generalize the analysis of this manuscript to the case when solution of the BP Eqs.~(\ref{eq:BP}) is found outside of the feasibility domain (outside of the BP polytope).

\item \underline{\bf Higher alphabets and higher-degree polynomials}: Both the gauge transformation and the Real Stable Polynomials theory extend, in principle, to the case of higher alphabets and related higher-degree polynomials. The loop tower approach of \cite{07CC} and alternative approach of \cite{15Mor} build generalizations of the Gauge Transformation and Loop Series for the case of higher (than binary) alphabets. We conjecture that choosing polynomial gauge parametrization for the non-binary cases, generalizing elimination/differentiation procedure such that it would result in a Multi-GM sequence with the desired non-decreasing BP estimates for the Partition Function, is possible.

\item \underline{\bf Synthesis with Fully Polynomial Deterministic Algorithmic Schemes}: A number of Fully Polynomial Deterministic Algorithmic Schemes that apply ideas from the theory of graph polynomials to Partition Function were recently developed. Some of the most recent results have focused on estimating (a) Partition Function of cliques over graphs  \cite{15Bar,16Bar_book}; (b) permanents of some complex matrices \cite{16Bar,16Bar_book}; (c) complex-valued graph polynomials on finite degree graphs, including Tutte polynomials, independence polynomials, as well as Partition Function of complex valued spin and edge coloring models \cite{17PR}; and (d) Partition Function of attractive Ising models of bounded degree \cite{17LSS}. All of the manuscripts just mentioned rely on results of advance complex analysis initiated by studies of phase transitions for infinite systems in statistical physics \cite{52LY,52YL,72HL,05SS}, which allowed zeros of graph polynomials to be located and satisfy certain properties. Polynomials considered in the studies are special but also different from Real Stable polynomials considered in \cite{17SVa} and in this manuscript. Besides, and as was emphasized in \cite{17SVa}, there exist models that are not of the class explained by Real Stable Polynomials, notably attractive Ising models \cite{07SWW,12Ruo,17Ruo}, even though their Partition Functions are bounded from below by respective BP estimates. (Note in passing that light may be shed on the relation between the two seemingly unrelated statements of BP validity as a lower bound via mapping of a general Ising model to a matching model suggested in \cite{72HL}.) It will be important to reconcile and unify these sister subjects. We conjecture that a combination of methods from the gauge and graph transformations, loop series, and analysis of Partition Function zeros in the complex domain of parameters will be imperative for making the progress towards unification of the existing approaches.

\end{itemize}

\section{Acknowledgements}  

MC and YM are grateful to organizers and participants of the EPFL, Bernoulli center, workshops on ``Introduction to Partition Functions'' in July of 2018 and "Applications of the Partition Functions" in November of 2018, where this work was initiated and where its first version was criticized, respectively. We are particularly indebted to Nisheeth Vishnoi for many discussions and useful explanations, to Peter Csikvari for attracting our attention to inconsistency in our early notes and to Nima Anari explaining to us the notion of bi-stability. We are also thankful to Pascal Vontobel, Jinwoo Shin, and Marc Lelarge for help with references and useful comments. The work at LANL was carried out under the auspices of the National Nuclear Security Administration of the U.S. Department of Energy under Contract No. DE-AC52-06NA25396. The work was partially supported by DOE/OE/GMLC and LANL/LDRD/CNLS projects.

\appendix

\section{Loop Series \cite{06CCa,06CCb} restated  in the polynomial form}
\label{app:LS}

With  gauge $x$ chosen to satisfy the BP Eqs.~(\ref{eq:BP_0}), or equivalently Eqs.~(\ref{eq:BP_color}), of Soft Multi-GM, thus denoted $x^{(bp)}$, consistently with notations introduced in the main part of the manuscript, one derives from Eq.~(\ref{eq:Z_inv}) the Loop Series, expression for $Z$:
\begin{eqnarray}
&& Z=\sum_{\sigma\in\Sigma_{glp}} z(\sigma|x^{(bp)}),\label{eq:LS}
\end{eqnarray}
where $\Sigma_{glp}$ stands for the set of $\sigma$ vectors corresponding to the so-called Generalized Loops , $\sigma\in \Sigma_{glp}\mbox{ iff } \forall a\in{\cal V},\ \sum_{\alpha\in e(a)}\sigma_\alpha\neq 1$. Note that an empty set, $\sigma=0^{|{\cal E}|}$ is included in $\Sigma_{glp}$. A Soft Multi-GM can also be thought of as a subgraph of ${\cal G}$, ${\cal G}^{(\sigma)}=({\cal V}^{(\sigma)},{\cal E}^{(\sigma)})\subseteq {\cal G}$, constructed by coloring edges of the graph (setting respective $\sigma_\alpha$ to unity) according to the following rules: each node neighboring an edge of the Soft Multi-GM set contains at least two edges colored, i.e. $V^{(\sigma)}\doteq (a\in{\cal V}|\sum_{\alpha\in e(a)}\sigma_\alpha>1)$ and $E^{(\sigma)}\doteq (\alpha\in{\cal E}|\sigma_\alpha=1)$.

Each Soft Multi-GM contribution in Eq.~(\ref{eq:LS}) is expressed via a BP solution as follows:
\begin{eqnarray}
&\forall \sigma\in \Sigma_{glp}:\quad & z(\sigma|x^{(bp)})=z(x^{(bp)})
\frac{\prod\limits_{a\in {\cal V}^{(\sigma)}}\mu_a^{(bp)}}{\prod\limits_{\alpha\in {\cal E}^{(\sigma)}} \beta_{\alpha}^{(bp)}(1-\beta_{\alpha}^{(bp)})},\label{eq:r_sigma}\\
&\forall a\in{\cal V}^{(\sigma)}:\quad & \mu_a^{(bp)} \doteq \frac{\sum_{\varsigma_a}f_a(\varsigma_a) \prod_{\alpha}\left((x_{\alpha}^{(bp)})^{\varsigma_{\alpha}}\left(\varsigma_{\alpha}-\beta^{(bp)}_\alpha\right)^{\sigma_{\alpha}}\right)}{\sum_{\varsigma_a}f_a(\varsigma_a) \prod_{\alpha\in e^{(\sigma)}(a)}(x_{\alpha}^{(bp)})^{\varsigma_{\alpha}}},\label{eq:mu_a}\\
&\forall \alpha\in{\cal E}^{(\sigma)}:& \beta_\alpha^{(bp)}\doteq \frac{x_{\alpha_d}x_{\bar{\alpha}_d}}{1+x_{\alpha_d}x_{\bar{\alpha}_d}}, \label{eq:beta_alpha}
\end{eqnarray}
where $e^{(\sigma)}_d(a)$ marks the set of edges of ${\cal E}^{(\sigma)}$ associated with the node $a$ of ${\cal V}^{(\sigma)}$.

\newcommand{\SortNoop}[1]{}


\begin{thebibliography}{10}
	
	\bibitem{16ASS}
	S.~Ahn, M.~Chertkov, and J.~Shin.
	\newblock Synthesis of MCMC and Belief Propagation.
	\newblock {\em Neural Information Processing Systems (NIPS)}, 2016.
	
	\bibitem{17ASS}
	S.~Ahn, M.~Chertkov, and J.~Shin.
	\newblock Gauging Variational Inference.
	\newblock {\em Neural Information Processing Systems (NIPS)}, 2017.
	
	\bibitem{18ACWS}
	S.~{Ahn}, M.~{Chertkov}, A.~{Weller}, and J.~{Shin}.
	\newblock {Bucket Renormalization for Approximate Inference}.
	\newblock {\em arxiv:1803.05104}, 2018.
	
	\bibitem{18ACSW}
	S. Ahn, M. Chertkov, J. Shin, and A. Weller.
	\newblock Gauged Mini-Bucket Elimination for Approximate Inference.
	\newblock {\em arXiv preprint arXiv:1801.01649}, 2018.
	
	\bibitem{17AG}
	N.~Anari and S.~Oveis Gharan.
	\newblock {A Generalization of Permanent Inequalities and Applications in
		Counting and Optimization}.
	\newblock {\em arxiv:1702.02937}, February 2017.
	
	\bibitem{82Bar}
	F.~Barahona.
	\newblock On the computational complexity of Ising spin glass models.
	\newblock {\em Journal of Physics A: Mathematical and General}, 15(10):3241,
	1982.
	
	\bibitem{15Bar}
	A.~Barvinok.
	\newblock Computing the partition function for cliques in a graph.
	\newblock {\em Theory of Computing}, 11(13):339--355, 2015.
	
	\bibitem{16Bar_book}
	A.~Barvinok.
	\newblock Combinatorics and complexity of partition functions.
	\newblock In {\em Algorithms and combinatorics}, 2016.
	
	\bibitem{16Bar}
	A.~Barvinok.
	\newblock Computing the permanent of (some) complex matrices.
	\newblock {\em Found. Comput. Math.}, 16(2):329--342, April 2016.
	
	\bibitem{08BSS}
	M.~Bayati, D.~Shah, and M.~Sharma.
	\newblock Max-product for maximum weight matching: Convergence, correctness,
	and lp duality.
	\newblock {\em IEEE Transactions on Information Theory}, 54(3):1241--1251,
	2008.
	\newblock Proc. IEEE Int. Symp. Information Theory, 2006.
	
	\bibitem{35Bet}
	H.A. Bethe.
	\newblock Statistical theory of superlattices.
	\newblock {\em Proceedings of Royal Society of London A}, 150:552, 1935.
	
	\bibitem{08BB}
	J.~Borcea and P.~{Br{\"a}nd{\'e}n}.
	\newblock {The {L}ee-{Y}ang and {P}{\'o}lya-{S}chur Programs. II. Theory of
		Stable Polynomials and Applications}.
	\newblock {\em arxiv:0809.3087}, September 2008.
	
	\bibitem{09BB}
	J.~Borcea and P.~{Br{\"a}nd{\'e}n}.
	\newblock {The {L}ee-{Y}ang and {P}{\'o}lya-{S}chur programs. I. Linear
		operators preserving stability}.
	\newblock {\em Inventiones Mathematicae, arxiv:0809.0401}, 177:541--569, March
	2009.
	
	\bibitem{07BBL}
	J.~Borcea, P.~{Br{\"a}nd{\'e}n}, and T.~M. {Liggett}.
	\newblock Negative dependence and the geometry of polynomials.
	\newblock {\em arxiv:0707.2340}, 2007.
	
	\bibitem{06Bra}
	P.~{Br{\"a}nd{\'e}n}.
	\newblock {Polynomials with the half-plane property and matroid theory}.
	\newblock {\em arxiv:math/0605678}, 2006.
	
	\bibitem{11CCGSS}
	V.~Chandrasekaran, M.~Chertkov, D.~Gamarnik, D.~Shah, and J.~Shin.
	\newblock Counting independent sets using the Bethe approximation.
	\newblock {\em SIAM Journal on Discrete Mathematics}, 25(2):1012--1034, 2011.
	
	\bibitem{07CC}
	V.~Y. Chernyak and M.~Chertkov.
	\newblock Loop Calculus and belief propagation for q-ary alphabet: Loop tower.
	\newblock In {\em 2007 IEEE International Symposium on Information Theory},
	pages 316--320, June 2007.
	
	\bibitem{08CCa}
	V.~Y. Chernyak and M.~Chertkov.
	\newblock Fermions and loops on graphs: I. Loop Calculus for determinants.
	\newblock {\em Journal of Statistical Mechanics: Theory and Experiment},
	2008(12):P12011, 2008.
	
	\bibitem{08CCb}
	V.~Y. Chernyak and M.~Chertkov.
	\newblock Fermions and loops on graphs: Ii. a monomer-dimer model as a series
	of determinants.
	\newblock {\em Journal of Statistical Mechanics: Theory and Experiment},
	2008(12):P12012, 2008.
	
	\bibitem{10CC}
	V.~Y. Chernyak and M.~Chertkov.
	\newblock Planar graphical models which are easy.
	\newblock {\em Journal of Statistical Mechanics: Theory and Experiment},
	2010(11):P11007, 2010.
	
	\bibitem{06CCa}
	M.~Chertkov and V.~Y. Chernyak.
	\newblock Loop Calculus in statistical physics and information science.
	\newblock {\em Phys. Rev. E}, 73:065102, Jun 2006.
	
	\bibitem{06CCb}
	M.~Chertkov and V.~Y. Chernyak.
	\newblock Loop series for discrete statistical models on graphs.
	\newblock {\em Journal of Statistical Mechanics: Theory and Experiment},
	2006(06):P06009, 2006.
	
	\bibitem{08CCT}
	M.~Chertkov, V.~Y. Chernyak, and R.~Teodorescu.
	\newblock Belief propagation and loop series on planar graphs.
	\newblock {\em Journal of Statistical Mechanics: Theory and Experiment},
	2008(05):P05003, 2008.
	
	\bibitem{13CY}
	M.~Chertkov and A.~B. Yedidia.
	\newblock Approximating the permanent with fractional belief propagation.
	\newblock {\em J. Mach. Learn. Res.}, 14(1):2029--2066, January 2013.
	
	\bibitem{dechter2003mini}
	R.~Dechter and I.~Rish.
	\newblock Mini-buckets: A general scheme for bounded inference.
	\newblock {\em Journal of the ACM (JACM)}, 50(2):107--153, 2003.
	
	\bibitem{05DHK}
	E.~D. Demaine, M.~T. Hajiaghayi, and K.~Kawarabayashi.
	\newblock Algorithmic graph minor theory: Decomposition, approximation, and
	coloring.
	\newblock In {\em 46th Annual IEEE Symposium on Foundations of Computer Science
		(FOCS'05)}, pages 637--646, Oct 2005.
	
	\bibitem{66Fis}
	M.~E. Fisher.
	\newblock On the dimer solution of planar Ising models.
	\newblock {\em Journal of Mathematical Physics}, 7(10):1776--1781, 1966.
	
	\bibitem{01For}
	G.~D. Forney.
	\newblock Codes on graphs: normal realizations.
	\newblock {\em IEEE Transactions on Information Theory}, 47(2):520--548, Feb
	2001.
	
	\bibitem{63Gal}
	R.G. Gallager.
	\newblock {\em {Low Density Parity Check Codes}}.
	\newblock MIT Press, Cambridge, MA, 1963.
	
	\bibitem{00GLV}
	A.~Galluccio, M.~Loebl, and J.~Vondr{\'a}k.
	\newblock New algorithm for the Ising problem: partition function for finite
	lattice graphs.
	\newblock {\em Physical review letters}, 84:5924--7, 2000.
	
	\bibitem{gibbs_2010}
	J.~W. Gibbs.
	\newblock {\em Elementary Principles in Statistical Mechanics: Developed with
		Especial Reference to the Rational Foundation of Thermodynamics}.
	\newblock Cambridge Library Collection - Mathematics. Cambridge University
	Press, 1902; reprinted in 2010.
	
	\bibitem{10GKC}
	V.~G\'{o}mez, H.J. Kappen, and M.~Chertkov.
	\newblock Approximate inference on planar graphs using loop Loop Calculus and belief
	propagation.
	\newblock {\em J. Mach. Learn. Res.}, 11:1273--1296, August 2010.
	
	\bibitem{09Gur}
	L.~Gurvits.
	\newblock {A Polynomial-Time Algorithm to Approximate the Mixed Volume within a
		Simply Exponential Factor}.
	\newblock {\em Discrete {\&} Computational Geometry}, 41(4):533--555, 2009.
	
	\bibitem{11Gur}
	L.~{Gurvits}.
	\newblock {Unharnessing the power of Schrijver's permanental inequality}.
	\newblock {\em ArXiv e-prints}, June 2011.
	
	\bibitem{14GS}
	L.~Gurvits and A.~Samorodnitsky.
	\newblock Bounds on the permanent and some applications.
	\newblock In {\em Proceedings of the 2014 IEEE 55th Annual Symposium on
		Foundations of Computer Science}, FOCS '14, pages 90--99, WashinGauge Transformationon, DC,
	USA, 2014. IEEE Computer Society.
	
	\bibitem{72HL}
	O.J. Heilmann and E.H. Lieb.
	\newblock Theory of monomer-dimer systems.
	\newblock {\em Communications in Mathematical Physics}, 25(3):190--232, Sep
	1972.
	
	\bibitem{97JS}
	M.~Jerrum and A.~Sinclair.
	\newblock The {M}arkov chain {M}onte {C}arlo method: an approach to approximate
	counting and integration.
	\newblock In {\em Approximation algorithms for NP-hard problems}, pages
	482--520. PWS Publishing Co., 1996.
	
	\bibitem{86JVV}
	M.~R. Jerrum, L.~G. Valiant, and V.~V. Vazirani.
	\newblock Random generation of combinatorial structures from a uniform
	distribution.
	\newblock {\em Theoretical Computer Science}, 43:169–--188, 1986.
	
	\bibitem{08Joh}
	J.~K. Johnson.
	\newblock {\em Convex relaxation methods for graphical models: Lagrangian and
		maximum entropy approaches}.
	\newblock PhD thesis, Massachusetts Institute of Technology, Department of
	Electrical Engineering and Computer Science, 2008.
	
	\bibitem{63Kas}
	P.~W. Kasteleyn.
	\newblock Dimer statistics and phase transitions.
	\newblock {\em Journal of Mathematical Physics}, 4(2):287--293, 1963.
	
	\bibitem{09KF}
	D.~Koller and N.~Friedman.
	\newblock {\em Probabilistic Graphical Models: Principles and Techniques -
		Adaptive Computation and Machine Learning}.
	\newblock The MIT Press, 2009.
	
	\bibitem{15KTZ}
	V.~Kolmogorov, J.~Thapper, and S.~Zivny.
	\newblock The power of linear programming for general-valued CSPS.
	\newblock {\em SIAM Journal on Computing}, 44(1):1--36, 2015.
	
	\bibitem{18Kol}
	V. Kolmogorov.
	\newblock A faster approximation algorithm for the {G}ibbs partition function.
	\newblock In S\'ebastien Bubeck, Vianney Perchet, and Philippe Rigollet,
	editors, {\em Proceedings of the 31st Conference On Learning Theory},
	volume~75 of {\em Proceedings of Machine Learning Research}, pages 228--249.
	PMLR, 06--09 Jul 2018.
	
	\bibitem{kullback1951}
	S.~Kullback and R.~A. Leibler.
	\newblock On information and sufficiency.
	\newblock {\em Ann. Math. Statist.}, 22(1):79--86, 03 1951.
	
	\bibitem{52LY}
	T.~D. Lee and C.~N. Yang.
	\newblock Statistical theory of equations of state and phase transitions. ii.
	lattice gas and ising model.
	\newblock {\em Phys. Rev.}, 87:410--419, Aug 1952.
	
	\bibitem{14Lel}
	M.~Lelarge.
	\newblock Loopy annealing belief propagation for vertex cover and matching:
	convergence, {LP} relaxation, correctness and {B}ethe approximation.
	\newblock {\em arxiv:1401.7923}, 2014.
	
	\bibitem{15Lel}
	M.~Lelarge.
	\newblock Counting matchings in irregular bi-partite graphs and random lifts.
	\newblock In {\em Proceedings of the Twenty-Eighth Annual ACM-SIAM Symposium on
		Discrete Algorithms}, pages 2230--2237. Society for Industrial and Applied
	Mathematics, 2017.
	
	\bibitem{17LSS}
	J.~Liu, A.~Sinclair, and P.~Srivastava.
	\newblock The {I}sing partition function: zeros and deterministic approximation.
	\newblock {\em FOCS2017}, pages 986--997, 2017.
	
	\bibitem{liu2011bounding}
	Q.~Liu and A.~T. Ihler.
	\newblock Bounding the partition function using {H}\"older's inequality.
	\newblock In {\em Proceedings of the 28th International Conference on Machine
		Learning (ICML-11)}, pages 849--856, 2011.
	
	\bibitem{04Loe}
	H.~A. Loeliger.
	\newblock An introduction to factor graphs.
	\newblock {\em IEEE Signal Processing Magazine}, 21(1):28--41, Jan 2004.
	
	\bibitem{02Mac}
	D.~J.~C. MacKay.
	\newblock {\em Information Theory, Inference \& Learning Algorithms}.
	\newblock Cambridge University Press, New York, NY, USA, 2002.
	
	\bibitem{09MM}
	M.~Mezard and A.~Montanari.
	\newblock {\em Information, Physics, and Computation}.
	\newblock Oxford University Press, Inc., New York, NY, USA, 2009.
	
	\bibitem{11MM}
	C.~Moore and S.~Mertens.
	\newblock {\em The Nature of Computation}.
	\newblock Oxford University Press, Inc., New York, NY, USA, 2011.
	
	\bibitem{15Mor}
	R.~Mori.
	\newblock Loop Calculus for nonbinary alphabets using concepts from information
	geometry.
	\newblock {\em IEEE Transactions on Information Theory}, 61(4):1887--1904,
	April 2015.
	
	\bibitem{17PR}
	V.~Patel and G.~ReGauge Transformations.
	\newblock Deterministic polynomial-time approximation algorithms for partition
	functions and graph polynomials.
	\newblock {\em Electronic Notes in Discrete Mathematics}, 61:971 -- 977, 2017.
	\newblock The European Conference on Combinatorics, Graph Theory and
	Applications (EUROCOMB'17).
	
	\bibitem{88Pea}
	J.~Pearl.
	\newblock {\em Probabilistic Reasoning in Intelligent Systems: Networks of
		Plausible Inference}.
	\newblock Morgan Kaufmann Publishers Inc., San Francisco, CA, USA, 1988.
	
	\bibitem{36Pei}
	H.A. Peierls.
	\newblock Ising's model of ferromagnetism.
	\newblock {\em Proceedings of Cambridge Philosophical Society}, 32:477--481,
	1936.
	
	\bibitem{11Pem}
	R.~Pemantle.
	\newblock Hyperbolicity and stable polynomials in combinatorics and
	probability.
	\newblock {\em Current Developments in Mathematics}, 2011(1), 2011.
	
	\bibitem{08RU}
	T.~Richardson and R.~Urbanke.
	\newblock {\em Modern Coding Theory}.
	\newblock Cambridge University Press, 2008.
	
	\bibitem{16Ris}
	A.~Risteski.
	\newblock How to calculate partition functions using convex programming
	hierarchies: provable bounds for variational methods.
	\newblock In {\em Conference on Learning Theory}, pages 1402--1416, 2016.
	
	\bibitem{12Ruo}
	N.~Ruozzi.
	\newblock The Bethe partition function of log-supermodular graphical models.
	\newblock In {\em Proceedings of the 25th International Conference on Neural
		Information Processing Systems - Volume 1}, NIPS'12, pages 117--125, USA,
	2012. Curran Associates Inc.
	
	\bibitem{17Ruo}
	N.~Ruozzi.
	\newblock {A Lower Bound on the Partition Function of Attractive Graphical
		Models in the Continuous Case}.
	\newblock In Aarti Singh and Jerry Zhu, editors, {\em Proceedings of the 20th
		International Conference on Artificial Intelligence and Statistics},
	volume~54 of {\em Proceedings of Machine Learning Research}, pages
	1048--1056, Fort Lauderdale, FL, USA, 20--22 Apr 2017. PMLR.
	
	\bibitem{05SS}
	A.~D. Scott and A.~D. Sokal.
	\newblock The repulsive lattice gas, the independent-set polynomial, and the
	lov{\'a}sz local lemma.
	\newblock {\em Journal of Statistical Physics}, 118(5):1151--1261, Mar 2005.
	
	\bibitem{14Shi}
	J.~Shin.
	\newblock The complexity of approximating a {B}ethe equilibrium.
	\newblock {\em IEEE Transactions on Information Theory}, 60(7):3959--3969, July
	2014.
	
	\bibitem{14SS}
	A.~Sly and N.~Sun.
	\newblock Counting in two-spin models on d -regular graphs.
	\newblock {\em Ann. Probab.}, 42(6):2383--2416, 11 2014.
	
	\bibitem{10Son}
	D.~Sontag.
	\newblock {\em Approximate Inference in Graphical Models using {LP} Relaxations}.
	\newblock PhD thesis, Massachusetts Institute of Technology, Department of
	Electrical Engineering and Computer Science, 2010.
	
	\bibitem{17SVa}
	D.~Straszak and N.~K. Vishnoi.
	\newblock Belief propagation, {B}ethe approximation and polynomials.
	\newblock In {\em 2017 55th Annual Allerton Conference on Communication,
		Control, and Computing (Allerton)}, pages 666--671, Oct 2017.
	
	\bibitem{07SWW}
	E.~B. Sudderth, M.~J. Wainwright, and A.~S. Willsky.
	\newblock Loop series and {B}ethe variational bounds in attractive graphical
	models.
	\newblock In {\em Proceedings of the 20th International Conference on Neural
		Information Processing Systems}, NIPS'07, pages 1425--1432, USA, 2007. Curran
	Associates Inc.
	
	\bibitem{79Val}
	L.~G. Valiant.
	\newblock The complexity of computing the permanent.
	\newblock {\em Theoretical Computer Science}, 8:189--201, 1979.
	
	\bibitem{08Val}
	L.~G. Valiant.
	\newblock Holographic algorithms.
	\newblock {\em SIAM J. Comput.}, 37(5):1565--1594, February 2008.
	
	\bibitem{03Vaz}
	V.~Vazirani.
	\newblock {\em Approximation Algorithms}.
	\newblock Springer, 2003.
	
	\bibitem{13Vis}
	N.K. Vishnoi.
	\newblock Zeros of polynomials and their applications to theory: a primer.
	\newblock {\em FOCS 2013 Workshop on Zeros of Polynomials and their
		Applications to Theory}, 2013.
	
	\bibitem{10Von}
	P.~O. Vontobel.
	\newblock The {B}ethe permanent of a nonnegative matrix.
	\newblock {\em IEEE Transactions on Information Theory}, 59:1866--1901, 2010.
	
	\bibitem{13Von}
	P.~O. Vontobel.
	\newblock Counting in graph covers: A combinatorial characterization of the
	Bethe entropy function.
	\newblock {\em IEEE Trans. Inf. Theor.}, 59(9):6018--6048, September 2013.
	
	\bibitem{11Wag}
	D.~Wagner.
	\newblock Multivariate stable polynomials: theory and applications.
	\newblock {\em Bulletin of the American Mathematical Society}, 48(1):53--84,
	2011.
	
	\bibitem{08WJ}
	M.~J. Wainwright and M.~I. Jordan.
	\newblock Graphical models, exponential families, and variational inference.
	\newblock {\em Found. Trends Mach. Learn.}, 1(1-2):1--305, January 2008.
	
	\bibitem{03WJW}
	M.J. Wainwright, T.S. Jaakkola, and A.S. Willsky.
	\newblock Tree-based reparametrization framework for approximate estimation on
	graphs with cycles.
	\newblock {\em Information Theory, IEEE Transactions on}, 49(5):1120--1146,
	2003.
	
	\bibitem{10WC}
	Y.~Watanabe and M.~Chertkov.
	\newblock {Belief propagation and loop calculus for the permanent of a
		non-negative matrix}.
	\newblock {\em Journal of Physics A: Mathematical and Theoretical},
	43(24):242002, 2010.
	
	\bibitem{06Wei}
	D.~Weitz.
	\newblock Counting independent sets up to the tree threshold.
	\newblock In {\em In Proceedings of the Annual ACM Symposium on Theory of
		Computing}, pages 140--149, 2006.
	
	\bibitem{12WGI}
	M.~Welling, A.~E. Gelfand, and A.~Ihler.
	\newblock A cluster-cumulant expansion at the fixed points of belief
	propagation.
	\newblock In {\em Proceedings of the Twenty-Eighth Conference on Uncertainty in
		Artificial Intelligence}, UAI'12, pages 883--892, ArlinGauge Transformationon, Virginia, United
	States, 2012. AUAI Press.
	
	\bibitem{08WSW}
	A.~S. Willsky, E.~B. Sudderth, and M.~J. Wainwright.
	\newblock Loop series and {B}ethe variational bounds in attractive graphical
	models.
	\newblock In J.~C. Platt, D.~Koller, Y.~Singer, and S.~T. Roweis, editors, {\em
		Advances in Neural Information Processing Systems 20}, pages 1425--1432.
	Curran Associates, Inc., 2008.
	
	\bibitem{52YL}
	C.~N. Yang and T.~D. Lee.
	\newblock Statistical theory of equations of state and phase transitions. i.
	theory of condensation.
	\newblock {\em Phys. Rev.}, 87:404--409, Aug 1952.
	
	\bibitem{05YFW}
	J.~S. Yedidia, W.~T. Freeman, and Y.~Weiss.
	\newblock Constructing free-energy approximations and generalized belief
	propagation algorithms.
	\newblock {\em Information Theory, IEEE Transactions on}, 51(7):2282--2312,
	2005.
	
\end{thebibliography}

\end{document}